\newcommand*{\compress}{\@minipagetrue}
\newif\ifappendix
\newcommand\blfootnote[1]{%
  \begingroup
  \renewcommand\thefootnote{}\footnote{#1}%
  \addtocounter{footnote}{-1}%
  \endgroup
}
\theoremstyle{plain}
\newtheorem{theorem}{Theorem}[section]
\newtheorem{lemma}[theorem]{Lemma}
\newtheorem{corollary}[theorem]{Corollary}
\theoremstyle{definition}
\newtheorem{definition}[theorem]{Definition}
\theoremstyle{plain}
\newtheorem{remark}[theorem]{Remark}
\title{Multi-turn Reinforcement Learning \\ from Preference Human Feedback}
\author{%
  Lior Shani $^{* \ 1}$ \\
  \texttt{liorshani@google.com}
  \And
  Aviv Rosenberg $^{* \ 1}$  \\
  \texttt{avivros@google.com}
  \And
  Asaf Cassel $^{* \ 1 \  3}$  \\
  \texttt{acassel@mail.tau.ac.il}
  \AND
  Oran Lang $^{1}$ \\
  \And
  Daniele Calandriello $^{2}$  \\
  \And
  Avital Zipori $^{1}$ \\
  \And
  Hila Noga $^{1}$ \\
  \And
  Orgad Keller $^{1}$ \\
  \AND
  Bilal Piot $^{2}$ \\
  \And
  Idan Szpektor $^{1}$ \\
  \And
  Avinatan Hassidim $^{1}$ \\
  \And
  Yossi Matias $^{1}$ \\
  \And
  R\'{e}mi Munos $^{2}$ \\
}
\begin{document}

\maketitle

\begin{abstract}
Reinforcement Learning from Human Feedback (RLHF) has become the standard approach for aligning Large Language Models (LLMs) with human preferences, allowing LLMs to demonstrate remarkable abilities in various tasks.
Existing methods work by emulating the preferences at the single decision (turn) level, limiting their capabilities in settings that require planning or multi-turn interactions to achieve a long-term goal. 
In this paper, we address this issue by developing novel methods for Reinforcement Learning (RL) from preference feedback between two full multi-turn conversations.
In the tabular setting, we present a novel mirror-descent-based policy optimization algorithm for the general multi-turn preference-based RL problem, and prove its convergence to Nash equilibrium. 
To evaluate performance, we create a new environment, Education Dialogue, where a teacher agent guides a student in learning a random topic, and show that a deep RL variant of our algorithm outperforms RLHF baselines. Finally, we show that in an environment with explicit rewards, our algorithm recovers the same performance as a reward-based RL baseline, despite relying solely on a weaker preference signal.
\end{abstract}

\doparttoc 
\faketableofcontents 


\section{Introduction}

\blfootnote{$^{\star}$ Equal contribution. $^{1}$ Google Research. $^{2}$ Google DeepMind. $^{3}$ Tel Aviv University.}A pinnacle of human intelligence is the ability to communicate with an environment, forming complex interactions to accomplish challenging goals. 
Dialogues are one example of such dynamic communication, where one party reacts to signals from the other parties and dynamically plans ahead to steer communication towards their purpose.
Recent years have seen scientific breakthroughs in developing Large Language Models (LLMs) that can communicate with humans in natural language \citep{ouyang2022training,anil2023palm,touvron2023llama,openai2024gpt4,geminiteam2024gemini}. 
In order to align these models to human needs, many efforts have been made to train them with a given human feedback. 
In some concordance with human learning, this is usually achieved by reinforcing behaviors that align with the feedback, using a technique now called Reinforcement Learning from Human Feedback (RLHF; \cite{christiano2017deep, ziegler2019fine, stiennon2020learning}).

RLHF methods build on the long studied field of Reinforcement Learning (RL), which focuses on learning optimal actions through reward feedback (a numerical signal) from the environment.
However, defining a suitable reward function is challenging, leading to the common practice of collecting human preferences between choices. 
In the absence of rewards, a mapping from preference to reward is typically assumed in the form of the Bradely-Terry (BT; \citet{bradley1952rank})  model \citep{stiennon2020learning,rafailov2023direct}, enabling the use of a wide-variety of well-researched RL techniques.
Alternatively, recent research \citep{munos2023nash, azar2023general,tang2024generalized} suggests a more direct use of preferences for learning, eliminating the need for this potentially limiting assumption.

Still, so far the main focus of both the RLHF and the direct preference learning literature was on single-turn scenarios, where given relevant context, the LLM generates one response and receives an immediate feedback that reflects its alignment quality.
Importantly, while single-turn RLHF already provides significant gains for valuable AI systems, it lacks the adaptive and long-term capabilities that make human communication such a powerful tool, and usually characterize RL methods.
This is especially apparent in temporally extended tasks, such as multi-turn dialogue \citep{irvine2023rewarding}, complex tool use \citep{wang2022scienceworld} and multi-step games \citep{hendrycks2022would}.

\paragraph{Contributions.}
In this work, we focus on improving the communication of AI agents with dynamic environments.  
To this end, we first extend the RLHF paradigm to the multi-turn setting, where the agent has a series of exchanges with an external (stochastic) environment (\Cref{sec:model}).
Importantly, we consider (human) feedback that compares entire multi-turn conversations as opposed to
single-turn scenarios, which compare individual actions on a per-turn basis. 
Conversation-level feedback allows to capture the long-term effect of individual actions, which may not be immediately apparent, and thus hard to define through turn-level feedback.
%
For example, a seller agent asking too high a price may seem immediately bad, but becomes potentially good as part of a complete strategy to increase sale price.
This difference is apparent in our preference model, making it better suited for multi-turn interactions.

%
%

Formalizing the multi-turn setting as a Contextual Markov Decision Process with end of interaction preference feedback, we devise several theoretically grounded algorithms (\Cref{sec:algorithm}). Our main algorithm, Multi-turn Preference Optimization (MTPO), is a new policy optimization algorithm for the general multi-turn preference-based setting. 
MTPO is based on the Mirror Descent (MD) method \citep{nemirovskij1983problem, beck2003mirror} together with self-play \citep{silver2017mastering}, and is proven to converge to a Nash equilibrium \citep{nash1950non}, i.e., a policy which is preferred over any other policy.
We prove similar results for MTPO$-\tau$, a slight variant of MTPO that, similarly to \cite{munos2023nash}, uses a geometric mixture policy which interpolates the agent's policy with a fixed reference policy (with mixing rate $\tau$).
These algorithms utilize a new form of preference-based Q-function that accounts for the long-term consequences of individual actions.
Finally, leveraging our theoretical framework, we modify this Q-function to create a multi-turn RLHF algorithm and prove its convergence to an optimal policy (w.r.t the learned reward function).


We complement our theoretical findings with a policy-gradient version of our multi-turn algorithms for deep learning architectures (\Cref{sec:deep-rl-implementation}). To validate our approach, we apply our algorithms to train a T5 encoder-decoder LLM \citep{raffel2020exploring}, aiming to enhance its multi-turn dialogue abilities (\Cref{sec:experimental-setup,sec:experiments}).
We test our approach in a scenario without explicit rewards, where conversation quality is evaluated solely through preferences. 
To that end, we create a new environment called Education Dialogue, where a teacher guides a student in learning a random topic, by prompting Gemini \citep{team2023gemini}. 
The conversation is judged based on preference feedback, using a constitution that defines effective learning \citep{bai2022constitutional, lee2023rlaif} (see \Cref{sec:experimental-setup}). 
In this environment, our multi-turn algorithms significantly outperform single-turn baselines, and our direct multi-turn preference approach outperforms multi-turn RLHF (\Cref{sec:experiments}).
As an additional contribution, we publicly release the data of Education Dialogue.\footnote{\url{https://github.com/google-research-datasets/Education-Dialogue-Dataset}\label{footnote:data}}
Finally, we demonstrate that even in a reward-based environment, our preference-based algorithm achieves comparable performance to learning directly from rewards, as in standard RL, despite using a weaker signal. For this experiment, we utilize the LMRL-Gym \citep{abdulhai2023lmrl} Car Dealer environment, simulating a conversation where the agent (car dealer) aims to maximize the sale price.

\paragraph{Related work.}
%
Most related to our work is the RLHF literature, which aims to improve an LLM policy using preference data collected from humans. 
Earlier methods model a proxy reward \citep{ouyang2022training} or preference function \citep{zhao2022calibrating}, and apply traditional RL techniques. 
More recent methods directly optimize the policy \citep{rafailov2023direct, azar2023general,tang2024generalized,song2024preference,ethayarajh2024kto}.
Another line of work, which forms the basis for MTPO, extends RLHF to games, aiming to compute a Nash equilibrium instead of an optimal policy w.r.t a fixed reward/preference. 
This includes Nash-MD \citep{munos2023nash}, self-play and mixtures of the two like IPO-MD \citep{calandriello2024human}.
Nonetheless, these methods only consider single-turn problems, whereas MTPO provides the first guarantees for multi-turn settings.
%
Note the difference from concurrent attempts to extend direct preference optimization to the token level \citep{rafailov2024r}, while a true multi-turn approach must deal with the additional uncontrollable tokens generated by the human in-between agent turns.

More broadly, preference-based RL (see survey by \cite{JMLR:v18:16-634}) studies feedback in terms of preferences over two alternatives rather than absolute rewards.
Feedback can be provided in various ways, e.g., at the level of states (turn-level), or entire trajectories \citep{chen2022human,saha2023dueling,wu2023making,wang2023rlhf,zhan2023provable,zhan2023query}.
The focus of this work is last state feedback which is an instance of trajectory feedback.
Another closely related model is RL with aggregate feedback where only the sum of rewards in a trajectory is revealed to the agent \citep{efroni2021reinforcement,cohen2021online,chatterji2021theory,cassel2024near}.

Lastly, there is vast literature on using RL to improve natural language generation for dialogue systems. The pioneering work of \cite{li2016deep} focuses on designing rewards to capture important dialogue attributes such as semantic coherence and ease of answering. Other works tackle task-oriented dialogue, using RL to enhance the agent's ability to solve the underlying dialogue task \citep{wei2018airdialogue}. Instead, this works takes a more general and fundamental approach, focusing on the algorithmic process of aligning an agent which repeatedly interacts with an environment.
While dialogue systems are a promising application of our approach, as suggested by the experimental results of this paper (\Cref{sec:experimental-setup,sec:experiments}), our algorithmic approach is much broader, including processes such as tool-use, reasoning, and many other applications that require aligning a complex multi-turn agent with human preferences.

\section{Preliminaries}

The interaction between an AI agent and its environment is captured in the fundamental contextual RL model, where the context is the initial prompt, the states are conversation summaries, and the actions are responses.

\textbf{Contextual Markov decision process.}
A finite-horizon contextual Markov decision process (CMDP) $\M$ is defined by a tuple $(\C, \X, \Y, H, x_1, \pcontext, \p)$ where $\C$ is the context space, $\X$ is the state space, $\Y$ is the action space, $H$ is the horizon, $x_1 \in \X$ is the initial state, $\pcontext \in \simplex_{\C}$ is the context distribution and $\p: \C \times \X \times \Y \to \Delta_{\X}$ is the transition function such that $\p(x' \mid c, x,y)$ is the probability to transition to state $x'$ after taking action $y$ in state $x$, given context $c$.

An interaction between the agent and the CMDP environment proceeds in $H$ steps. First, a context $c\in\C$ is sampled from $\pcontext$, and then the agent begins in the initial state $x_1$.
In step $h \in [H]$, the agent observes the current state $x_h \in \X$, picks an action $y_h \in \Y$ and transitions to the next state $x_{h+1}$ sampled from the transition function $\p(\cdot \mid c, x_h,y_h)$.
At the end of the interaction, the agent arrives in a final state $x_{H+1}$.
For simplicity, we assume that the state space can be decomposed into $H+1$ disjoint subsets $\X = \bigcupdot_{h=1}^{H+1} X_h$ such that, in step $h$ of the interaction, the agent is in some state $x_h \in \X_h$.
A policy $\pi: \C \times \X \to \Delta_{\Y}$ is a mapping from a context and state to a distribution over actions.
Together with transition $\p$, $\pi$ induces a distribution over trajectories denoted by $\Pr_{\pi,\p}[\cdot]$ (and $\EE[\pi,\p][\cdot]$ for the expectation), in which the trajectory is generated by sampling the actions according to the policy and next states according to the environment.

\subsection{Single-turn Reinforcement Learning from Human Feedback}

Unlike standard RL where the agent observes reward feedback for its actions, the influential work of \cite{christiano2017deep} suggests to leverage preference data. 
In the single-turn setting, the agent generates a single sequence $y \in \Y$ given a context $c \in \C$.
This is modeled as a Contextual Multi-Armed Bandit (CMAB), which is a CMDP instance with horizon $H=1$.
The feedback is given in the form of preference between two generated sequences.
Formally, there exists a preference model $\prefFunc: \C \times \Y \times \Y \to [0,1]$ such that $\pref{y}{y' \mid c}$ gives the probability that $y$ is preferred over $y'$ given context $c$.
Preferences naturally extend to policies via expectation $\pref{\pi}{\pi' \mid c} = \EE[y \sim \pi(\cdot \mid c) , y' \sim \pi'(\cdot \mid c)][\pref{y}{y' \mid c}]$.

\paragraph{Reinforcement Learning from Human Feedback.} 
In RLHF, it is assumed that there is a hidden reward function $r: \C \times \Y \to \R$ that defines the preferences through the Bradely-Terry (BT) model, i.e., $\pref{y}{y'\mid c} = \sigma(r(c,y) - r(c,y'))$, where $\sigma$ is the sigmoid. 
To reconstruct the reward, the RL algorithm is used to optimize the ELO score of the chosen action $y$ using a cross-entropy loss. This technique was adapted to RL fine-tuning LLMs \citep{ziegler2019fine}, and has become the standard approach for aligning LLMs to human feedback \citep{stiennon2020learning, rafailov2023direct}.

\paragraph{Learning from direct preferences.} 
Recently \citet{munos2023nash,azar2023general} suggested to drop the BT assumption, and learn a direct preference model instead of reward. 
\cite{munos2023nash} propose the Nash-MD algorithm which converges to the Nash equilibrium of a (regularized) preference model, i.e., a policy which is preferred over any other policy. 
In iteration $t+1$, Nash-MD updates its policy $\piT[t+1]$ using a mirror descent (MD) step projected to a geometric mixture policy.
The mixture policy $\piRefT(\cdot \mid c) \propto \piT(\cdot \mid c)^{1- \alpha \etaT} \mu(\cdot \mid c)^{\alpha \etaT}$ interpolates between the policy $\piT$ and a reference policy $\piRef$, given a regularization coefficient $\alpha > 0$.
Formally, for learning rate $\etaT > 0$,
\begin{align*}
    \piT[t+1](\cdot \mid c) 
    =
    \arg \max_{\pi(\cdot \mid c) \in \simplex_{\Y}} \etaT \pref{\pi}{\piRefT \mid c} - \KLshort{\pi}{\piRefT}{c}
    \qquad 
    \forall c \in \C
    ,
\end{align*}
where $\KLshort{\pi}{\pi'}{c} \triangleq \KL{\pi(\cdot \mid c)}{\pi'(\cdot \mid c)} = \sum_y \pi(y \mid c) \log \frac{\pi(y \mid c)}{\pi'(y \mid c)}$.

\section{Multi-turn Preference-Based RL}\label{sec:model}

In the multi-turn setting, the agent repeatedly interacts with an external environment, an interaction we formulate using the CMDP model. 
%
Similarly to the single-turn case, we consider preference-based RL, where the feedback is given as preference instead of reward. However, in our case, we assume preferences are between final CMDP states with a shared initial context.
Formally, there exists a preference model $\prefFunc: \C \times \X_{H+1} \times \X_{H+1} \to [0,1]$ such that $\pref{x_{H+1}}{x'_{H+1} \mid c}$ gives the probability that $x_{H+1}$ is preferred over $x'_{H+1}$ given context $c$. 
That is, in order to receive feedback, a learning algorithm performs two interactions with the environment and observes a Bernoulli sample for which one is preferred. We follow the natural assumption of \citet{munos2023nash} that the preference model is symmetric, i.e., $\pref{x'_{H+1}}{x_{H+1} \mid c} = 1 - \pref{x_{H+1}}{x'_{H+1} \mid c}$. We define the preference between a final state and a policy by $\pref{x}{\pi \mid c} = \EE[\pi,\p] \brk[s]{\pref{x}{x_{H+1} \mid c}}$. Similarly, $\pref{\pi}{\pi' \mid c} = \EE[\pi,\p] \brk[s]{\pref{x_{H+1}}{\pi' \mid c}}$. For brevity, since contexts are independent, we omit the context throughout the rest of the paper.
%
Similarly to \cite{munos2023nash},
our objective is to find a policy $\piOpt$ which is preferred over any other alternative policy, i.e., 
\begin{align*}
\piOpt \in \arg \max_{\pi} \min_{\pi'} \pref{\pi}{\pi'},
\end{align*} which is a Nash equilibrium in the above two-player game defined by the preference model, following the minimax theorem \citep{v1928theorie} (see~\Cref{lemma:unique-nash-equilibrium}). Notably, due to the anti-symmetric nature of the preference objective, the Nash equilibrium will have both agents following the same policy, and thus can be expressed as a single policy.

\paragraph{Regularized preference model.}
In the rest of the paper, we will consider a regularized version of the preference model.
This is motivated by practical RLHF algorithms \citep{stiennon2020learning}, and generalizes the single-turn model in \cite{munos2023nash}.
Let $\piRef$ be a reference policy, and define the $\alpha$-regularized preference model as follows:
\begin{align*}
    \prefReg{\pi}{\pi'}
    =
    \pref{\pi}{\pi'} - \alpha \KLp{\pi}{\piRef} + \alpha \KLp{\pi'}{\piRef}
    ,
\end{align*}
where $\KLp{\cdot}{\cdot}$ is the KL-divergence between the distributions that the policies induce over trajectories in the CMDP.
We prove the following two results for the regularized preference model. First, its KL term has a value difference-like decomposition into the KL-divergences at individual states. Second, it has a unique Nash equilibrium (proofs in \Cref{sec:appendix-proofs-model}).

\begin{lemma}
\label{lemma:KL-decomposition}
    Let $\pi,\pi'$ be two policies, then: $\KLp{\pi}{\pi'} = \EE[\pi,p] \brk[s]*{\sum_{h=1}^H \KLshort{\pi}{\pi'}{x_h}}$.
\end{lemma}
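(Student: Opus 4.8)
The plan is to use the chain rule for KL-divergence applied to the distributions over trajectories induced by $\pi$ and $\pi'$. First I would write out the probability of a trajectory $(x_1, y_1, x_2, y_2, \ldots, x_H, y_H, x_{H+1})$ under each policy explicitly. Since both policies interact with the same transition kernel $\p$ and share the initial state $x_1$, the trajectory probability factorizes as $\Pr_{\pi,\p}[\text{traj}] = \prod_{h=1}^{H} \pi(y_h \mid x_h)\, \p(x_{h+1} \mid x_h, y_h)$, and similarly for $\pi'$ with $\pi'(y_h \mid x_h)$ in place of $\pi(y_h \mid x_h)$. The key observation is that when forming the log-ratio $\log \frac{\Pr_{\pi,\p}[\text{traj}]}{\Pr_{\pi',\p}[\text{traj}]}$, all the transition factors $\p(x_{h+1}\mid x_h,y_h)$ cancel exactly, leaving $\sum_{h=1}^{H} \log \frac{\pi(y_h \mid x_h)}{\pi'(y_h \mid x_h)}$.

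Next I would take the expectation of this log-ratio under $\Pr_{\pi,\p}$ (which is the definition of $\KLp{\pi}{\pi'}$) and push the expectation through the finite sum over $h$. For each term, I would condition on the state $x_h$ reached at step $h$: by the tower rule, $\EE[\pi,\p]\brk[s]{\log \frac{\pi(y_h \mid x_h)}{\pi'(y_h \mid x_h)}} = \EE[\pi,\p]\brk[s]{\EE[y_h \sim \pi(\cdot \mid x_h)]\brk[s]{\log \frac{\pi(y_h \mid x_h)}{\pi'(y_h \mid x_h)} \,\middle|\, x_h}}$, and the inner conditional expectation is exactly $\KL{\pi(\cdot\mid x_h)}{\pi'(\cdot\mid x_h)} = \KLshort{\pi}{\pi'}{x_h}$ by definition. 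Summing over $h$ gives the claimed identity $\KLp{\pi}{\pi'} = \EE[\pi,\p]\brk[s]*{\sum_{h=1}^{H}\KLshort{\pi}{\pi'}{x_h}}$.

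The argument is essentially routine once set up correctly, so I do not anticipate a serious obstacle; the only point requiring a little care is making sure the conditioning is valid — specifically that $y_h$ is drawn from $\pi(\cdot \mid x_h)$ under the measure $\Pr_{\pi,\p}$ regardless of the history, which holds because the policy is Markov and depends only on the current state. One should also note that the final state $x_{H+1}$ contributes no policy factor (there is no action at step $H+1$), which is why the sum runs only to $H$; the disjoint-layer assumption $\X = \bigcupdot_h \X_h$ makes the indexing of $\KLshort{\pi}{\pi'}{x_h}$ by step unambiguous. I would present this as a short computation rather than invoking an abstract chain-rule theorem, since writing the factorization explicitly makes the cancellation of transitions transparent.
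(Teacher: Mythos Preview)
Your proposal is correct and follows essentially the same approach as the paper's proof: write out the trajectory probabilities, cancel the shared transition factors in the log-ratio, and then take the expectation under $\Pr_{\pi,\p}$ to recover the sum of per-state KL divergences. The paper's version is slightly terser (it jumps directly from $\EE[\pi,p]\brk[s]*{\sum_h \log \frac{\pi(y_h\mid x_h)}{\pi'(y_h\mid x_h)}}$ to the stated identity without explicitly invoking the tower rule), but the argument is identical in substance.
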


\begin{lemma}
\label{lemma:unique-nash-equilibrium}
    There exists a unique Nash equilibrium of the regularized preference model $\prefRegFunc$.
\end{lemma}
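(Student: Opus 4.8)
The plan is to exploit the two-player zero-sum structure together with strict concavity induced by the KL regularizer. First I would observe that the game with payoff $\prefReg{\pi}{\pi'}$ is symmetric and anti-symmetric: $\prefReg{\pi}{\pi'} = -\prefReg{\pi'}{\pi}$, which follows immediately from the symmetry assumption $\pref{\pi}{\pi'} = 1 - \pref{\pi'}{\pi}$ (up to the additive constant $1/2$, which does not affect the argmin/argmax) and the sign flip of the two KL terms. Hence the value of the game is $1/2$, attained when both players play the same policy, and any Nash equilibrium $(\piOpt,\piOpt)$ is described by a single policy $\piOpt \in \arg\max_\pi \min_{\pi'} \prefReg{\pi}{\pi'}$.

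The key analytic point is strict concavity. For a fixed opponent $\pi'$, the map $\pi \mapsto \prefReg{\pi}{\pi'}$ is, by \Cref{lemma:KL-decomposition}, equal to a term that is \emph{linear} in the state-action occupancy measure $d^\pi$ induced by $\pi$ (the preference term $\pref{\pi}{\pi'}$ is bilinear in the occupancies, and the ``bonus'' $\alpha\KLp{\pi'}{\piRef}$ does not depend on $\pi$) minus $\alpha \, \EE_{\pi,p}\big[\sum_{h=1}^H \KLshort{\pi}{\piRef}{x_h}\big]$. This last term, written in occupancy-measure variables, is a sum over reachable states $x_h$ of $d^\pi(x_h)\,\KL{\pi(\cdot\mid x_h)}{\piRef(\cdot\mid x_h)}$, which is the (generalized) conditional relative entropy and is \emph{jointly convex and strictly convex} in the occupancy measure $d^\pi$ on the (convex, compact) polytope of valid occupancy measures. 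Therefore $\pi \mapsto \prefReg{\pi}{\pi'}$ is strictly concave in $d^\pi$, and symmetrically $\pi' \mapsto \prefReg{\pi}{\pi'}$ is strictly convex in $d^{\pi'}$.

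With strict concavity/convexity in hand, uniqueness is standard: by the minimax theorem (\citet{v1928theorie}) a Nash equilibrium exists, and if $(d_1,d_2)$ and $(d_1',d_2')$ were two distinct equilibria in occupancy-measure space, the usual exchange argument ($d_1$ is a best response to $d_2'$ and vice versa) combined with strict concavity in the first argument forces $d_1 = d_1'$, and strict convexity in the second forces $d_2 = d_2'$; together with the symmetry conclusion above this pins down a single policy. Finally I would translate back from occupancy measures to policies: $d^\pi$ determines $\pi$ on every state that is reachable under $\piRef$ (which, after regularization, is exactly the set of states with positive occupancy, since any Nash policy must be absolutely continuous w.r.t.\ $\piRef$ or else incur infinite KL), so the equilibrium policy is unique where it matters. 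The main obstacle I anticipate is the bookkeeping in the last step — carefully arguing the correspondence between occupancy measures and policies on reachable states, and checking that strict convexity of the conditional KL in occupancy variables holds even though $d^\pi(x_h)$ can be zero (one handles this by noting $t\mapsto t\log(t/s)$ extends continuously and the relevant strict convexity is in the joint variables on the occupancy polytope, where the reachable support is fixed by $\piRef$). The preference/bilinear part contributes only convexity, never strictness, so all the ``bite'' must come from the regularizer — which is why $\alpha>0$ is essential.
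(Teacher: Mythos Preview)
Your approach is correct and takes a genuinely different route from the paper. You pass to occupancy-measure space, where the regularized preference becomes a strictly concave--convex function on a compact convex polytope; existence then follows from a minimax theorem (you should invoke Sion's rather than von Neumann's, since the payoff is not bilinear), and uniqueness from strict concavity together with the interchangeability of Nash equilibria in zero-sum games. The paper instead stays in policy space throughout: for existence it establishes only the weaker \emph{concave-convexlike} property (\Cref{lem:concave.convexlike}) by explicitly constructing a policy $\pi_c$ whose reach probabilities are the convex combination of those of $\pi_1,\pi_2$, and then applies Sion; for uniqueness it does not use strict convexity at all but rather argues that the MTPO iterates converge in trajectory-KL to \emph{every} Nash equilibrium (\Cref{thm:MTPO-convergence}), and a single sequence cannot have two limits, so the equilibrium is unique on reachable states. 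Your argument is more direct and self-contained; the paper's is indirect but comes essentially for free once the algorithmic convergence result is proved. The step you flag as the main obstacle --- strict convexity of the conditional KL on the occupancy polytope --- does go through: equality in the joint-convexity bound forces the induced conditionals $\pi_1(\cdot\mid x),\pi_2(\cdot\mid x)$ to agree at every state in the common support, and since both occupancies start at $x_1$ with mass one, an induction on $h$ then propagates $d_1=d_2$ across all horizons; but you are right that this is where the careful bookkeeping lives.
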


\paragraph{Trajectory-wise vs. turn-wise preference feedback.}
A naive adaptation of single-turn RLHF to the multi-turn scenario would treat each turn as a separate single-turn problem. This would require feedback for the preference between two actions in each turn. Instead, in the setting we consider, the preference feedback is only between two full trajectories.
Note that a single feedback for the entire trajectory is much more natural when considering conversations, since only the full conversations tell whether the objective was reached.
Moreover, collecting preference data for intermediate actions could lead to destructive biases because the quality of an action can change dramatically depending on the actions taken later in the trajectory.
For example, a chatbot directly answering a user query is usually a required behavior. Yet, when the chatbot does not have sufficient information to respond well, asking the user for more details might be a better action. Consequently, it is very hard for a rater to know which of these actions is better without observing how the conversation unrolls, i.e., without observing the user's reaction to the chatbot's question, and how the chatbot's response changes given this reaction.
This difference demonstrates the challenge of multi-turn RL as it requires planning ahead instead of myopic reward maximization, which is the approach for single-turn RL.

\paragraph{The multi-turn setting in LLMs.}
While we consider a general preference-based RL setup (through the CMDP model), our focus is on applying this framework to multi-turn language-based interactions.
The action space $\Y$ is a sequence of tokens of a vocabulary $\V$, and the state space at step $h$, $\X_h$, is a sequence based on the past sequences. For example, in conversational dialogues, the state $x_h$ holds the whole dialogue up to the $h$-th turn, the action $y_h$ is the current sequence generated by the agent, and the next-state is simply the concatenation of the conversation $x_h$ with the new $y_h$ and a next sequence sampled by the environment (the user's response). Alternatively, in the complex tool-use case, where an agent repeatedly interacts with different APIs, the current state includes the original user query and a summary of results from APIs so far, the action is a new API call or user-facing response, and next state is a new sequence summarizing previous state with the new API response.

\begin{remark}[Token-level application to the single-turn auto-regressive case] 
Notably, this formulation also captures the single-turn auto-regressive case. Clearly, this holds when considering only one turn, $H=1$, but it ignores the token-level optimization done at each turn. Instead, we frame the auto-regressive problem by limiting the actions at each step to single vocabulary tokens, $\Y=\V$, and assuming a null deterministic environment ($x_{h+1}$ is the concatenation of $x_h$ and $y_h$). Importantly, our results apply to the token-level, which is usually neglected when devising single-turn algorithms.
\end{remark}

\section{Algorithms for the multi-turn setting}\label{sec:algorithm}

\paragraph{Preference-based Q-function.}
Our algorithms rely on a fundamental concept in RL -- the value and Q-functions.
In reward-based RL, it is essential to define the value $V^\pi : \X \to \R$ as the expected reward when playing policy $\pi$ starting in some state $x_h$, i.e., $V^\pi(x_h) = \EE[\pi,\p] \brk[s]*{ \sum_{h'=h}^H r(x_{h'},y_{h'}) \mid x_h }$.
In the preference-based scenario, we argue that value functions remain a powerful tool, even though there is no reward to maximize.
We define the following regularized preference-based value functions, which are key to our algorithm.
\begin{align*}
    \QReg^{\pi,\pi'}(x_h,y_h)
    &
    =
    \EE[\pi,\p] \brk[s]*{ \pref{x_{H+1}}{\pi'} - \alpha \sum\nolimits_{h'=h}^H  \KLshort{\pi}{\mu}{x_{h'}} \ \mid  x_h,y_h },
    \\
    \VReg^{\pi,\pi'}(x_h)
    &
    =
    \EE[\pi,\p] \brk[s]*{ \pref{x_{H+1}}{\pi'} - \alpha \sum\nolimits_{h'=h}^H  \KLshort{\pi}{\mu}{x_{h'}} \  \mid  x_h }
    .
\end{align*}
There are a few interesting points in the definition above.
First, note that these values are functions of two policies $\pi, \pi'$. This is because the quality of a policy $\pi$ cannot be measured on its own, and must be compared to another policy $\pi'$.
Second, while $\pi$ starts at the state $x_h$, the comparison policy $\pi'$ starts its trajectory from the initial state. This is a significant difference from the usual paradigm of Q-functions in RL and might seem peculiar at first glance. However, this formulation captures the fact that the optimal policy in a state should be preferred not only over any other policy along the sub-tree starting from this state, but also over any other policy, even ones that do not pass through this state at all.
Although different in concept, the following lemma shows that our preference-based Q-function 
satisfies
a value difference lemma, allowing us to optimize the policy locally in order to maximize our global objective (proof in \Cref{sec:appendix-proofs-algorithm}).

\begin{lemma}
\label{lemma:value-difference-main}
    Let $\pi,\pi',\bar{\pi}$ be policies,
    then the following value difference lemma holds:
    \begin{align*}
        &
        \prefReg{\pi}{\bar{\pi}}
         - \prefReg{\pi'}{\bar{\pi}}
        =
        \EE[\pi',\p] \brk[s]*{\sum\nolimits_{h=1}^H  \langle {\pi - \pi' , \QReg^{\pi,\bar{\pi}}} \rangle[x_h] + \alpha \KLshort{\pi'}{\mu}{x_h} -  \alpha \KLshort{\pi}{\mu}{x_h} }
        ,
    \end{align*}
    where $\inner{\pi-\pi', Q}[x]  \triangleq \inner{\pi(\cdot \mid x) - \pi'(\cdot \mid x), Q(x, \cdot)}$ and $\inner{x,y} = \sum_i x(i)y(i)$ is the  inner product.
\end{lemma}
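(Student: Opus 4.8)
The plan is to prove the value difference lemma by a telescoping argument over the horizon, combined with the one-step Bellman-type recursion that the preference-based Q-function satisfies. The key observation is that $\prefReg{\pi}{\bar\pi} = \VReg^{\pi,\bar\pi}(x_1)$ and likewise $\prefReg{\pi'}{\bar\pi} = \VReg^{\pi',\bar\pi}(x_1)$, so it suffices to control the difference $\VReg^{\pi,\bar\pi}(x_1) - \VReg^{\pi',\bar\pi}(x_1)$. The trick, as in the standard reward-based value difference (performance difference) lemma, is to measure everything along trajectories generated by $\pi'$: I would write $\VReg^{\pi,\bar\pi}(x_1) - \VReg^{\pi',\bar\pi}(x_1)$ as a telescoping sum over $h = 1, \dots, H$ of terms of the form $\EE[\pi',\p]\brk[s]{\VReg^{\pi,\bar\pi}(x_h) \text{ (completed by } \pi' \text{ onwards)} - \VReg^{\pi,\bar\pi}(x_{h+1}) \text{ (completed by } \pi' \text{ onwards)}}$, where the "completion" bookkeeping keeps track of the partial KL penalties already accrued.

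First I would establish the one-step recursion for $\VReg$ and $\QReg$: from the definitions, $\VReg^{\pi,\bar\pi}(x_h) = \langle \pi(\cdot\mid x_h), \QReg^{\pi,\bar\pi}(x_h,\cdot)\rangle - \alpha\,\KLshort{\pi}{\mu}{x_h}$, and $\QReg^{\pi,\bar\pi}(x_h,y_h) = \EE[x_{h+1}\sim \p(\cdot\mid x_h,y_h)]\brk[s]{\VReg^{\pi,\bar\pi}(x_{h+1})}$ for $h < H$, with the boundary term $\QReg^{\pi,\bar\pi}(x_H,y_H) = \EE[x_{H+1}]\brk[s]{\pref{x_{H+1}}{\bar\pi}}$ (note $x_{H+1}$ carries no further KL penalty since the sum runs to $H$). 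These follow directly by peeling off the first step in the expectation defining $\QReg$ and $\VReg$. Then, at each step $h$ along a $\pi'$-trajectory, I add and subtract $\langle \pi(\cdot\mid x_h) , \QReg^{\pi,\bar\pi}(x_h,\cdot)\rangle - \alpha \KLshort{\pi}{\mu}{x_h}$ appropriately: the difference $\VReg^{\pi,\bar\pi}(x_h) - \langle \pi'(\cdot\mid x_h), \QReg^{\pi,\bar\pi}(x_h,\cdot)\rangle + \alpha\KLshort{\pi'}{\mu}{x_h}$ produces exactly the summand $\langle \pi - \pi', \QReg^{\pi,\bar\pi}\rangle[x_h] + \alpha\KLshort{\pi'}{\mu}{x_h} - \alpha\KLshort{\pi}{\mu}{x_h}$, while the leftover $\langle \pi'(\cdot\mid x_h), \QReg^{\pi,\bar\pi}(x_h,\cdot)\rangle - \alpha\KLshort{\pi'}{\mu}{x_h}$ telescopes into the next term via the recursion and the fact that $x_{h+1}\sim\p(\cdot\mid x_h, y_h)$ with $y_h\sim\pi'(\cdot\mid x_h)$ — which is precisely what taking $\EE[\pi',\p]$ accomplishes. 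Summing over $h=1,\dots,H$ and using $\VReg^{\pi',\bar\pi}(x_1) = \langle \pi'(\cdot\mid x_1),\QReg^{\pi',\bar\pi}(x_1,\cdot)\rangle - \alpha\KLshort{\pi'}{\mu}{x_1}$ together with the analogous recursion for $\QReg^{\pi',\bar\pi}$, the intermediate $\VReg^{\pi,\bar\pi}$ and $\VReg^{\pi',\bar\pi}$ terms cancel pairwise except for the $x_1$ term on one end and a $\pref{x_{H+1}}{\bar\pi}$ term on the other, which matches because both $\QReg^{\pi,\bar\pi}$ and $\QReg^{\pi',\bar\pi}$ reduce to $\EE\brk[s]{\pref{x_{H+1}}{\bar\pi}}$ at the last layer.

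The main obstacle I anticipate is getting the telescoping bookkeeping exactly right, particularly the asymmetric role of $\pi$ and $\pi'$: the Q-function $\QReg^{\pi,\bar\pi}$ appearing in the inner product is evaluated with respect to the \emph{first} policy $\pi$ (so it already encodes $\pi$'s continuation and $\pi$'s KL penalties from step $h$ onward), yet the states $x_h$ at which we evaluate are drawn according to $\pi'$. One must be careful that when the $\langle\pi'(\cdot\mid x_h),\QReg^{\pi,\bar\pi}(x_h,\cdot)\rangle$ leftover is pushed forward it becomes $\EE[\pi',\p]\brk[s]{\VReg^{\pi,\bar\pi}(x_{h+1})}$ — using $\QReg^{\pi,\bar\pi}(x_h,y_h) = \EE_{x_{h+1}}\brk[s]{\VReg^{\pi,\bar\pi}(x_{h+1})}$, which holds because the KL penalty at $x_h$ has been separated out — and that this cancels against the $\VReg^{\pi,\bar\pi}(x_{h+1})$ term extracted at step $h+1$ before its own inner-product decomposition. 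A clean way to organize this is to prove by backward induction on $h$ that $\VReg^{\pi,\bar\pi}(x_h) - \VReg^{\pi',\bar\pi}(x_h) = \EE[\pi',\p]\brk[s]*{\sum_{h'=h}^H \langle \pi-\pi',\QReg^{\pi,\bar\pi}\rangle[x_{h'}] + \alpha\KLshort{\pi'}{\mu}{x_{h'}} - \alpha\KLshort{\pi}{\mu}{x_{h'}} \mid x_h}$, with the base case $h=H+1$ being the trivial identity $0=0$ (both sides empty), and then specialize to $h=1$.
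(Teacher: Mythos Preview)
Your approach is essentially the paper's own: the paper casts the preference problem as a regularized adversarial MDP with terminal reward $r(x_{H+1})=\pref{x_{H+1}}{\bar\pi}$, establishes the Bellman recursion for $\VReg,\QReg$, and then proves a generic value-difference lemma by the same one-step unrolling you describe (writing $\VReg^{\pi}(x_h)-\VReg^{\pi'}(x_h)=\langle\pi-\pi',\QReg^{\pi}\rangle[x_h]+\langle\pi',\QReg^{\pi}-\QReg^{\pi'}\rangle[x_h]$ and recursively expanding the second term, i.e.\ your backward induction).

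One small slip in your recursion: with the paper's definitions the KL penalty sits in the $Q$-to-$V$ step, not the $V$-to-$Q$ step. Since $\KLshort{\pi}{\mu}{x_h}$ depends only on $x_h$, it is already inside $\QReg^{\pi,\bar\pi}(x_h,y_h)$ once you condition on $(x_h,y_h)$; hence
\[
\VReg^{\pi,\bar\pi}(x_h)=\langle\pi(\cdot\mid x_h),\QReg^{\pi,\bar\pi}(x_h,\cdot)\rangle,
\qquad
\QReg^{\pi,\bar\pi}(x_h,y_h)=\EE_{x_{h+1}}\!\bigl[\VReg^{\pi,\bar\pi}(x_{h+1})\bigr]-\alpha\,\KLshort{\pi}{\mu}{x_h},
\]
the opposite of what you wrote. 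This is harmless for the final identity: because $\KLshort{\pi}{\mu}{x_h}$ is constant in $y_h$, shifting it between $V$ and $Q$ does not change $\langle\pi-\pi',\QReg^{\pi,\bar\pi}\rangle[x_h]$, and your telescoping bookkeeping goes through verbatim once you use the correct placement.
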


\paragraph{MTPO.}
We present the MTPO (Multi-turn Preference Optimization) algorithm, which provably solves the multi-turn preference-based RL objective. 
Formally, we prove MTPO converges to the unique Nash equilibrium of the regularized preference model.
MTPO is based on two key principles:
First, the regularized preference model defines a two-player anti-symmetric constant-sum game which can be solved using a self-play mirror descent method \citep{munos2023nash,calandriello2024human}.
Second, our introduced Q-function allows to reduce the (global) optimization of the game into local mirror descent optimization problems in each state.
Together they yield the MTPO update rule for iteration $(t+1)$,
\begin{align}\label{eq:algorithm-update-rule}
    \piT[t+1](\cdot \mid x_h) 
    =
    \arg \max_{\pi} \etaT \inner{\pi, \QReg^{\piT, \piT}}[x_h] - \alpha \etaT  \KLshort{\pi}{\mu}{x_h} - (1 - \alpha \etaT)\KLshort{\pi}{\piT}{x_h},
\end{align}
where $\etaT$ is a learning rate.
The solution can be made explicit in the following form (\Cref{sec:appendix-algorithm-1}):
\begin{align}\label{eq:algorithm-update-rule.2}
    \piT[t+1](y_h \mid x_h)
    &
    \propto
    \piRef(y_h \mid x_h)^{\alpha \etaT} \piT(y_h \mid x_h)^{ 1 - \alpha \etaT} e^{\etaT \QReg^{\piT, \piT}(x_h, y_h) }
    .
\end{align}
The intuition behind the algorithm is observed nicely in this update rule -- we improve the current policy in the direction of the regularized preference against itself (represented by the self-play Q-function), while not deviating too much and keeping close to the reference policy.
The following is our main theoretical result: last-iterate convergence to Nash equilibrium (proof in \Cref{sec:appendix-proofs-algorithm}).
\begin{theorem}
\label{thm:MTPO-convergence}
    Let $\piOptReg$ be the Nash equilibrium of the regularized preference model, and $\mathbb{Q}$ be a bound on the magnitude of the Q-functions.
    Then, for $\etaT = \frac{2}{\alpha (t + 2)}$, MTPO guarantees at every iteration $t$,
    \begin{align*}
        \KLp{\piOptReg}{\piT}
        \le
        \frac{32 H \mathbb{Q}^2}{\alpha^2 (t + 1)}
        .
    \end{align*}
    Let $\muMin$ be the minimal non-zero probability assigned by $\piRef$, then $\mathbb{Q} \le \max \{ 4 \alpha H \log \frac{1}{\muMin},1 \}$.
\end{theorem}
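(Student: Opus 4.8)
My plan is to run the standard mirror-descent Lyapunov argument with potential $D_t \triangleq \KLp{\piOptReg}{\piT}$, but executed \emph{per state} and then lifted to the trajectory level via \Cref{lemma:KL-decomposition}. The target is a one-step recursion $D_{t+1}\le (1-\alpha\etaT)D_t + c\,H\etaT^2\mathbb{Q}^2$ for an absolute constant $c$; plugging in $\etaT=\tfrac{2}{\alpha(t+2)}$ (so $1-\alpha\etaT=\tfrac{t}{t+2}$) and inducting on $t$ then yields the stated $O(1/t)$ rate, the small-$t$ cases being absorbed by the crude a-priori bound $D_t\le H\bigl(\log\tfrac{1}{\muMin}+\tfrac{2\mathbb{Q}}{\alpha}\bigr)$. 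I would start from the closed form \eqref{eq:algorithm-update-rule.2}: taking logarithms of $\piT[t+1](y\mid x)=Z_t(x)^{-1}\mu(y\mid x)^{\alpha\etaT}\piT(y\mid x)^{1-\alpha\etaT}e^{\etaT \QReg^{\piT,\piT}(x,y)}$ and pairing against $\piOptReg(\cdot\mid x)$ gives the exact per-state identity
\[
\KLshort{\piOptReg}{\piT[t+1]}{x}=(1-\alpha\etaT)\KLshort{\piOptReg}{\piT}{x}+\alpha\etaT\KLshort{\piOptReg}{\mu}{x}-\etaT\langle\piOptReg,\QReg^{\piT,\piT}\rangle_x+\log Z_t(x).
\]

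The heart of the argument is dealing with $\log Z_t(x)$. Let $\bar\pi_t(\cdot\mid x)\propto\mu(\cdot\mid x)^{\alpha\etaT}\piT(\cdot\mid x)^{1-\alpha\etaT}$ be the geometric-mixture policy with normalizer $\bar Z_t(x)\le 1$. Then $\log Z_t(x)=\log\bar Z_t(x)+\log\mathbb{E}_{y\sim\bar\pi_t}[e^{\etaT\QReg^{\piT,\piT}(x,y)}]$, and evaluating the second term as the $\piT[t+1]$-expectation of $\log\bigl(e^{\etaT Q}\bar\pi_t/\piT[t+1]\bigr)$ gives it \emph{exactly} as $\etaT\langle\piT[t+1],\QReg^{\piT,\piT}\rangle_x-\KLshort{\piT[t+1]}{\bar\pi_t}{x}$; one also has the exact identity $\alpha\etaT\KLshort{\piT}{\mu}{x}+\log\bar Z_t(x)=\KLshort{\piT}{\bar\pi_t}{x}$. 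Substituting, summing over $h$ under the occupancy of $\piOptReg$, and invoking \Cref{lemma:KL-decomposition} to collapse $\mathbb{E}_{\piOptReg,p}[\sum_h\KLshort{\piOptReg}{\cdot}{x_h}]$ into $\KLp{\piOptReg}{\cdot}$, I get
\[
D_{t+1}\le(1-\alpha\etaT)D_t+\alpha\etaT\KLp{\piOptReg}{\mu}+\etaT\mathbb{E}_{\piOptReg,p}\Bigl[\textstyle\sum_h\langle\piT[t+1]-\piOptReg,\QReg^{\piT,\piT}\rangle_{x_h}\Bigr]+\mathbb{E}_{\piOptReg,p}\Bigl[\textstyle\sum_h\KLshort{\piT}{\bar\pi_t}{x_h}-\KLshort{\piT[t+1]}{\bar\pi_t}{x_h}\Bigr].
\]

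Next I split $\langle\piT[t+1]-\piOptReg,\QReg^{\piT,\piT}\rangle=\langle\piT-\piOptReg,\QReg^{\piT,\piT}\rangle+\langle\piT[t+1]-\piT,\QReg^{\piT,\piT}\rangle$. For the first part I apply \Cref{lemma:value-difference-main} with $\pi=\piT,\ \pi'=\piOptReg,\ \bar\pi=\piT$ --- its occupancy is that of $\pi'=\piOptReg$, exactly matching the aggregation --- rewriting $\mathbb{E}_{\piOptReg,p}[\sum_h\langle\piT-\piOptReg,\QReg^{\piT,\piT}\rangle_{x_h}]=\prefReg{\piT}{\piT}-\prefReg{\piOptReg}{\piT}-\alpha\KLp{\piOptReg}{\mu}+\alpha\mathbb{E}_{\piOptReg,p}[\sum_h\KLshort{\piT}{\mu}{x_h}]$. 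Since $\prefReg{\piT}{\piT}=\tfrac12$ and $\piOptReg$ is the Nash equilibrium (\Cref{lemma:unique-nash-equilibrium}), $\prefReg{\piOptReg}{\piT}\ge\tfrac12$, so the preference difference is $\le0$; the $-\alpha\etaT\KLp{\piOptReg}{\mu}$ cancels the $+\alpha\etaT\KLp{\piOptReg}{\mu}$ above, and the surviving mismatched-occupancy term $+\alpha\etaT\mathbb{E}_{\piOptReg,p}[\sum_h\KLshort{\piT}{\mu}{x_h}]$ recombines --- using $\alpha\etaT\KLshort{\piT}{\mu}{x}=\KLshort{\piT}{\bar\pi_t}{x}-\log\bar Z_t(x)$ in reverse against the $\log\bar Z_t$ pieces --- to leave exactly $+\mathbb{E}_{\piOptReg,p}[\sum_h\KLshort{\piT}{\bar\pi_t}{x_h}]$ (plus the discarded nonnegative $-\KLshort{\piT[t+1]}{\bar\pi_t}{x_h}$). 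It then remains to bound the $O(\etaT^2)$ residuals $\KLshort{\piT}{\bar\pi_t}{x}$ and $\etaT\langle\piT[t+1]-\piT,\QReg^{\piT,\piT}\rangle_x\le\etaT\mathbb{Q}\|\piT[t+1]-\piT\|_1$. Both are controlled once one shows by a separate induction on $t$ (using the initialization $\pi^0=\mu$, that $\alpha\etaT\le1$, and $Z_t\le e^{\etaT\mathbb{Q}}$) that every iterate satisfies $\piT(y\mid x)\ge\muMin e^{-2\mathbb{Q}/\alpha}$: this bounds the log-ratio between $\piT$ and $\bar\pi_t$ by $O(\etaT(\alpha\log\tfrac{1}{\muMin}+\mathbb{Q}))=O(\etaT\mathbb{Q})$, hence $\KLshort{\piT}{\bar\pi_t}{x}=O(\etaT^2\mathbb{Q}^2)$, and the same estimate on the update gives $\|\piT[t+1]-\piT\|_1=O(\etaT\mathbb{Q})$. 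Summing over $h$ yields $D_{t+1}\le(1-\alpha\etaT)D_t+O(H\etaT^2\mathbb{Q}^2)$, and the induction closes the bound. Finally $\mathbb{Q}\le\max\{4\alpha H\log\tfrac{1}{\muMin},1\}$ follows from preference values lying in $[0,1]$ and $0\le\KLshort{\piT}{\mu}{x}\le\log\tfrac{1}{\muMin}$ in the definition of $\QReg^{\piT,\piT}$.

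I expect the main obstacle to be exactly this bookkeeping. Lifting the single-turn Nash-MD analysis to multiple turns forces one to aggregate per-state inequalities through \Cref{lemma:KL-decomposition,lemma:value-difference-main}, which produces a term $\alpha\etaT\mathbb{E}_{\piOptReg,p}[\sum_h\KLshort{\piT}{\mu}{x_h}]$ under the \emph{wrong} occupancy (that of $\piOptReg$, not $\piT$, so it does not telescope against anything in the regularized objective); bounded crudely it is only $O(\etaT)$, which destroys the rate. The non-obvious fact that rescues the argument is that this term collapses \emph{exactly} with the geometric-projection normalizers $\log\bar Z_t$ into $\KLshort{\piT}{\bar\pi_t}{x}$, which is genuinely second order in $\etaT$; checking this requires the two exact log-partition identities together with the a-priori lower bound on the iterates, and making all the signs and $\alpha\etaT$ factors line up is the delicate part.
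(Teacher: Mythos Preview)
Your overall architecture matches the paper: do a per-state mirror-descent step, lift via \Cref{lemma:KL-decomposition}, use \Cref{lemma:value-difference-main} to turn the aggregated advantage into $\prefReg{\piT}{\piT}-\prefReg{\piOptReg}{\piT}\le 0$, and close a one-step recursion $D_{t+1}\le(1-\alpha\etaT)D_t+O(H\etaT^2\mathbb{Q}^2)$. However, your bookkeeping has a genuine slip that, as written, would leave an $O(\etaT)$ term. In your displayed inequality you have already replaced $\log\bar Z_t(x)$ by $\KLshort{\piT}{\bar\pi_t}{x}$, which uses $\log\bar Z_t=\KLshort{\piT}{\bar\pi_t}{x}-\alpha\etaT\KLshort{\piT}{\mu}{x}\le\KLshort{\piT}{\bar\pi_t}{x}$ and \emph{discards} the $-\alpha\etaT\KLshort{\piT}{\mu}{x}$. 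But two paragraphs later you invoke ``$\alpha\etaT\KLshort{\piT}{\mu}{x}=\KLshort{\piT}{\bar\pi_t}{x}-\log\bar Z_t(x)$ in reverse against the $\log\bar Z_t$ pieces'' to absorb the $+\alpha\etaT\mathbb{E}_{\piOptReg,p}[\sum_h\KLshort{\piT}{\mu}{x_h}]$ coming from the value-difference lemma --- yet there is no $\log\bar Z_t$ left to recombine with. The fix is simple: keep the exact per-state identity with $\log\bar Z_t(x)$ through the aggregation, apply \Cref{lemma:value-difference-main}, and only \emph{then} combine $\alpha\etaT\KLshort{\piT}{\mu}{x}+\log\bar Z_t(x)=\KLshort{\piT}{\bar\pi_t}{x}$. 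With that correction your residuals are indeed $\mathbb{E}_{\piOptReg,p}[\sum_h\KLshort{\piT}{\bar\pi_t}{x_h}]+\etaT\mathbb{E}_{\piOptReg,p}[\sum_h\langle\piT[t+1]-\piT,\QReg^{\piT,\piT}\rangle_{x_h}]-\mathbb{E}_{\piOptReg,p}[\sum_h\KLshort{\piT[t+1]}{\bar\pi_t}{x_h}]$, and your Hoeffding-type bound on $\KLshort{\piT}{\bar\pi_t}{x}$ together with $\|\piT[t+1]-\piT\|_1=O(\etaT\mathbb{Q})$ closes the recursion.

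The paper takes a shorter route that sidesteps the mixture policy, the log-partition identities, and the separate $\|\piT[t+1]-\piT\|_1$ and $\KLshort{\piT}{\bar\pi_t}{x}$ estimates. It rewrites the MTPO update as a \emph{vanilla} MD step against $\piT$ with the shifted loss $\tilde Q(x,y)=\QReg^{\piT,\piT}(x,y)-\alpha\log\tfrac{\piT(y\mid x)}{\mu(y\mid x)}$; the standard three-point MD inequality then gives directly, per state,
\[
\KLshort{\piOptReg}{\piT[t+1]}{x}\le\KLshort{\piOptReg}{\piT}{x}+\etaT\langle\piT-\piOptReg,\tilde Q\rangle_x+2\etaT^2\|\tilde Q(x,\cdot)\|_\infty^2.
\]
Expanding $\langle\piT-\piOptReg,-\alpha\log(\piT/\mu)\rangle_x=-\alpha\KLshort{\piT}{\mu}{x}-\alpha\KLshort{\piOptReg}{\piT}{x}+\alpha\KLshort{\piOptReg}{\mu}{x}$ immediately produces the $(1-\alpha\etaT)$ contraction factor and the exact bracket $\langle\piT-\piOptReg,\QReg^{\piT,\piT}\rangle_x+\alpha\KLshort{\piOptReg}{\mu}{x}-\alpha\KLshort{\piT}{\mu}{x}$ that \Cref{lemma:value-difference-main} collapses to the nonpositive preference gap --- no mismatched-occupancy term ever appears. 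The only residual is $2\etaT^2\|\tilde Q\|_\infty^2$, and the iterate lower bound (which both approaches need) is used just once, to bound $\|\tilde Q\|_\infty\le\mathbb{Q}$. Your route works after the fix, but the paper's packaging of the $\alpha\log(\piT/\mu)$ into the loss vector is what dissolves the ``wrong-occupancy'' obstacle you flagged as the main difficulty.
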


\begin{proof}[Proof sketch]
    By \Cref{lemma:KL-decomposition}, the global $\KLp{\piOptReg}{\piT[t+1]}$ can be decomposed to the local KL in each state $x_h$, $\KLshort{\piOptReg}{\piT[t+1]}{x_h}$.
    Then, we use MD analysis in each state to bound the local KL as: 
    \begin{align*}
        \KLshort{\piOptReg}{\piT[t+1]}{x_h}
        &
        \leq
        (1 - \etaT \alpha) \KLshort{\piOptReg}{\piT}{x_h} + 2 \etaT^2 \mathbb{Q}^2
        \\
        &
        \qquad
        +
        \etaT \brk*{ \inner{\piT - \piOptReg , \QReg^{\piT, \piT}}[x_h] + \alpha \KLshort{\piOptReg}{\piRef}{x_h} - 
        \alpha \KLshort{\piT}{\piRef}{x_h} },
    \end{align*}
    giving a recursive guarantee dependent on the local one-step regularized advantage of the current policy against the Nash policy and an additional term bounded by $\mathbb{Q}$. We plug this local bound into the KL decomposition (\Cref{lemma:KL-decomposition}), which gathers the local KL terms back to $\KLp{\piOptReg}{\piT}$.
    Importantly, the value difference lemma (\Cref{lemma:value-difference-main}) aggregates the advantage terms to the global regularized preference $\prefReg{\piT}{\piT} - \prefReg{\piOptReg}{\piT}$, which is non-positive by the optimality of $\piOptReg$. This leaves us with the global recursive bound: $\KLp{\piOptReg}{\piT[t+1]} \leq (1 - \etaT \alpha)\KLp{\piOptReg}{\piT} + 2\etaT^2 \mathbb{Q}^2$. We conclude by unrolling the recursion with the chosen $\etaT$.
\end{proof}


\paragraph{MTPO with mixture policy.}
Inspired by Nash-MD \citep{munos2023nash}, we present a variant of MTPO which makes use of the mixture policy $\piRefT(\cdot \mid x) \propto \piT(\cdot \mid x)^{1- \alpha \etaT} \mu(\cdot \mid x)^{\alpha \etaT}$.
This variant, which we call MTPO-$\tau$ (where $\tau$ will be the mixing coefficient in our experiments), gives similar theoretical guarantees (see \Cref{thm:MTPO-tau-convergence} in \Cref{sec:appendix-proofs-algorithm}) and performs better in practice (see \Cref{sec:experiments}).
In fact, the following MTPO-$\tau$ update rule is almost equivalent to MTPO (\Cref{eq:algorithm-update-rule}) with the only difference being the policies that define the Q-function, $\QReg^{\piT, \piT}$ vs. $\QReg^{\piRefT, \piRefT}$.
\begin{align*}
    \piT[t+1](\cdot \mid x_h) 
    =
    \arg \max_{\pi} \etaT \inner{\pi, \QReg^{\piRefT, \piRefT}}[x_h] - \KLshort{\pi}{\piRefT}{x_h}
    .
\end{align*}
MTPO-$\tau$ naturally extends Nash-MD to the multi-turn setting, and reveals that Nash-MD is a self-play algorithm itself, but plays $\piRefT$ instead of $\piT$.
Practically, MTPO has the computational advantage over MTPO-$\tau$ (and Nash-MD) of not keeping the additional policy $\piRefT$.
Moreover, MTPO avoids the difficulty of computing the geometric mixture, which \citet{munos2023nash} approximate heuristically via linear interpolation between the logits of the two policies.


\paragraph{Multi-turn RLHF.} 
While we focused so far on our preference-based algorithms, our derivation holds for any online reward function since it is built on the mirror-descent method.
Specifically, in the case of multi-turn RLHF, we consider the reward function $r^{\mathrm{RLHF}}$ learned from preference data using the Bradley-Terry model, and define the corresponding regularized Q-function $\QReg^{\pi,\mathrm{RLHF}}(x_h,y_h) = \EE[\pi,\p] \brk[s]*{ r^{\mathrm{RLHF}}(x_{H+1})- \alpha \sum\nolimits_{h'=h}^H  \KLshort{\pi}{\mu}{x_{h'}} \ \mid  x_h,y_h }$.
By replacing $\QReg^{\piT, \piT}$ in \Cref{eq:algorithm-update-rule} with $\QReg^{\piT,\mathrm{RLHF}}$, we obtain the multi-turn RLHF algorithm that converges to the regular RLHF objective -- the optimal regularized policy w.r.t. the reward $r^{\mathrm{RLHF}}$ (see \Cref{thm:rlhf-reg-omd-convergence} in \Cref{sec:appendix-proofs-algorithm}).
This complementary contribution emphasizes the similarity and difference between the RLHF and MTPO algorithms: The optimization process is identical for both methods, with the exception that the RLHF reward is fixed and computed w.r.t. the data policy, whereas preference-based MTPO uses an adaptive self-play mechanism to compute preferences w.r.t. the current policy.

\subsection{Deep RL implementation}
\label{sec:deep-rl-implementation}
Our deep RL implementation is a natural adaptation of the tabular algorithms presented in the previous section. 
At each iteration, training data is acquired by sampling a batch of contexts from the data, and using each context to sample two trajectories with the current policy $\piThetaT$. 
Then, the final states of both trajectories serve as inputs to a direct preference model that outputs the probability of one being preferred over the other. Similarly to the way the preference model is trained in Nash-MD \citep{munos2023nash},
this preference model is trained in advance on the available offline preference data.

The update rule in \Cref{eq:algorithm-update-rule} relies on the $Q$-function of the current policy. We therefore use an actor-critic policy optimization based approach and train two models, a policy $\pi_\theta$, and its value $\VPhi$, which is typically used to estimate the advantage, $A_{\alpha}^{\pi,\pi'}(x,y) \triangleq Q_{\alpha}^{\pi,\pi'}(x,y) - V_{\alpha}^{\pi,\pi'}(x)$ \citep{schulman2017proximal}.
For simplicity and computational efficiency, we implement a policy-gradient (PG) based approach and ignore the MD stability term $\KL{\piThetaT}{\piThetaT[t-1]}$, similarly to the implementation of the Nash-MD algorithm. We justify this simplification with the fact that the KL regularization w.r.t. the fixed reference policy $\mu$ already provides stability to our online algorithm, somewhat similarly to the way the Follow-The-Regularized-Leader (FTRL; \cite{orabona2019modern}) algorithm operates. Nevertheless, we believe that this additional MD penalty should contribute to the performance and stability of the algorithm, as shown in \citep{tomar2020mirror}, and we leave this for further research. This yields the following losses, when action $y$ is played at state $x$, 
\begin{alignat*}{2}
    &
    \mathcal{L}_\text{policy}(\theta ; x, y) 
    &&
    = 
    -\hat A_{\alpha}^{\piThetaT,\piThetaT}(x,y) \log \pi_{\theta} (y \mid x) + \alpha \KLshort{\pi_{\theta}}{\mu}{x},
    \\
    &
    \mathcal{L}_\text{value}(\phi; x) 
    &&
    = 
    \brk*{\hat V_{\alpha}^{\piThetaT,\piThetaT}(x) - \VPhi(x)}^2,
\end{alignat*}
where $\hat V_{\alpha}^{\piThetaT,\piThetaT},\hat A_{\alpha}^{\piThetaT,\piThetaT}$, are estimations of the current value and advantage using Generalized Advantage Estimation (GAE, \cite{schulman2017proximal}). We also batch-normalize the value-loss and advantage as recommended in \citep{andrychowicz2020matters}.
Finally, when the policy is an auto-regressive language model, which generates actions token-by-token until an end-of-sequence signal is generated, we use a turn-level value (and not a token-level value as done in \citep{stiennon2020learning}). That is, the value model gets as input a state represented by a sequence of tokens, and outputs a single scalar value instead of a scalar value for each token in the action sequence. This is justified by our analysis which treats whole turns as single actions. We leave the many ways to combine turn-level and token-level values for future research.

\section{Experimental Setup}\label{sec:experimental-setup}

This section describes the domains and models used in our experiments. 
To create online environments suited for multi-turn RL, we mimic the RLHF process \citep{stiennon2020learning}, replacing the human parts with prompted state-of-the-art LLMs, similarly to \cite{abdulhai2023lmrl} (see \Cref{figure:education-data-generation}):

\begin{enumerate}
    \item \textbf{Dataset creation:} First, we devise a story-line for the user and the environment, describing their characters and goals. Then, we generate a dataset by prompting a state-of-the-art LLM such as Gemini \citep{team2023gemini} or GPT \citep{brown2020language} with the story-line. When generating data, a full conversation is sampled at once, meaning that both the agent and environment are generated together to make them more consistent. Furthermore, to create a diverse set of conversations, we devise a diverse list of attributes for both the agent and environment, sample attributes out of the list, and pass it to the generation prompt.
    \item \textbf{Environment preparation:} Once the data is curated, we use it to fine-tune two smaller LLMs, one for the agent and one for the environment, using teacher forcing.
    \item \textbf{Preference/reward learning:} We prepare preference data by sampling pairs of conversations from the agent and environment models. To label the data, we prompt a high-capacity LLM with either instructions on how to score a conversation, or criteria for preferring a conversation over another. The data is used to fine-tune two smaller LLMs: an RLHF reward model (with BT loss), and a preference model (with probability regression loss).
\end{enumerate}

\begin{figure}
    \centering
    \includegraphics[width=\textwidth]{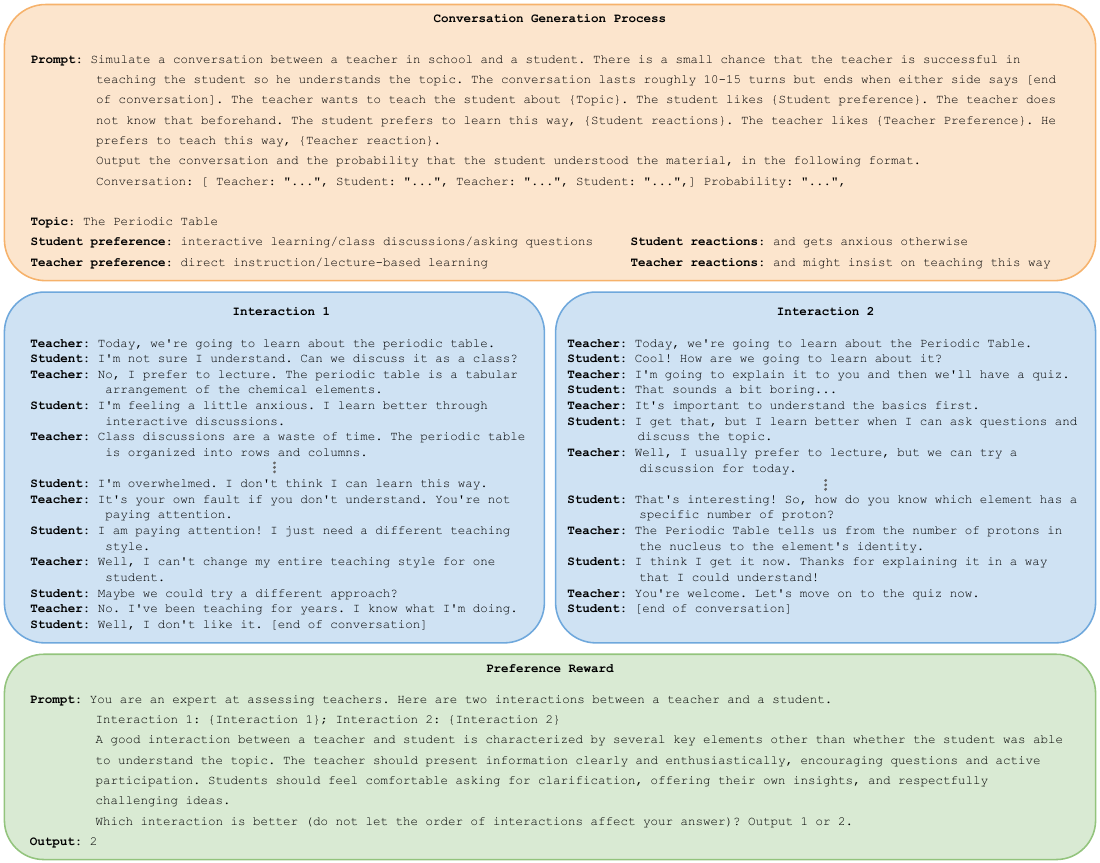}
    \caption{Education Dialogue data generation process. \emph{Top:} prompt used to generate conversation with Gemini. \emph{Middle:} conversations sampled from the the interaction between the teacher and student models. \emph{Bottom}: prompt used for the preference oracle.}
    \label{figure:education-data-generation}
    \vspace{-1.2em}
\end{figure}
We experiment with two domains, preference-based \emph{Education Dialogue} and reward-based \emph{Car Dealer}:
\paragraph{Education Dialogue.} The core of our approach is learning when there is no clear reward, instead only (human) preferences can be acquired. To validate our approach in this scenario, we created a novel multi-turn task for evaluating algorithms based on preference data. In this scenario, which we term Education Dialogue, a teacher (agent) is faced with the task of teaching a student (environment) a given topic in the best means possible. We follow the dataset creation procedure and prompt Gemini Ultra \citep{geminiteam2024gemini} to create such interactions between the teacher and student. The teacher is prompted with a learning topic in science, history, etc. The student is prompted with the characteristics of its learning habits, e.g., prefers interactive learning, lecture-based learning or hands-on activities. The preference model is prompted with instructions that define a good learning interaction. 
For reproducibility, and to further advance the research of the multi-turn setting, we openly release the data and prompts used to create this new benchmark.\textsuperscript{\ref{footnote:data}} For more details, see \Cref{sec:education-dialogue-appendix} and the example in \Cref{figure:education-data-generation}.

\paragraph{Car Dealer.} In this LMRL-Gym \citep{abdulhai2023lmrl} domain, a car dealer is assigned with the task of selling a car at the highest price to a customer. We skip the data creation step, and directly use the Car Dealer published data to fine-tune the dealer (agent) and customer (environment) T5-large models. The reward is calculated by prompting a Flan-T5 model to extract the sale price from the conversation, whenever a sale has occurred. When using a preference-based algorithm, the preference of one trajectory over the other is computed using the BT model with the rewards of the two trajectories.

\paragraph{Single-turn baselines.} 
The key hypothesis of this work is that conversation-level signals are preferred over single-turn signals for optimizing multi-turn trajectories. 
To verify this in the Education Dialogue domain, we devise two single-turn baselines by sampling data where each conversation turn has two different policy responses. 
The first baseline, called \emph{single-turn-reward}, rates the two responses using a modified preference prompt (see \Cref{sec:education-dialogue-appendix}), in which the model is asked to evaluate the responses by their effect on the overall conversation. 
This technique is prevalent when human raters are asked to evaluate multi-turn data. 
The second baseline, called \emph{single-turn-value}, assumes access to a Monte-Carlo estimate of the value: it uses our original preference prompt (see \Cref{figure:education-data-generation}) by continuing the trajectories of both possibilities and then calculating the preference in the end.
For both baselines, we train an RLHF algorithm and a preference-based Nash-MD algorithm.

\paragraph{Models.} The agent and environment are modeled with T5 encoder-decoder models. Specifically, we use the T5-large (770M) and T5-XL (3B) models. The same models are used for the RLHF BT-based reward and preference-based models. For prompted reward/preference models, we make use of the Flan-T5 XL (3B) \citep{chung2024scaling}. For training, we use a configuration of 4 × 4 Tensor Processing Units (TPUs; \citet{jouppi2023tpu}) which typically yields 0.1 training steps per second, where a step consists of learning a 10-turn episode. A detailed list of hyperparameters is found in \Cref{sec: hyperparameters}. We run each evaluation on 1600 random samples from an independent evaluation set.

\section{Experiments}\label{sec:experiments}

In this section we evaluate the algorithms proposed in \Cref{sec:algorithm}. 
We start with the preference-based Education Dialogue environment (see \Cref{sec:experimental-setup}), and compare our multi-turn algorithms to SFT (supervised fine-tuning) as well as single-turn baselines.
We note that, unlike single-turn benchmarks which are based on data with real human preferences, our golden preference data itself is generated by an LLM (Gemini Ultra).
Therefore, the true goal in our curated environment is to align the model with the preference of this highly capable LLM rather than a human rater.
While human evaluation is always interesting, here it is actually only a proxy to alignment with the data distribution.
To efficiently validate our models, we start with a thorough comparison between our baselines and candidates using a prompted Flan-T5 XL model as a judge, which was verified to correlate with the high-capacity Gemini Ultra (\Cref{table:education-dialogue-experiment}). We then compare our best candidates using the same Gemini Ultra which generated the preference alignment feedback (\Cref{table:education-dialogue-experiment1}).
\newcolumntype{P}{>{\centering\arraybackslash}p{3.5em}}
\begin{table}[t]
\caption{Side-by-side evaluation for Education Dialogue using Flan-T5 XL as the prompted preference model. Each entry is the average preference of 1,600 conversations generated with row method $y$, over ones generated with column method $y'$. We evaluate each method using 3 different seeds, compute 3 × 3 comparisons matrix and report the mean (the standard deviation is reported in \Cref{sec: hyperparameters}).}
\centering
\begin{tabular}{c|c|cc|cc|ccc}
    \toprule
     & SL & \multicolumn{2}{c|}{Single-turn-reward\hspace*{-0.3ex}} &
     \multicolumn{2}{c|}{Single-turn-value\hspace*{-0.3ex}} & \multicolumn{3}{c}{Multi-turn\hspace*{-1ex}} \\
      \cmidrule{2-9}
      &  SFT & RLHF & Nash & RLHF & Nash & RLHF & MTPO & MTPO-$\tau$\hspace*{-0.8ex} \\ 
     \midrule
     SFT  & $-$ & $0.164$ & $0.347$ & $0.197$ & $0.324$ & $0.212$ & $0.091$ & $0.093$ \\
     RLHF-reward\hspace*{-0.5ex}  & $0.836$ & $-$ & $0.628$ & $0.515$ & $0.654$ & $0.399$ & $0.392$ & $0.354$ \\
     Nash-reward  & $0.653$ & $0.372$ & $-$ & $0.411$ & $0.51$ & $0.328$ & $0.281$ & $0.242$ \\
     RLHF-value  & $0.803$ & $0.485$ & $0.589$ & $-$ & $0.568$ & $0.408$ & $0.396$ & $0.366$ \\
     Nash-value  & $0.676$ & $0.346$ & $0.49$ & $0.432$ & $-$ & $0.45$ & $0.298$ & $0.27$ \\
     RLHF-multi  & $0.788$  & $0.601$ & $0.672$ & $0.592$ & $0.55$ & $-$ & $0.433$ & $0.412$ \\
     MTPO  & $0.909$ & $0.608$  & $0.719$  & $0.604$  & $0.702$ & $0.567$ & $-$ & $0.439$ \\
     MTPO-$\tau$ & $\bf{0.907}$ & $\bf{0.646}$ & $\bf{0.758}$ & $\bf{0.634}$ & $\bf{0.73}$ & $\bf{0.588}$ & $\bf{0.561}$ & $-$ \\
     \bottomrule
\end{tabular}\label{table:education-dialogue-experiment}
\vspace{-0.5em}
\end{table}

\paragraph{Multi vs. single turn.}
\Cref{table:education-dialogue-experiment,table:education-dialogue-experiment1} show that all multi-turn algorithms (MTPO and multi-turn RLHF) with conversation-level feedback significantly outperform the single-turn baselines, validating our hypothesis. 
We conjecture that it is attributed to several factors: 
First, the effect of a single-decision on the whole conversation is hard to capture, causing highly inaccurate single-turn reward/preference models.
Notably, this leads to inferior performance of Nash-MD compared to single-turn RLHF, since it optimizes to find Nash equilibrium of this inaccurate model while RLHF does not stray so far from the reference. 
Second, even if one could estimate the current policy's value, this estimate becomes biased when the policy changes during training. 
Finally, single-turn preferences consider only ``local'' decisions which share the same conversational path, and not how these decisions ``globally'' compare to other possible paths, as captured by the preference-based Q-function $\QReg^{\piT, \piT}$ (see \Cref{sec:algorithm}).
\vspace{-0.5em}
\paragraph{MTPO vs. multi-turn RLHF.} 
Comparing our three multi-turn algorithms, we see two main results.
First, the two variants of MTPO outperform multi-turn RLHF.
This is expected since the environment is not reward-based, and hence it extends the results of \citep{munos2023nash,calandriello2024human} from the single-turn case, and supports the theoretical claim that MTPO converges to the Nash policy while multi-turn RLHF converges to the optimal policy w.r.t the learned reward (which is based only on the reference policy).
Second, MTPO-$\tau$ outperforms MTPO.
While both algorithms converge to the same Nash equilibrium, we conjecture that the superior performance of MTPO-$\tau$ stems from the stochasticity that the mixture policy $\piRefT$ introduces.
Namely, $\piT$ might tend towards deterministic behavior, causing less informative feedback from self-play, as the two sampled trajectories would be very similar.
On the other hand, $\piRefT$ is more stochastic, providing diversity in the sampled trajectories.
\vspace{-1.5em}
\paragraph{Reward-based environment.} 
In an additional experiment, we test MTPO and multi-turn RLHF in the reward-based Car Dealer environment, where the goal is maximizing sale price (see \Cref{sec:experimental-setup}). We compare a standard policy-gradient RL algorithm against our algorithms in two scenarios: an online scenario where the reward or preference feedback is given using an online oracle, and an RLHF-like setting, where we first create preference data using the oracle, and then use it to fine-tune a (BT) reward and preference models. 
\Cref{table:car-dealer-experiment} shows that even though MTPO receives preferences instead of the explicit optimization target (rewards), it still learns as good as RL. 
Interestingly, MTPO recovers a slightly higher reward than multi-turn RLHF despite the fact the true preferences are sampled from a BT model.
This may imply that a preference model generalizes better than a BT-reward model, perhaps because it is independent of the sampling policy.
\newcolumntype{C}{>{\centering\arraybackslash}p{7em}}
\begin{table}
\caption{Side-by-side evaluation for Education Dialogue using Gemini Ultra as the prompted preference model. Each entry is the average preference of 1,000 conversations generated with row method $y$, over ones generated with column method $y'$.}
\centering
\begin{tabular}{cc|c|c|cc}
    \toprule
    & & SL  & Single-turn & \multicolumn{2}{c}{Multi-turn} \\
      \cmidrule{3-6}
    & &  \hspace{1ex} SFT \hspace{1ex} & RLHF-reward & \hspace*{1.5ex} RLHF \hspace{1ex} & MTPO-$\tau$ \hspace{1ex} \\ 
    \cmidrule{1-6}
        \multicolumn{1}{c||}{ \multirow{4}{*}{T5-Large (770M)}} & SFT  & $-$ & $0.206$ & $0.164$ & $0.086$   \\
        \multicolumn{1}{c||}{} & RLHF-reward & $0.794$ & $-$ & $0.452$ & $0.277$  \\
        \multicolumn{1}{c||}{} & RLHF-multi  &$0.836$ & $0.548$ & $-$ & $0.288$  \\
        \multicolumn{1}{c||}{} & MTPO-$\tau$ & $\bf{0.914}$ & $\bf{0.723}$ & $\bf{0.712}$ & $-$   \\
    \cmidrule{1-6}
    \multicolumn{1}{c||}{\multirow{4}{*}{T5-XL (3B)}}
        &  SFT & $-$ & $0.295$ & $0.101$ & $0.041$ \\
        \multicolumn{1}{c||}{} & RLHF-reward & $0.705$ & $-$ & $0.180$ & $0.069$ \\
        \multicolumn{1}{c||}{} &  RLHF-multi  & $0.899$  & $0.82$ & $-$ & $0.139$  \\
        \multicolumn{1}{c||}{} &  MTPO-$\tau$ & \hspace*{2ex} $\bf{0.959}$ \hspace*{2ex}&  $\bf{0.951}$ &  $\bf{0.861}$ & $-$ \\
     \bottomrule
\end{tabular}\label{table:education-dialogue-experiment1}
\vspace{-1.2em}
\end{table}
\begin{table}
\caption{Car Dealer experiments averaged across 5 seeds and reported with 95\% confidence interval.}
\centering
\begin{tabular}{c|CC|CC}
    \toprule
     & \multicolumn{2}{c|}{Online oracle} &
     \multicolumn{2}{c}{Model from preferences data} \\
      \cmidrule{2-5}
     &  Reward (RL) & MTPO & RLHF & MTPO \\ 
    \midrule
     Reward (Price) & 58.4 (0.3) &	57.1 (0.2) & 53.2 (0.3)	 & 58.6 (0.3) \\
     \bottomrule
     \end{tabular}\label{table:car-dealer-experiment}
\vspace{-1.2em}
\end{table}

\textbf{Limitations.}
This work presents a proof of concept for the potential of MTPO to improve existing single-turn techniques. Our experimental setup might be limited by the relatively small T5-based models and the use of prompt-based environments. We leave applications to state-of-the-art models and algorithms, and more realistic environments to future work.

\bibliography{bibliography}
\bibliographystyle{plainnat}

\newpage
\appendix

\addcontentsline{toc}{section}{Appendix} 

\part{Appendix} 
\parttoc 

\newpage

\section{Proofs for Section~\ref{sec:model}}
\label{sec:appendix-proofs-model}

\subsection{KL decomposition (proof of Lemma~\ref{lemma:KL-decomposition})}

\begin{lemma*}[restatement of \Cref{lemma:KL-decomposition}]
    Let $\pi,\pi'$ be two policies, then: $$\KLp{\pi}{\pi'} = \EE[\pi,p] \brk[s]*{\sum_{h=1}^H \KLshort{\pi}{\pi'}{x_h}}.$$
\end{lemma*}

\begin{proof}
    $\KLp{\cdot}{\cdot}$ is defined as the KL-divergence between the distributions that the policies induce over trajectories in the MDP (denoted by $\tau = (x_1,y_1,\dots,x_H,y_H,x_{H+1}$), formally:
    \begin{align*}
        \KLp{\pi}{\pi'} 
        &
        = 
        \sum_{\tau} \Pr_{\pi,\p} [\tau] \log \frac{\Pr_{\pi,\p} [\tau]}{\Pr_{\pi',\p} [\tau]}
        \\
        &
        =
        \sum_{\tau} \Pr_{\pi,\p} [\tau] \log \frac{\prod_{h=1}^H \pi(y_h \mid x_h) \p(x_{h+1} \mid x_h,y_h)}{\prod_{h=1}^H \pi'(y_h \mid x_h) \p(x_{h+1} \mid x_h,y_h)}
        \\
        &
        =
        \sum_{\tau} \Pr_{\pi,\p} [\tau] \log \prod_{h=1}^H \frac{\pi(y_h \mid x_h)}{\pi'(y_h \mid x_h)}
        \\
        &
        =
        \EE[\pi,p] \brk[s]*{\log \prod_{h=1}^H \frac{\pi(y_h \mid x_h)}{\pi'(y_h \mid x_h)}}
        \\
        &
        =
        \EE[\pi,p] \brk[s]*{\sum_{h=1}^H \log \frac{\pi(y_h \mid x_h)}{\pi'(y_h \mid x_h)}}
        \\
        &
        =
        \EE[\pi,p] \brk[s]*{\sum_{h=1}^H \KL{\pi(\cdot \mid x_h)}{\pi'(\cdot \mid x_h)}}
        .
        \qedhere
    \end{align*}
\end{proof}

\subsection{Existence and uniqueness of the Nash equilibrium (proof of Lemma~\ref{lemma:unique-nash-equilibrium})}

\begin{lemma*}[restatement of \Cref{lemma:unique-nash-equilibrium}]
    There exists a unique Nash equilibrium of the regularized preference model $\prefRegFunc$.
\end{lemma*}

\begin{proof}
    The existence of the Nash equilibrium is proved in \Cref{thm:nash-exists}.
    In order to prove the uniqueness, we use the fact that, from \Cref{thm:MTPO-convergence}, the algorithm MTPO produces a sequence of policies $\pi_t$ that converges to any Nash equilibrium $\pi^*$, in the sense that $\lim_{t\rightarrow\infty}\KLp{\pi^*}{\pi_t}=0$. From the definition of the KL-divergence between policies, we have $$\KLp{\pi^*}{\pi_t}=\EE[\pi^*,\p]\left[\sum_{h=1}^H \KL{\pi^*}{\pi_t}[x_h]\right]= \sum_{x\in\X} \rho
   ^{\pi^*}(x) \KL{\pi^*}{\pi_t}[x],$$
   where $\rho^{\pi^*}(x)$ is the probability to reach $x$ when following $\pi^*$.
   
   Now, the fixed point of the MTPO dynamics (\Cref{eq:algorithm-update-rule.2}) shows that any Nash equilibrium satisfies, for any $x\in X, y\in\Y$,
    $$\pi^*(y \mid x) \propto \piRef(y \mid x) e^{\frac 1\alpha \QReg^{\pi^*, \pi^*}(x, y) }.$$    
    In particular, we notice that any Nash equilibrium has the same support as $\piRef$, thus the set of reachable states under $\piRef$ is exactly the same set as the set of reachable states under any Nash equilibrium $\pi^*$.
    
    So, from any reachable state $x\in \X$ (i.e., such that $\rho^{\piRef}(x)>0$), we have that the sequence $\pi_t(\cdot \mid x)$ converges (in KL-divergence) to the Nash equilibrium $\pi^*(\cdot \mid x)$. Since a single sequence cannot converge to two different values, we have that the Nash equilibrium $\pi^*(\cdot \mid x)$ is uniquely defined in that state. Since the behavior generated by a policy depends on the policy at the set of states that are reacheable only, and since we have seen that all Nash equilibria have the same set of reachable states, we deduce the uniqueness of the policy defined by a Nash equilibrium. 
\end{proof}

\begin{theorem}
\label{thm:nash-exists}
The game defined by the payoff function $(\pi,\pi')\mapsto \prefReg{\pi}{\pi'}$ has a Nash equilibrium.
\end{theorem}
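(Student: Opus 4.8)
The plan is to apply a standard minimax/fixed-point argument for two-player zero-sum games. First, observe that the game is zero-sum (in fact anti-symmetric): by the symmetry assumption on the preference model, $\pref{\pi}{\pi'} = 1 - \pref{\pi'}{\pi}$, and the KL terms in $\prefReg{\pi}{\pi'} = \pref{\pi}{\pi'} - \alpha \KLp{\pi}{\piRef} + \alpha \KLp{\pi'}{\piRef}$ enter antisymmetrically, so $\prefReg{\pi}{\pi'} = 1 - \prefReg{\pi'}{\pi}$ (up to the additive constant, which is irrelevant). Hence a Nash equilibrium of this game is exactly a saddle point of $(\pi,\pi') \mapsto \prefReg{\pi}{\pi'}$, and it suffices to verify the hypotheses of Sion's minimax theorem (or von Neumann's, cited as \citep{v1928theorie}) together with the fact that the value is attained.

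The key steps, in order: (1) Reparametrize the strategy space. A policy $\pi$ is an element of $\prod_{x \in \X} \simplex_{\Y}$, a product of simplices, which is compact and convex in the obvious topology; however $\KLp{\pi}{\piRef}$ is only finite on policies supported within $\mathrm{supp}(\piRef)$ and blows up at the boundary. I would therefore work with the \emph{occupancy-measure} or simply restrict attention to the effective domain where the regularized payoff is finite, noting that any candidate equilibrium must lie in the relative interior dictated by $\piRef$'s support (as the fixed-point computation in the proof of \Cref{lemma:unique-nash-equilibrium} already anticipates). (2) Establish concavity in $\pi$ for fixed $\pi'$: $\pi \mapsto \pref{\pi}{\pi'}$ is linear (it is an expectation, hence bilinear in the pair, by the definition $\pref{\pi}{\pi'} = \EE[\pi,\p][\pref{x_{H+1}}{\pi'}]$), and $\pi \mapsto -\alpha \KLp{\pi}{\piRef}$ is concave by \Cref{lemma:KL-decomposition} together with convexity of KL-divergence on each simplex; the $+\alpha\KLp{\pi'}{\piRef}$ term does not involve $\pi$. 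Symmetrically, $\pi' \mapsto \prefReg{\pi}{\pi'}$ is convex. (3) Establish the needed semicontinuity: $\pref{\cdot}{\cdot}$ is continuous (bounded expectation of a $[0,1]$-valued kernel), and $\KLp{\cdot}{\piRef}$ is lower semicontinuous, so $\prefReg{\pi}{\pi'}$ is upper semicontinuous in $\pi$ and lower semicontinuous in $\pi'$ on the relevant compact domain. (4) Invoke Sion's minimax theorem to get $\max_\pi \min_{\pi'} \prefReg{\pi}{\pi'} = \min_{\pi'} \max_\pi \prefReg{\pi}{\pi'}$ with the outer extrema attained (compactness plus the semicontinuity from step 3, after truncating the domain away from the boundary where KL is $+\infty$); the common value with its attaining pair is the desired Nash equilibrium.

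The main obstacle I anticipate is the non-compactness/boundary issue: $\KLp{\pi}{\piRef}$ is unbounded, so one cannot naively apply a minimax theorem on a compact convex set with a continuous payoff. I would handle this by arguing that for any fixed $\pi$ the best response $\pi'$ stays in a compact sublevel set of $\KLp{\cdot}{\piRef}$ (since driving any coordinate of $\pi'$ toward the boundary of $\mathrm{supp}(\piRef)$ sends $+\alpha\KLp{\pi'}{\piRef} \to +\infty$, hurting the minimizer), and symmetrically for the maximizer (the $-\alpha\KLp{\pi}{\piRef}$ term penalizes $\pi$ near the boundary), so the game is effectively played on a compact convex subset on which $\prefReg{}{}$ is finite and has the required continuity/convexity properties. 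Once the domain is thus restricted, steps (2)–(4) go through routinely. As an alternative to this compactification, one could instead cite that the regularized payoff is strongly concave–convex (strict convexity of KL) and deduce existence of the saddle point directly from a Ky Fan inequality argument, but the Sion route above is the cleanest.
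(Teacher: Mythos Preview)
Your proposal has a genuine gap in step (2). You claim that $\pi \mapsto \pref{\pi}{\pi'} = \EE[\pi,\p]\brk[s]{\pref{x_{H+1}}{\pi'}}$ is linear in $\pi$, but this is false in the multi-turn setting ($H>1$): the trajectory law $\Pr_{\pi,\p}[\tau] = \prod_{h=1}^H \pi(y_h\mid x_h)\,\p(x_{h+1}\mid x_h,y_h)$ is a \emph{product} of $H$ factors of $\pi$ at different states, so the map $\pi \mapsto \EE[\pi,\p][\cdot]$ is multilinear in the per-state conditionals, not linear (or even concave) on the product simplex $\prod_x \simplex_{\Y}$. The same issue breaks your convexity claim for $\pi \mapsto \KLp{\pi}{\piRef}$: by \Cref{lemma:KL-decomposition} this equals $\EE[\pi,\p]\brk[s]{\sum_h \KLshort{\pi}{\piRef}{x_h}}$, where the outer expectation again depends nonlinearly on $\pi$, so convexity of KL on each simplex does not propagate to joint convexity. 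With concavity--convexity gone, Sion's theorem in the form you invoke does not apply.

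The paper's proof confronts exactly this obstruction. Instead of concavity, it establishes the weaker \emph{concave-convexlike} property (\Cref{lem:concave.convexlike}): for any $\pi_1,\pi_2$ and $c\in[0,1]$ there \emph{exists} a policy $\pi_c$ with $\prefReg{\pi_c}{\pi'} \ge c\,\prefReg{\pi_1}{\pi'} + (1-c)\,\prefReg{\pi_2}{\pi'}$ for all $\pi'$. The construction (\Cref{lem:exists-middle-policy}) builds $\pi_c$ so that its reach probabilities satisfy $\rho^{\pi_c}(x) = c\,\rho^{\pi_1}(x) + (1-c)\,\rho^{\pi_2}(x)$ at every state; this linearizes the preference term and, combined with convexity of the per-state KL, yields the inequality. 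Sion's minimax theorem then applies in its concavelike--convexlike form, and the anti-symmetry fixes the value at $1/2$ with equilibrium $(\pi^*,\pi^*)$. Your brief mention of occupancy measures is in fact the right instinct---the construction of $\pi_c$ is essentially the statement that the state-occupancy polytope is convex---but you would need to carry that reparametrization through rather than reverting to the (incorrect) linearity claim in policy space. Your compactness worries, by contrast, are largely unnecessary: $\Pi$ is compact, $\KLp{\cdot}{\piRef}$ is l.s.c.\ (allowing $+\infty$), so $\pi\mapsto\prefReg{\pi}{\pi'}$ is u.s.c.\ on $\Pi$ and the $\max$--$\min$ is attained without any truncation.
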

\begin{proof}
First we prove that there exists (at least) one max-min policy $\pi^*\in\arg\max_\pi\min_{\pi'} \prefReg{\pi}{\pi'}$. 

For any $\pi$, the map $\pi'\in\X \mapsto \prefReg{\pi}{\pi'}$ is continuous, thus upper semi-continuous (u.s.c.). We know that the pointwise minimum of u.s.c.~functions is also u.s.c.~(see, e.g.~\cite{Analysis2006}, Lemma~2.41). Thus the function $\pi\in\Pi \mapsto \min_{\pi'} \prefReg{\pi}{\pi'}$ is also u.s.c.~and since $\Pi$ is compact, we can apply Theorem 2.43 of \cite{Analysis2006} to deduce that this function attains a maximum value in $\Pi$ and that the set of maximizers is compact. Thus there exists (at least) one policy denoted by $\pi^* \in \arg\max_{\pi\in \Pi}\min_{\pi'\in \Pi} \prefReg{\pi}{\pi'}$.

Now from Lemma~\ref{lem:concave.convexlike} we know that the regularized preference model $\prefReg{\pi}{\pi'}$ defines a concave-convexlike game. Also for any $\pi'$, the map $\pi\mapsto \prefReg{\pi}{\pi'}$ is u.s.c. Thus we can apply the minimax Theorem 4.2 of \cite{Sion1958} to deduce that 
$$\max_\pi\min_{\pi'} \prefReg{\pi}{\pi'} = \min_{\pi'}\max_\pi \prefReg{\pi}{\pi'}.$$

We deduce from 
$$\frac 12 = \max_\pi \prefReg{\pi}{\pi} \geq \max_\pi\min_{\pi'} \prefReg{\pi}{\pi'} = \min_{\pi'}\max_\pi \prefReg{\pi}{\pi'}\geq \min_{\pi'} \prefReg{\pi'}{\pi'} = \frac 12,$$
that the value of the game is $1/2$, and that $\min_{\pi'}\prefReg{\pi^*}{\pi'} = 1/2$. Thus $(\pi^*,\pi^*)$ is a Nash equilibrium of the game defined by the regularized preference model $(\pi,\pi') \mapsto \prefReg{\pi}{\pi'}$.
\end{proof}

\begin{lemma}\label{lem:concave.convexlike}
The mapping $(\pi, \pi')\mapsto \prefReg{\pi}{\pi'}$ is concave-convexlike, which, in the context of a symmetric preference model, means that for any couple of policies $(\pi_1, \pi_2)$ and any coefficient $c\in [0,1]$, there exists a policy $\pi_c$ such that for any policy $\pi'$, we have
$$c \prefReg{\pi_1}{\pi'}+(1-c)\prefReg{\pi_2}{\pi'} \leq \prefReg{\pi_c}{\pi'}.$$
\end{lemma}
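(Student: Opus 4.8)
I want to show that the map $(\pi,\pi') \mapsto \prefReg{\pi}{\pi'}$ is concave-convexlike, which by symmetry reduces to the following: given policies $\pi_1,\pi_2$ and $c \in [0,1]$, I must exhibit a \emph{single} policy $\pi_c$ such that $c\,\prefReg{\pi_1}{\pi'} + (1-c)\,\prefReg{\pi_2}{\pi'} \le \prefReg{\pi_c}{\pi'}$ for all $\pi'$. The first thing to note is why this is not immediate: $\prefReg{\pi}{\pi'}$ is \emph{not} linear in $\pi$, both because the preference term $\pref{\pi}{\pi'}$ depends on the trajectory distribution $\Pr_{\pi,\p}$ (which is a nonlinear function of $\pi$ across turns), and because of the $-\alpha\KLp{\pi}{\piRef}$ regularizer, which is convex, not concave, in $\pi$ when $\pi$ is parametrized by its per-state conditionals. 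So naive mixing $\pi_c = c\pi_1 + (1-c)\pi_2$ at the conditional level will not work directly.

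**The key idea: mix trajectory distributions, not conditionals.** The right move is to take $\pi_c$ to be the policy whose induced \emph{occupancy / trajectory distribution} is the convex combination $c\Pr_{\pi_1,\p} + (1-c)\Pr_{\pi_2,\p}$. Concretely, define $\pi_c$ by the state-occupancy-weighted mixture: at each reachable state $x$, let $\pi_c(\cdot \mid x) \propto c\,\rho^{\pi_1}(x)\,\pi_1(\cdot \mid x) + (1-c)\,\rho^{\pi_2}(x)\,\pi_2(\cdot \mid x)$, where $\rho^{\pi}(x)$ is the probability of reaching $x$ under $\pi$. This is the standard ``mixture MDP policy'' construction, and a routine induction on $h$ shows $\Pr_{\pi_c,\p}[\tau] = c\Pr_{\pi_1,\p}[\tau] + (1-c)\Pr_{\pi_2,\p}[\tau]$ for every trajectory $\tau$. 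With this choice, the preference term behaves well: since $\pref{\pi}{\pi'} = \EE_{\tau \sim \Pr_{\pi,\p}}[\pref{x_{H+1}}{\pi'}]$ is \emph{linear} in the trajectory distribution of $\pi$ (with $\pi'$ held fixed), we get exactly $\pref{\pi_c}{\pi'} = c\,\pref{\pi_1}{\pi'} + (1-c)\,\pref{\pi_2}{\pi'}$.

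**Handling the KL term via convexity.** It remains to show the regularizer goes the right way, i.e. $\KLp{\pi_c}{\piRef} \le c\,\KLp{\pi_1}{\piRef} + (1-c)\,\KLp{\pi_2}{\piRef}$ — note the direction: we need $-\alpha\KLp{\pi_c}{\piRef}$ to be \emph{at least} the mixture, so we need the KL to be \emph{at most} the mixture, which is exactly joint convexity of KL-divergence in its first argument evaluated along $\Pr_{\pi_c,\p} = c\Pr_{\pi_1,\p} + (1-c)\Pr_{\pi_2,\p}$ against the fixed distribution $\Pr_{\piRef,\p}$. Since $\KLp{\pi}{\piRef} = \KL{\Pr_{\pi,\p}}{\Pr_{\piRef,\p}}$ by definition, and KL-divergence is jointly convex in the pair of distributions (hence convex in the first argument alone), this inequality holds immediately. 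Combining:
\begin{align*}
\prefReg{\pi_c}{\pi'}
&= \pref{\pi_c}{\pi'} - \alpha\KLp{\pi_c}{\piRef} + \alpha\KLp{\pi'}{\piRef} \\
&\ge c\,\pref{\pi_1}{\pi'} + (1-c)\,\pref{\pi_2}{\pi'} - \alpha\brk*{c\KLp{\pi_1}{\piRef} + (1-c)\KLp{\pi_2}{\piRef}} + \alpha\KLp{\pi'}{\piRef} \\
&= c\,\prefReg{\pi_1}{\pi'} + (1-c)\,\prefReg{\pi_2}{\pi'}.
\end{align*}

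**Main obstacle.** The only real subtlety is the occupancy-mixture construction of $\pi_c$ and verifying $\Pr_{\pi_c,\p} = c\Pr_{\pi_1,\p} + (1-c)\Pr_{\pi_2,\p}$ — in particular being careful about states that are unreachable under $\pi_c$ (where $\pi_c(\cdot\mid x)$ can be defined arbitrarily, say as $\piRef(\cdot\mid x)$, without affecting either the trajectory distribution or the KL, which only sees reachable states) and confirming the normalization is well-defined. Everything else (linearity of the preference term in the trajectory law, convexity of KL) is standard. I should also double-check that $\pi_c$ so defined is a genuine policy in $\Pi$, i.e. a valid mapping to $\Delta_\Y$, which it is by construction. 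The continuity/u.s.c.\ hypotheses needed to then invoke Sion's minimax theorem are handled separately in Theorem~\ref{thm:nash-exists}, so this lemma only needs the convexlike property itself.
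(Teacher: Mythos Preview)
Your construction of $\pi_c$ as the occupancy-weighted mixture is exactly right and matches the paper, and the preference-term argument is fine since only the final-state marginal matters and that does mix linearly. The gap is in your treatment of the KL term.

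You assert that ``a routine induction on $h$ shows $\Pr_{\pi_c,\p}[\tau] = c\Pr_{\pi_1,\p}[\tau] + (1-c)\Pr_{\pi_2,\p}[\tau]$ for every trajectory $\tau$.'' This is false in general. A minimal counterexample: take $H=2$, a single state $s$ at every step, two actions $\{a,b\}$, trivial transitions. Let $\pi_1$ play $a$ deterministically at both steps and $\pi_2$ play $b$ at both. Then $\rho^{\pi_1}(s)=\rho^{\pi_2}(s)=1$ everywhere, so your $\pi_c$ equals $c\pi_1+(1-c)\pi_2$ at each step \emph{independently}, giving $\Pr_{\pi_c,\p}[(s,a,s,a,s)]=c^2$, whereas the mixture gives $c\cdot 1+(1-c)\cdot 0=c$. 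What the induction actually yields (and what the paper isolates as \Cref{lem:exists-middle-policy}) is only that the \emph{state marginals} mix: $\rho^{\pi_c}(x_h)=c\rho^{\pi_1}(x_h)+(1-c)\rho^{\pi_2}(x_h)$. Since $\Pr_{\pi_c,\p}$ is not the convex combination of $\Pr_{\pi_1,\p}$ and $\Pr_{\pi_2,\p}$, you cannot invoke convexity of KL at the trajectory level to conclude $\KLp{\pi_c}{\piRef}\le c\,\KLp{\pi_1}{\piRef}+(1-c)\,\KLp{\pi_2}{\piRef}$.

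The fix is to work state-by-state via \Cref{lemma:KL-decomposition}, i.e., $\KLp{\pi}{\piRef}=\sum_h\sum_{x_h}\rho^\pi(x_h)\,\KLshort{\pi}{\piRef}{x_h}$. At each reachable $x_h$, your $\pi_c(\cdot\mid x_h)$ is by construction the convex combination of $\pi_1(\cdot\mid x_h)$ and $\pi_2(\cdot\mid x_h)$ with weights $c\rho^{\pi_1}(x_h)/\rho^{\pi_c}(x_h)$ and $(1-c)\rho^{\pi_2}(x_h)/\rho^{\pi_c}(x_h)$; convexity of the \emph{local} KL in its first argument then gives
\[
\rho^{\pi_c}(x_h)\,\KLshort{\pi_c}{\piRef}{x_h}\le c\,\rho^{\pi_1}(x_h)\,\KLshort{\pi_1}{\piRef}{x_h}+(1-c)\,\rho^{\pi_2}(x_h)\,\KLshort{\pi_2}{\piRef}{x_h},
\]
and summing over $h$ and $x_h$ recovers the desired inequality. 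This is exactly the paper's route.
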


\begin{proof}
Let us define the notion of reach probability: for any state $x_h\in \X_h$, let us write $\rho^\pi(x_h)$ the probability to reach the specific state $x_h\in \X_h$ when following $\pi$: $\rho^\pi(x_h) = \Pr_{\pi,p}[x_h]$. 
First notice we can represent the regularized preference $\prefReg{\pi}{\pi'}$ using reach probabilities:
\begin{align*}
    \prefReg{\pi}{\pi'} 
    &
    = 
    \sum_{x_{H+1}, x'_{H+1}\in\X_{H+1}} \rho^{\pi}(x_{H+1})\rho^{\pi'}(x'_{H+1}) \pref{x_{H+1}}{x'_{H+1}} 
    \\
    & \qquad
    - \alpha \sum_{h=1}^{H} \sum_{x_h\in\X_h} \rho^{\pi}(x_h) \KLshort{\pi}{\piRef}{x_h} -\rho^{\pi'}(x_h) \KLshort{\pi'}{\piRef}{x_h}
    .
\end{align*}
Now, consider two policies $\pi_1$ and $\pi_2$ and a coefficient $c\in [0,1]$. From \Cref{lem:exists-middle-policy} we have that there exists a policy $\pi_c$ such that for any $x_h$, we have $\rho^{\pi_c}(x_h) = c\rho^{\pi_1}(x_h)+(1-c) \rho^{\pi_2}(x_h)$. We can write
\begin{align*}
    \prefReg{\pi_c}{\pi'} 
    &
    =
    \sum_{x_{H+1}, x'_{H+1}\in\X_{H+1}} \left[ c\rho^{\pi_1}(x_{H+1})+(1-c) \rho^{\pi_2}(x_{H+1})\right]\rho^{\pi'}(x'_{H+1}) \pref{x_{H+1}}{x'_{H+1}} 
    \\
    & 
    \qquad
    - 
    \alpha \sum_{h=1}^{H} \sum_{x_h\in\X_h} \left[ c\rho^{\pi_1}(x_h)+(1-c) \rho^{\pi_2}(x_h)\right]  \KLshort{\pi_c}{\piRef}{x_h} 
    \\
    & 
    \qquad
    +
    \alpha \sum_{h=1}^{H} \sum_{x_h\in\X_h} \rho^{\pi'}(x_h) \KLshort{\pi'}{\piRef}{x_h}
    \\
    &
    =
    c \pref{\pi_1}{\pi'} + (1-c)  \pref{\pi_2}{\pi'} 
    \\
    & 
    \qquad
    - 
    \alpha \sum_{h=1}^{H} \sum_{x_h\in\X_h} \left[ c\rho^{\pi_1}(x_h)+(1-c) \rho^{\pi_2}(x_h)\right]  \KLshort{\pi_c}{\piRef}{x_h} 
    \\
    & 
    \qquad
    +
    \alpha \sum_{h=1}^{H} \sum_{x_h\in\X_h} \rho^{\pi'}(x_h) \KLshort{\pi'}{\piRef}{x_h}
    .
\end{align*}
Now from the convexity of $\pi\mapsto \KLshort{\pi}{\mu}{x_h}$, and the definition of $\pi_c$, we have that
\begin{align*}
    \rho^{\pi_c}(x_h) \KLshort{\pi_c}{\piRef}{x_h} 
    &
    =
    \left[ c\rho^{\pi_1}(x_h)+(1-c) \rho^{\pi_2}(x_h) \right] \KLshort{\pi_c}{\piRef}{x_h} 
    \\
    &
    \le 
    c\rho^{\pi_1}(x_h) \KLshort{\pi_1}{\piRef}{x_h} + (1-c) \rho^{\pi_2}(x_h) \KLshort{\pi_2}{\piRef}{x_h}
    .
\end{align*}
Thus 
\begin{align*}
    \prefReg{\pi_c}{\pi'} 
    &
    \ge
    c \pref{\pi_1}{\pi'} + (1-c)  \pref{\pi_2}{\pi'} 
    \\
    & 
    \qquad
    - 
    \alpha \sum_{h=1}^{H} \sum_{x_h\in\X_h} c\rho^{\pi_1}(x_h) \KLshort{\pi_1}{\piRef}{x_h} + (1-c) \rho^{\pi_2}(x_h) \KLshort{\pi_2}{\piRef}{x_h}  
    \\
    & 
    \qquad
    +
    \alpha \sum_{h=1}^{H} \sum_{x_h\in\X_h} c \rho^{\pi'}(x_h) \KLshort{\pi'}{\piRef}{x_h} + (1-c) \rho^{\pi'}(x_h) \KLshort{\pi'}{\piRef}{x_h}
    \\
    &
    =
    c \prefReg{\pi_1}{\pi'}+(1-c) \prefReg{\pi_2}{\pi'}
    .
    \qedhere
\end{align*}
\end{proof}

\begin{lemma}
\label{lem:exists-middle-policy}
For any state $x_h\in \X_h$, let us write $\rho^\pi(x_h)$ the probability to reach the specific state $x_h\in \X_h$ when following $\pi$, i.e., $\rho^\pi(x_h) = \Pr_{\pi,p}[x_h]$. 
Then, for any two policies $(\pi_1, \pi_2)$ and any coefficient $c\in [0,1]$, there exists a policy $\pi_c$ such that for any $h=1,\dots, H+1$, and any $x_h\in \X_h$, we have 
\begin{align*}
\rho^{\pi_c}(x_h) = c\rho^{\pi_1}(x_h)+(1-c) \rho^{\pi_2}(x_h).
\end{align*}
\end{lemma}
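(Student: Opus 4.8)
The plan is to construct $\pi_c$ explicitly by mixing the two policies *proportionally to their reach probabilities at each state*. The key observation is that a policy is determined by its conditional action distributions $\pi(\cdot \mid x_h)$ at each state, and the reach probability $\rho^\pi(x_h)$ factorizes along the unique path of states leading to $x_h$ (recall the state space is layered, $\X = \bigcupdot_h \X_h$). So I would define, for each state $x_h$ with $c\rho^{\pi_1}(x_h) + (1-c)\rho^{\pi_2}(x_h) > 0$,
\begin{align*}
    \pi_c(y_h \mid x_h)
    =
    \frac{c\,\rho^{\pi_1}(x_h)\,\pi_1(y_h \mid x_h) + (1-c)\,\rho^{\pi_2}(x_h)\,\pi_2(y_h \mid x_h)}{c\,\rho^{\pi_1}(x_h) + (1-c)\,\rho^{\pi_2}(x_h)}
    ,
\end{align*}
and define $\pi_c(\cdot \mid x_h)$ arbitrarily (say, matching $\piRef$) on unreachable states. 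This is a valid distribution over $\Y$ since it is a convex combination of the distributions $\pi_1(\cdot \mid x_h)$ and $\pi_2(\cdot \mid x_h)$ with weights summing to one.

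**Next I would** prove the claimed identity $\rho^{\pi_c}(x_h) = c\rho^{\pi_1}(x_h) + (1-c)\rho^{\pi_2}(x_h)$ by induction on $h$. The base case $h=1$ is immediate: $\rho^\pi(x_1) = 1$ for every policy. For the inductive step, I would write $\rho^{\pi_c}(x_{h+1}) = \sum_{x_h \in \X_h} \sum_{y_h \in \Y} \rho^{\pi_c}(x_h)\,\pi_c(y_h \mid x_h)\,\p(x_{h+1} \mid x_h, y_h)$, substitute the inductive hypothesis for $\rho^{\pi_c}(x_h)$ together with the definition of $\pi_c$. The crucial cancellation is that $\rho^{\pi_c}(x_h)\,\pi_c(y_h \mid x_h)$ — by the definition above and the inductive hypothesis — equals exactly $c\,\rho^{\pi_1}(x_h)\,\pi_1(y_h \mid x_h) + (1-c)\,\rho^{\pi_2}(x_h)\,\pi_2(y_h \mid x_h)$, i.e. the denominator in the definition of $\pi_c$ is precisely $\rho^{\pi_c}(x_h)$ under the inductive hypothesis, so it cancels. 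Plugging this in, the sum splits linearly and I recognize $\sum_{x_h, y_h} \rho^{\pi_i}(x_h)\,\pi_i(y_h \mid x_h)\,\p(x_{h+1} \mid x_h, y_h) = \rho^{\pi_i}(x_{h+1})$ for $i = 1, 2$, which gives the result.

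**The one subtlety to handle carefully** is the case where $c\rho^{\pi_1}(x_h) + (1-c)\rho^{\pi_2}(x_h) = 0$, so $\pi_c(\cdot \mid x_h)$ is defined arbitrarily. In that case both $\rho^{\pi_1}(x_h)$ and $\rho^{\pi_2}(x_h)$ vanish (for $c \in (0,1)$; the endpoints $c \in \{0,1\}$ are trivial since then $\pi_c = \pi_1$ or $\pi_2$), so the state $x_h$ contributes zero to the sum over $\X_h$ regardless of how $\pi_c$ acts there, and the target value $\rho^{\pi_c}(x_{h+1})$ receives no contribution from such states — consistent with the inductive hypothesis forcing $\rho^{\pi_c}(x_h) = 0$. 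I do not expect any real obstacle here; the only care needed is to phrase the induction so these measure-zero states are transparently handled, and to note the whole argument is just bookkeeping with the layered structure and the linearity of the reach-probability recursion. This is the gentlest of the lemmas in the chain, so the write-up should be short.
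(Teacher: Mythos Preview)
Your proposal is correct and follows essentially the same approach as the paper: the same explicit construction of $\pi_c$ as a reach-probability-weighted mixture, and the same induction on $h$ with the same cancellation in the inductive step. Your treatment is in fact slightly more careful than the paper's, since you explicitly address the case where the denominator $c\rho^{\pi_1}(x_h) + (1-c)\rho^{\pi_2}(x_h)$ vanishes, which the paper leaves implicit.
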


\begin{proof}
Let us define $\pi_c$ as a function of $\pi_1$, $\pi_2$, and $c$: for any $x\in\X$, $y\in \Y$,
\begin{align*}
\pi_c(y \mid x)=\frac{c \rho^{\pi_1}(x) \pi_1(y\mid x)+(1-c) \rho^{\pi_2}(x) \pi_2(y \mid x)}{c \rho^{\pi_1}(x) +(1-c) \rho^{\pi_2}(x)}.
\end{align*}
We now prove the lemma by induction on $h$. It holds for $h=1$ since we have 
$$\rho^{\pi_c}(x_1) = 1 = c + (1-c) = c\rho^{\pi_1}(x_1)+(1-c) \rho^{\pi_2}(x_1).$$ Now assume the claim holds for any $x_h\in\X_h$, then for any $x_{h+1}\in \X_{h+1}$,
\begin{align*}
    \rho^{\pi_c}(x_{h+1}) 
    &
    = 
    \sum_{x_h\in \X_h}\rho^{\pi_c}(x_{h}) \sum_{y_h\in \Y_h}  p(x_{h+1} \mid x_h, y_h) \pi_c(y_h \mid x_h) 
    \\
    &
    = 
    \sum_{x_h\in \X_h}\rho^{\pi_c}(x_{h}) \sum_{y_h\in \Y_h}  p(x_{h+1} \mid x_h, y_h) \frac{c \rho^{\pi_1}(x_h) \pi_1(y_h \mid x_h)+(1-c) \rho^{\pi_2}(x_h) \pi_2(y_h \mid x_h)}{c \rho^{\pi_1}(x_h) +(1-c) \rho^{\pi_2}(x_h)}
    \\
    &
    = 
    \sum_{x_h\in \X_h}\rho^{\pi_c}(x_{h}) \sum_{y_h\in \Y_h}  p(x_{h+1} \mid x_h, y_h) \frac{c \rho^{\pi_1}(x_h) \pi_1(y_h \mid x_h)+(1-c) \rho^{\pi_2}(x_h) \pi_2(y_h \mid x_h)}{\rho^{\pi_c}(x_h)}
    \\
    &
    = 
    \sum_{x_h\in \X_h} \sum_{y_h\in \Y_h}  p(x_{h+1} \mid x_h, y_h) \left[ c \rho^{\pi_1}(x_{h}) \pi_1(y_h \mid x_h)+ (1-c) \rho^{\pi_2}(x_{h}) \pi_2(y_h \mid x_h)\right]
    \\
    &
    = 
    c\rho^{\pi_1}(x_{h+1})+(1-c) \rho^{\pi_2}(x_{h+1})
    ,
\end{align*}
where the third inequality is by the induction hypothesis.
\end{proof}

\newpage

\section{Proofs for section~\ref{sec:algorithm}}
\label{sec:appendix-proofs-algorithm}

\subsection{Regularized preference-based Q-function (proof of Lemma~\ref{lemma:value-difference-main})}

\begin{lemma*}[restatement of \Cref{lemma:value-difference-main}]
    Let $\pi,\pi'$ be two policies.
    For every $x_{H+1} \in \X_{H+1}$, it holds that $\VReg^{\pi,\pi'}(x_{H+1}) = \pref{x_{H+1}}{\pi'}$.
    Furthermore, for every $h \in [H]$ the following recursive relations hold:
    \begin{align*}
        \VReg^{\pi,\pi'}(x_h)
        &
        =
        \EE[y_h \sim \pi(\cdot \mid x_h)]  \QReg^{\pi,\pi'}(x_h, y_h),
        \\
        \QReg^{\pi,\pi'}(x_h, y_h)
        &
        = 
        \EE[x_{h+1} \sim \p(\cdot \mid x_h, y_h)] \VReg^{\pi,\pi'}(x_{h+1}) - \alpha \KLshort{\pi}{\mu}{x_h}
        .
     \end{align*}
    Moreover, let $\bar{\pi}$ be a third policy, then the following value difference lemma holds:
    \begin{align*}
        &
        \prefReg{\pi}{\bar{\pi}}
         - \prefReg{\pi'}{\bar{\pi}}
        =
        \EE[\pi',\p] \brk[s]*{\sum\nolimits_{h=1}^H  \langle {\pi - \pi' , \QReg^{\pi,\bar{\pi}}} \rangle[x_h] + \alpha \KLshort{\pi'}{\mu}{x_h} -  \alpha \KLshort{\pi}{\mu}{x_h} }
        .
    \end{align*}
\end{lemma*}

\begin{proof}
    We prove the lemma by casting the preference-based RL problem as an adversarial MDP, see \Cref{sec:adversarial-MDP-appendix} for the details.
    Set $r^t(x_{H+1}) = \pref{x_{H+1}}{\pi'}$, then $\QReg^{\pi,\pi'} = \QReg^{\pi,t}$ (see \Cref{def:reg-val-generic} for the definition of the regularized Q-function).
    Now the lemma follows directly from \Cref{lemma: regularized q generic recursive relation,lemma: regularized generic value difference }.
\end{proof}

\subsection{Convergence of MTPO (Proof of Theorem~\ref{thm:MTPO-convergence})}

\begin{theorem*}[restatement of \Cref{thm:MTPO-convergence}]
    Let $\piOptReg$ be the Nash equilibrium of the regularized preference model $\mathcal{P}_\alpha$.
    Then, for the choice $\etaT = \frac{2}{\alpha (t + 2)}$, MTPO guarantees at every iteration $t$ that
    \begin{align*}
        \KLp{\piOptReg}{\piT}
        \le
        \frac{32 H \mathbb{Q}^2}{\alpha^2 (t + 1)}
        ,
    \end{align*}
    where $\mathbb{Q}$ is a bound on the magnitude of the Q-functions.
\end{theorem*}

\begin{proof}
    The theorem follows directly from \Cref{thm:reg-omd-convergence}.
\end{proof}

\begin{theorem}
\label{thm:reg-omd-convergence}
    Running the MTPO algorithm, we have that at every iteration $t$:
    \begin{align*}
        \KLp{\piOptReg}{\piT[t+1]}
        \le
        (1 - \etaT \alpha) \KLp{\piOptReg}{\piT} + 2 \etaT^2 \EE[\piOptReg,\p] \brk[s]*{\sum_{h=1}^H \norm{ \QReg^{\piT, \piT}(x_h, \cdot) - \alpha \log \frac{\piT(\cdot \mid x_h)}{\piRef(\cdot \mid x_h)}}_\infty^2}
        .
    \end{align*}
    Thus, for the choice $\etaT = \frac{2}{\alpha (t + 2)}$ we have
    \begin{align*}
        \KLp{\piOptReg}{\piT}
        \le
        \frac{8}{\alpha^2 (t + 1)} \cdot \max_t \EE[\piOptReg,\p] \brk[s]*{\sum_{h=1}^H \norm{ \QReg^{\piT, \piT}(x_h, \cdot) - \alpha \log \frac{\piT(\cdot \mid x_h)}{\piRef(\cdot \mid x_h)}}_\infty^2}
        .
    \end{align*}
    Finally,
    \begin{align*}
        \KLp{\piOptReg}{\piT}
        &
        \le
        \frac{32H}{t + 1} \cdot \max \brk[c]*{\frac{1}{\alpha^2} , H^2 \log^2 \muMin}
        ,
    \end{align*}
    where $\muMin = \min_{(x,y) \in \X \times \Y : \piRef(y \mid x) > 0} \piRef(y \mid x)$ is the minimal non-zero probability assigned by $\piRef$.
\end{theorem}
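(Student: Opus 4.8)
The plan is to establish the per-iteration recursion first, then unroll it with the prescribed schedule, and finally substitute the crude bound on the inner quantity. For the recursion, I would work state-by-state using the standard mirror-descent three-point identity applied to the MTPO update in \Cref{eq:algorithm-update-rule}. Fixing a state $x_h$, the update is a regularized linear optimization over the simplex whose solution is \Cref{eq:algorithm-update-rule.2}; comparing against the competitor $\piOptReg(\cdot\mid x_h)$ and using the generalized Pythagorean inequality for the KL Bregman divergence (together with strong convexity of the entropy), one gets
\begin{align*}
    \KLshort{\piOptReg}{\piT[t+1]}{x_h}
    &\leq
    (1 - \etaT \alpha)\, \KLshort{\piOptReg}{\piT}{x_h}
    + 2\etaT^2 \Bignorm{\QReg^{\piT,\piT}(x_h,\cdot) - \alpha\log\tfrac{\piT(\cdot\mid x_h)}{\piRef(\cdot\mid x_h)}}_\infty^2
    \\
    &\qquad
    + \etaT\brk*{ \inner{\piT - \piOptReg,\, \QReg^{\piT,\piT}}[x_h] + \alpha\KLshort{\piOptReg}{\piRef}{x_h} - \alpha\KLshort{\piT}{\piRef}{x_h} }.
\end{align*}
This is exactly the local bound quoted in the proof sketch of \Cref{thm:MTPO-convergence} (with the squared $\infty$-norm in place of $\mathbb{Q}^2$). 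The $(1-\etaT\alpha)$ contraction comes from the $\alpha$-KL regularization toward $\piRef$ being ``baked in'' to both $\piT[t+1]$ and $\piT$; the linear term is the familiar regret term of one MD step.

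Next I would take expectation of the local bound under $\EE[\piOptReg,\p]$ and sum over $h\in[H]$. The contraction and the quadratic remainder terms assemble, via \Cref{lemma:KL-decomposition}, into $(1-\etaT\alpha)\KLp{\piOptReg}{\piT}$ and $2\etaT^2\EE[\piOptReg,\p]\brk[s]*{\sum_h \norm{\cdots}_\infty^2}$ respectively. The crucial point is the linear advantage term: $\EE[\piOptReg,\p]\brk[s]*{\sum_{h=1}^H \inner{\piT-\piOptReg,\QReg^{\piT,\piT}}[x_h] + \alpha\KLshort{\piOptReg}{\piRef}{x_h} - \alpha\KLshort{\piT}{\piRef}{x_h}}$ is precisely (the negative of) the value-difference decomposition of \Cref{lemma:value-difference-main} applied with $\pi=\piOptReg$, $\pi'=\piT$, $\bar\pi=\piT$ — wait, more carefully: \Cref{lemma:value-difference-main} with $\pi \leftarrow \piT$, $\pi'\leftarrow\piOptReg$, $\bar\pi\leftarrow\piT$ gives $\prefReg{\piT}{\piT} - \prefReg{\piOptReg}{\piT} = \EE[\piOptReg,\p]\brk[s]*{\sum_h \inner{\piT-\piOptReg,\QReg^{\piT,\piT}}[x_h] + \alpha\KLshort{\piOptReg}{\mu}{x_h} - \alpha\KLshort{\piT}{\mu}{x_h}}$, so the bracketed expectation equals $\prefReg{\piT}{\piT} - \prefReg{\piOptReg}{\piT}$, which is $\leq 0$ because $\piOptReg$ is the Nash equilibrium (it is a best response to itself and, by anti-symmetry, $\prefReg{\piT}{\piOptReg}\leq \prefReg{\piOptReg}{\piOptReg}=\tfrac12=\prefReg{\piOptReg}{\piT}$; equivalently $\piOptReg\in\arg\max_\pi\min_{\pi'}\prefReg{\pi}{\pi'}$ forces $\prefReg{\cdot}{\piOptReg}\le\tfrac12$). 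Dropping this nonpositive term yields the clean recursion $\KLp{\piOptReg}{\piT[t+1]} \le (1-\etaT\alpha)\KLp{\piOptReg}{\piT} + 2\etaT^2 M_t$ with $M_t \triangleq \EE[\piOptReg,\p]\brk[s]*{\sum_h \norm{\QReg^{\piT,\piT}(x_h,\cdot) - \alpha\log\tfrac{\piT}{\piRef}}_\infty^2}$.

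Then I would unroll. With $\etaT = \tfrac{2}{\alpha(t+2)}$ one has $1-\etaT\alpha = \tfrac{t}{t+2}$, and a routine induction (or telescoping after multiplying by $(t+1)(t+2)$) gives $\KLp{\piOptReg}{\piT} \le \tfrac{8}{\alpha^2(t+1)}\max_t M_t$: indeed the product of contractions from step $s$ to $t$ telescopes as $\prod_{s<j\le t}\tfrac{j}{j+2} = \tfrac{(s+1)(s+2)}{(t+1)(t+2)}$, so $\KLp{\piOptReg}{\piT} \le \tfrac{2}{\alpha^2}\sum_{s=1}^{t-1}\tfrac{(s+1)(s+2)}{(t+1)(t+2)}\cdot\tfrac{4}{(s+2)^2}M_s \le \tfrac{8}{\alpha^2}\max_s M_s \cdot \tfrac{1}{(t+1)(t+2)}\sum_{s}1 \le \tfrac{8}{\alpha^2(t+1)}\max_s M_s$, absorbing constants. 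This gives the first two displays. Finally, for the last display I bound $M_t$: each entry of $\QReg^{\piT,\piT}$ is a (discounted-free) sum of one preference value in $[0,1]$ minus at most $H$ per-step KL terms $\KLshort{\piT}{\mu}{\cdot}$, and since all iterates share the support of $\piRef$ (visible from \Cref{eq:algorithm-update-rule.2}), each such KL is at most $\log\tfrac1{\muMin}$; likewise the log-ratio correction is bounded in $\infty$-norm by $\alpha\log\tfrac1{\muMin}$ up to the normalization, giving $\norm{\cdots}_\infty \le O(\alpha H\log\tfrac1{\muMin}) + 1$, hence $M_t \le H\cdot O(\max\{\alpha^2 H^2\log^2\muMin, 1\})$; substituting and simplifying constants produces $\KLp{\piOptReg}{\piT} \le \tfrac{32H}{t+1}\max\{\tfrac{1}{\alpha^2}, H^2\log^2\muMin\}$ (and the $\mathbb{Q}\le\max\{4\alpha H\log\tfrac1{\muMin},1\}$ bound cited in \Cref{thm:MTPO-convergence}). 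The main obstacle is getting the local mirror-descent inequality exactly right — in particular isolating the $(1-\etaT\alpha)$ contraction, which requires carefully writing $\piT[t+1]$ as the minimizer of a KL-regularized objective whose regularizer mixes $\KLshort{\cdot}{\mu}{x_h}$ and $\KLshort{\cdot}{\piT}{x_h}$ with weights $\alpha\etaT$ and $1-\alpha\etaT$, and then invoking the three-point/Pythagorean identity for that composite Bregman divergence; the rest is bookkeeping and the already-established \Cref{lemma:KL-decomposition,lemma:value-difference-main}.
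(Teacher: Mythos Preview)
Your plan is essentially the paper's own proof: apply a one-step mirror-descent inequality at each state (the paper invokes \cite[Lemma~2]{munos2023nash} for this), aggregate via \Cref{lemma:KL-decomposition}, collapse the linear advantage terms using \Cref{lemma:value-difference-main} into $\prefReg{\piT}{\piT}-\prefReg{\piOptReg}{\piT}\le 0$ by Nash optimality, and then unroll the recursion with $\etaT=\tfrac{2}{\alpha(t+2)}$. The paper packages the first two steps into a ``fundamental inequality'' for regularized adversarial MDPs (its \Cref{lemma:fundamental-inequality}), but this is only a modular restatement of exactly what you wrote.

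There is one genuine gap in your final step. To bound $M_t$ you need a two-sided control of $\log\tfrac{\piT(y\mid x)}{\piRef(y\mid x)}$. The upper bound $\log\tfrac{\piT}{\piRef}\le \log\tfrac{1}{\muMin}$ is indeed trivial from $\piT\le 1$; however, the lower bound is not: sharing the support of $\piRef$ does \emph{not} by itself prevent $\piT(y\mid x)$ from being arbitrarily small relative to $\piRef(y\mid x)$, so ``bounded in $\infty$-norm by $\alpha\log\tfrac{1}{\muMin}$ up to the normalization'' is not justified. The paper handles this in \Cref{lemma:bound-log-piT-mu} by unrolling the explicit update \eqref{eq:algorithm-update-rule.2}: one first bounds the log-partition function by $\etaT$ (via Jensen, using $\QReg^{\piT,t}\le 1$), which turns the update into the recursion
\[
\log\frac{\piT[t+1](y\mid x)}{\piRef(y\mid x)} \;\ge\; (1-\etaT\alpha)\,\log\frac{\piT(y\mid x)}{\piRef(y\mid x)} \;+\; \etaT\bigl(\alpha H\log\muMin - 1\bigr),
\]
and then telescopes with the given $\etaT$ to obtain $\log\tfrac{\piT}{\piRef}\ge H\log\muMin-\tfrac{1}{\alpha}$. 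Combined with the easy bound $\alpha H\log\muMin\le \QReg^{\piT,\piT}\le 1$ (your KL-per-step argument), this gives the two-sided control in \Cref{lem:bound-Q-magnitude} and hence the final display. Without this recursion you cannot close the constant.
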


\begin{proof}
    We prove the theorem by casting the preference-based RL problem as an adversarial MDP, see \Cref{sec:adversarial-MDP-appendix} for the details.
    Set $r^t(x_{H+1}) = \pref{x_{H+1}}{\piT}$, then $\QReg^{\piT,\piT} = \QReg^{\piT,t}$ (see \Cref{def:reg-val-generic} for the definition of the regularized Q-function).
    Now, MTPO is equivalent to running mirror descent policy optimization.
    Thus, by \Cref{lemma:fundamental-inequality} with $\pi = \piOptReg$,
    \begin{align*}
        \KLp{\piOptReg}{\piT[t+1]}
        &
        \le
        (1 - \etaT \alpha) \KLp{\piOptReg}{\piT} + 2 \etaT^2 \EE[\piOptReg,\p] \brk[s]*{\sum_{h=1}^H \norm{ \QReg^{\piT, \piT}(x_h, \cdot) - \alpha \log \frac{\piT(\cdot \mid x_h)}{\piRef(\cdot \mid x_h)}}_\infty^2}
        \\
        &
        \qquad
        +
        \etaT \brk*{\prefReg{\piT}{\piT} - \prefReg{\piOptReg}{\piT}}
        \\
        &
        \le
        (1 - \etaT \alpha) \KLp{\piOptReg}{\piT} + 2 \etaT^2 \EE[\piOptReg,\p] \brk[s]*{\sum_{h=1}^H \norm{ \QReg^{\piT, \piT}(x_h, \cdot) - \alpha \log \frac{\piT(\cdot \mid x_h)}{\piRef(\cdot \mid x_h)}}_\infty^2}
        ,
    \end{align*}
    where the second inequality optimality of $\piOptReg$.
    The last claim follows directly from \Cref{lem:bound-Q-magnitude}. Similarly to Nash-MD \cite{munos2023nash}, this is the Nash-equilibrium of regularized preference model, $\prefRegFunc$, following the minimax theorem \citep{v1928theorie}
\end{proof}

\subsection{MTPO with mixture policy (MTPO-\texorpdfstring{$\tau$}{})}

Inspired by Nash-MD, we present a variant of MTPO which makes use of the mixture policy $\piRefT$, which we call MTPO-$\tau$.
Define the regularized policy $\piRefT$ as a geometric mixture between the current policy $\piT$ and the reference policy $\piRef$:
\begin{align*}
    \piRefT(y \mid x)
    =
    \frac{\piT(y \mid x)^{1 - \etaT \alpha} \piRef(y \mid x)^{\etaT \alpha}}{\sum_{y' \in \Y} \piT(y' \mid x)^{1 - \etaT \alpha} \piRef(y' \mid x)^{\etaT \alpha}}
    .
\end{align*}
The MTPO-$\tau$ update rule is then:
\begin{align*}
    \piT[t+1](\cdot \mid x_h) 
    =
    \arg \max_{\pi} \etaT \inner{\pi, \QReg^{\piRefT, \piRefT}}[x_h] - \KLshort{\pi}{\piRefT}{x_h}
    \qquad
    \forall
    h \in [H] , x_h \in \X_h
    .
\end{align*}
where $\KL{\cdot}{\cdot}$ is the standard KL-divergence. 
It is well-known that his optimization problem has the following explicit closed-form:
\begin{align*}
    \piT[t+1](y_h \mid x_h) 
    =
    \frac{\piRefT(y_h \mid x_h) e^{\etaT \QReg^{\piRefT, \piRefT}(x_h, y_h)}}{\sum_{y'_h \in \Y} \piRefT(y'_h \mid x_h) e^{\etaT \QReg^{\piRefT, \piRefT}(x_h, y'_h)}}
    \qquad
    \forall
    h \in [H] , x_h \in \X_h , y_h \in \Y
    .
\end{align*}
Next, we show that MTPO-$\tau$ converges to the Nash equilibrium of the $\alpha$-regularized preference model.

\begin{theorem}
\label{thm:MTPO-tau-convergence}
    Let $\piOptReg$ be the Nash equilibrium of the regularized preference model $\mathcal{P}_\alpha$.
    Then, for the choice $\etaT = \frac{2}{\alpha (t + 2)}$, MTPO-$\tau$ guarantees at every iteration $t$ that
    \begin{align*}
        \KLp{\piOptReg}{\piRefT}
        \le
        \frac{9 H \mathbb{Q}^2}{\alpha^2 (t + 1)}
        ,
    \end{align*}
    where $\mathbb{Q}$ is a bound on the magnitude of the Q-functions.
\end{theorem}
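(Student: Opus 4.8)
The plan is to mirror the proof of \Cref{thm:reg-omd-convergence} (MTPO's convergence), adapting the mirror-descent machinery to the case where the self-play Q-function is evaluated at the mixture policy $\piRefT$ rather than the current policy $\piT$. First I would observe that the MTPO-$\tau$ update rule can be read as a single mirror-descent step whose ``starting point'' is $\piRefT$ instead of $\piT$: indeed $\piT[t+1](\cdot \mid x_h) = \arg\max_\pi \etaT \inner{\pi, \QReg^{\piRefT,\piRefT}}[x_h] - \KLshort{\pi}{\piRefT}{x_h}$, and since $\piRefT \propto \piT^{1-\etaT\alpha}\piRef^{\etaT\alpha}$ we have $\KLshort{\pi}{\piRefT}{x_h} = \etaT\alpha\KLshort{\pi}{\piRef}{x_h} + (1-\etaT\alpha)\KLshort{\pi}{\piT}{x_h} + \text{const}$, exactly recovering the MTPO-type regularized step but with $\QReg^{\piRefT,\piRefT}$ in place of $\QReg^{\piT,\piT}$. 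This lets me invoke the same fundamental per-state mirror-descent inequality (the analogue of \Cref{lemma:fundamental-inequality}) with the adversarial reward set to $r^t(x_{H+1}) = \pref{x_{H+1}}{\piRefT}$, so that $\QReg^{\piRefT,t} = \QReg^{\piRefT,\piRefT}$.

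Next I would cast the problem as an adversarial MDP exactly as in the referenced appendix, apply the per-state MD bound with comparator $\pi = \piOptReg$, and sum over states using the KL decomposition (\Cref{lemma:KL-decomposition}). The value-difference lemma (\Cref{lemma:value-difference-main}), now applied with $\pi = \piRefT$, $\pi' = \piOptReg$, $\bar\pi = \piRefT$, aggregates the local advantage terms into the global quantity $\prefReg{\piRefT}{\piRefT} - \prefReg{\piOptReg}{\piRefT} = \tfrac12 - \prefReg{\piOptReg}{\piRefT} \le 0$ by optimality of $\piOptReg$ (the value of the game is $1/2$, as established in \Cref{thm:nash-exists}). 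The subtlety is that the recursion I obtain naturally controls $\KLp{\piOptReg}{\piT[t+1]}$ in terms of $\KLp{\piOptReg}{\piRefT}$, whereas the theorem statement bounds $\KLp{\piOptReg}{\piRefT}$. I would handle this by noting that $\piRefT[t+1]$ is a geometric mixture of $\piT[t+1]$ and $\piRef$; since $\piOptReg$ itself has the form $\piOptReg \propto \piRef e^{\QReg^{\piOptReg,\piOptReg}/\alpha}$ (the fixed-point equation), one can show $\KLshort{\piOptReg}{\piRefT[t+1]}{x} \le (1-\etaT\alpha)\KLshort{\piOptReg}{\piT[t+1]}{x} + \etaT\alpha\KLshort{\piOptReg}{\piRef}{x}$ by convexity of KL in the mixture, or more cleanly just absorb the mixing into the telescoping so that the recursion closes directly on $\KLp{\piOptReg}{\piRefT}$. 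Unrolling with $\etaT = \frac{2}{\alpha(t+2)}$ gives the $O(H\mathbb{Q}^2/\alpha^2 t)$ rate; the slightly better constant ($9$ rather than $32$) presumably comes from the fact that the mixture step damps the second-order error term, which I would track carefully in the unrolling.

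The main obstacle I anticipate is handling the interplay between the two ``anchor'' policies cleanly: the Q-function, the mirror-descent proximal term, and the target-of-convergence are all tied to $\piRefT$, but $\piRefT$ is itself defined in terms of $\piT$, which changes each round. Making the recursion self-contained — i.e., a clean bound of the form $\KLp{\piOptReg}{\piRefT[t+1]} \le (1-\etaT\alpha)\KLp{\piOptReg}{\piRefT} + c\,\etaT^2\mathbb{Q}^2 H$ — requires carefully relating $\KLp{\piOptReg}{\piRefT[t+1]}$ back to $\KLp{\piOptReg}{\piT[t+1]}$ and then to the previous iterate, and verifying that no boundary terms are lost when the supports of $\piRef$, $\piRefT$, and $\piOptReg$ are matched (they coincide, as argued in the proof of \Cref{lemma:unique-nash-equilibrium}). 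Once that bookkeeping is in place, the rest is the same unrolling argument as in \Cref{thm:reg-omd-convergence}, combined with the Q-magnitude bound $\mathbb{Q} \le \max\{4\alpha H \log\frac{1}{\muMin}, 1\}$ from \Cref{lem:bound-Q-magnitude}.
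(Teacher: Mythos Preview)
Your high-level plan is correct and matches the paper's: cast as an adversarial MDP with $r^t(x_{H+1}) = \pref{x_{H+1}}{\piRefT}$, apply a per-state MD inequality with comparator $\piOptReg$, sum via \Cref{lemma:KL-decomposition}, use \Cref{lemma:value-difference-main} to aggregate, and drop $\prefReg{\piRefT}{\piRefT} - \prefReg{\piOptReg}{\piRefT} \le 0$. However, the bookkeeping you describe has a real gap.

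You aim for a self-contained recursion $\KLp{\piOptReg}{\piRefT[t+1]} \le (1-\etaT\alpha)\KLp{\piOptReg}{\piRefT} + c\,\etaT^2 H\mathbb{Q}^2$, obtained by first getting $\KLp{\piOptReg}{\piT[t+1]} \le \KLp{\piOptReg}{\piRefT} + \dots$ from the MD step, and then applying the mixture inequality at step $t{+}1$. This does not close cleanly: the raw MD step yields the advantage $\inner{\piRefT-\piOptReg,\QReg^{\piRefT,\piRefT}}$ \emph{only}, whereas the value-difference lemma needs the additional KL terms $\alpha\KLshort{\piOptReg}{\mu}{x_h} - \alpha\KLshort{\piRefT}{\mu}{x_h}$. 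Without them you are left with a residual $\etaT\alpha\,\EE[\piOptReg,\p]\sum_h \KLshort{\piRefT}{\mu}{x_h}$, which is first order in $\etaT$; when unrolled with $\etaT = 2/(\alpha(t+2))$ it contributes a non-vanishing term and the bound does not decay.

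The paper resolves this by applying the mixture inequality (Munos Lemma~1) at step $t$, not $t{+}1$: decomposing $\KLshort{\piOptReg}{\piRefT}{x_h} \le (1-\etaT\alpha)\KLshort{\piOptReg}{\piT}{x_h} + \etaT\alpha\KLshort{\piOptReg}{\mu}{x_h} - \etaT\alpha\KLshort{\piRefT}{\mu}{x_h}$ supplies \emph{exactly} the missing KL terms, which then combine with the advantage via \Cref{lemma:value-difference-main} to give $\prefReg{\piRefT}{\piRefT} - \prefReg{\piOptReg}{\piRefT} \le 0$. The recursion therefore closes on $\KLp{\piOptReg}{\piT}$, not on $\KLp{\piOptReg}{\piRefT}$, yielding $\KLp{\piOptReg}{\piT} \le 8H\mathbb{Q}^2/(\alpha^2(t+1))$. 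Only at the end is one more mixture inequality applied, together with the bound $\KLp{\piOptReg}{\piRef} \le 1/(2\alpha)$ (from $\prefReg{\piOptReg}{\piRef}\ge\prefReg{\piOptReg}{\piOptReg}=1/2$), to pass to $\KLp{\piOptReg}{\piRefT}$.

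Your guess about the constant is also off: the $8$ (versus MTPO's $32$) is because the MTPO-$\tau$ second-order term is $\norm{\QReg^{\piRefT,\piRefT}}_\infty^2$ rather than $\norm{\QReg^{\piT,\piT}-\alpha\log(\piT/\piRef)}_\infty^2$, avoiding a factor~$4$ from the triangle inequality; the extra $+1$ making it $9$ comes from the final conversion step via $\KLp{\piOptReg}{\piRef}\le 1/(2\alpha)$, not from any damping of the error term.
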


\begin{proof}
    The theorem follows directly from \Cref{thm:nash-md-convergence,cor:nash-md-convergence}.
\end{proof}

\begin{theorem}
\label{thm:nash-md-convergence}
    Running the MTPO-$\tau$ algorithm, we have that at every iteration $t$:
    \begin{align*}
        \KLp{\piOptReg}{\piT[t+1]}
        \le
        (1 - \etaT \alpha) \KLp{\piOptReg}{\piT} + 2 \etaT^2 \EE[\piOptReg,\p] \brk[s]*{\sum_{h=1}^H \norm{\QReg^{\piRefT, \piRefT}(x_h,\cdot)}_\infty^2}
        .
    \end{align*}
    Thus, for the choice $\etaT = \frac{2}{\alpha (t + 2)}$ we have
    \begin{align*}
        \KLp{\piOptReg}{\piT}
        \le
        \frac{8}{\alpha^2 (t + 1)} \cdot \max_t \EE[\piOptReg,\p] \brk[s]*{\sum_{h=1}^H \norm{\QReg^{\piRefT, \piRefT}(x_h,\cdot)}_\infty^2}
        .
    \end{align*}
    Finally,
    \begin{align*}
        \KLp{\piOptReg}{\piT}
        \le
        \frac{8H}{t + 1} \cdot \max \brk[c]*{\frac{1}{\alpha^2} , H^2 \log^2 \muMin}
        .
    \end{align*}
\end{theorem}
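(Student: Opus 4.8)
The plan is to prove Theorem~\ref{thm:nash-md-convergence} by casting the MTPO-$\tau$ update as a mirror-descent policy optimization step on the adversarial MDP and then applying the fundamental per-state mirror-descent inequality, exactly in parallel with the proof of \Cref{thm:reg-omd-convergence}. First I would fix iteration $t$, set the adversarial reward to $r^t(x_{H+1}) = \pref{x_{H+1}}{\piRefT}$, so that $\QReg^{\piRefT,\piRefT} = \QReg^{\piRefT,t}$ in the notation of \Cref{def:reg-val-generic}. The key observation is that the MTPO-$\tau$ update rule $\piT[t+1] = \arg\max_\pi \etaT \inner{\pi, \QReg^{\piRefT,\piRefT}}[x_h] - \KLshort{\pi}{\piRefT}{x_h}$ is precisely an unregularized mirror-descent step from the anchor $\piRefT$; but since $\piRefT(y\mid x) \propto \piT(y\mid x)^{1-\etaT\alpha}\piRef(y\mid x)^{\etaT\alpha}$, unfolding the proportionality shows this is the same as the $\alpha$-regularized mirror-descent step from anchor $\piT$ with reward $\QReg^{\piRefT,\piRefT}$. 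Thus \Cref{lemma:fundamental-inequality} applies with $\pi = \piOptReg$, yielding
\begin{align*}
    \KLp{\piOptReg}{\piT[t+1]}
    &\le
    (1-\etaT\alpha)\KLp{\piOptReg}{\piT}
    + 2\etaT^2 \EE[\piOptReg,\p]\brk[s]*{\sum_{h=1}^H \norm{\QReg^{\piRefT,\piRefT}(x_h,\cdot)}_\infty^2}
    \\
    &\qquad
    + \etaT\brk*{\prefReg{\piRefT}{\piRefT} - \prefReg{\piOptReg}{\piRefT}}
    .
\end{align*}
The antisymmetry gives $\prefReg{\piRefT}{\piRefT} = \tfrac12$, while $\prefReg{\piOptReg}{\piRefT} \ge \tfrac12$ since $\piOptReg$ attains the value $\tfrac12$ of the game against every opponent (by \Cref{lemma:unique-nash-equilibrium} and \Cref{thm:nash-exists}); hence the extra term is non-positive and the recursion drops out. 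Note the norm here is $\norm{\QReg^{\piRefT,\piRefT}(x_h,\cdot)}_\infty$ rather than the $Q$-minus-log-ratio expression from \Cref{thm:reg-omd-convergence}, because the anchor absorbed into $\piRefT$ already accounts for the $\alpha\log(\piT/\piRef)$ shift — this is the structural reason MTPO-$\tau$ enjoys the slightly cleaner constant $9$ instead of $32$ in \Cref{thm:MTPO-tau-convergence}.

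Next I would unroll the one-step recursion with the stepsize $\etaT = \frac{2}{\alpha(t+2)}$. Writing $D_t = \KLp{\piOptReg}{\piT}$ and $C = \max_t \EE[\piOptReg,\p]\brk[s]*{\sum_{h=1}^H \norm{\QReg^{\piRefT,\piRefT}(x_h,\cdot)}_\infty^2}$, the recursion reads $D_{t+1} \le (1-\tfrac{2}{t+2}) D_t + \tfrac{8C}{\alpha^2(t+2)^2} = \tfrac{t}{t+2}D_t + \tfrac{8C}{\alpha^2(t+2)^2}$. Multiplying through by $(t+1)(t+2)$ gives a telescoping bound of the standard form, producing $D_t \le \tfrac{8C}{\alpha^2(t+1)}$; this is the same induction already carried out in \Cref{thm:reg-omd-convergence}, so I would simply cite it. For the final explicit bound I would invoke \Cref{lem:bound-Q-magnitude} to bound $\mathbb{Q} \le \max\{4\alpha H\log\tfrac{1}{\muMin}, 1\}$, and a crude bound on the per-state $\ell_\infty$ norm of the regularized $Q$-function (it is bounded by the preference term, which lies in $[0,1]$, plus the accumulated KL penalty of at most $H\alpha\log\tfrac1{\muMin}$ in magnitude, since $\piRefT$ has the same support as $\piRef$), so that $\sum_{h=1}^H \norm{\QReg^{\piRefT,\piRefT}(x_h,\cdot)}_\infty^2 \le H\mathbb{Q}^2 \le 4H\max\{\tfrac1{\alpha^2}, \ldots\}$-type expression; combining constants delivers $\KLp{\piOptReg}{\piT} \le \frac{8H}{t+1}\max\{\tfrac{1}{\alpha^2}, H^2\log^2\muMin\}$.

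Finally, to get Theorem~\ref{thm:MTPO-tau-convergence} itself — the bound on $\KLp{\piOptReg}{\piRefT}$ rather than on $\piT$ — I would apply \Cref{cor:nash-md-convergence}, which controls the distance from $\piOptReg$ to the mixture policy in terms of the distance to $\piT$; since $\piRefT$ is a geometric interpolation toward $\piRef$ with vanishing weight $\etaT\alpha = \frac{2}{t+2}$, one loses only a constant factor, pushing the constant from $8$ up to $9$. The main obstacle I anticipate is verifying cleanly that the MTPO-$\tau$ update is genuinely an instance of the regularized mirror-descent template of \Cref{lemma:fundamental-inequality} with the right anchor and reward — i.e., that the geometric-mixture anchor $\piRefT$ exactly reproduces the $\alpha$-regularization term — and making sure the sign of the game-value term $\prefReg{\piOptReg}{\piRefT} - \tfrac12 \ge 0$ is justified against the \emph{mixture} opponent $\piRefT$, not just against arbitrary policies; both follow from the minimax structure but need to be spelled out so the per-state decomposition via \Cref{lemma:KL-decomposition} goes through without the log-ratio correction term appearing.
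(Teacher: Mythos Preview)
There is a genuine gap in how you derive the one-step recursion. You observe correctly that the MTPO-$\tau$ update coincides with the Algorithm~1 update once $\QReg^{\piT,t}$ is replaced by $\QReg^{\piRefT,\piRefT}$, but \Cref{lemma:fundamental-inequality} is not a black box that accepts an arbitrary $Q$-function and returns your displayed bound. If you rerun its proof with $\QReg^{\piRefT,t}$ in place of $\QReg^{\piT,t}$, two things break. First, the per-state MD step is anchored at $\piT$, so the $\delta$-vector is $\etaT\,\QReg^{\piRefT,t}(x_h,\cdot) - \etaT\alpha\log\tfrac{\piT(\cdot\mid x_h)}{\piRef(\cdot\mid x_h)}$ and the second-order term carries the log-ratio correction --- contrary to the theorem statement. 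Second, and more seriously, after aggregating over $h$ under $\EE[\piOptReg,\p]$ the first-order term is $\sum_h \inner{\piT - \piOptReg, \QReg^{\piRefT,t}}[x_h]$ together with the KL pieces $\alpha\KLshort{\piOptReg}{\piRef}{x_h} - \alpha\KLshort{\piT}{\piRef}{x_h}$; to telescope via \Cref{lemma: regularized generic value difference } into $\prefReg{\piRefT}{\piRefT} - \prefReg{\piOptReg}{\piRefT}$ you would need $\piRefT$, not $\piT$, in both the inner product and the second KL term. What remains is a cross term $\sum_h \inner{\piT - \piRefT, \QReg^{\piRefT,t}}[x_h] + \alpha\KLshort{\piRefT}{\piRef}{x_h} - \alpha\KLshort{\piT}{\piRef}{x_h}$ under $\EE[\piOptReg,\p]$, which has no obvious sign. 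Your claim that ``the anchor absorbed into $\piRefT$ already accounts for the $\alpha\log(\piT/\piRef)$ shift'' is true at the level of the update formula, but not at the level of the second-order or value-difference terms in the analysis.

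The paper's route is different: it applies the per-state MD inequality with anchor $\piRefT$ (so $\delta = \etaT\,\QReg^{\piRefT,t}$, giving the clean norm directly), and then controls $\KLshort{\piOptReg}{\piRefT}{x_h}$ by $(1-\etaT\alpha)\KLshort{\piOptReg}{\piT}{x_h} + \etaT\alpha\KLshort{\piOptReg}{\piRef}{x_h} - \etaT\alpha\KLshort{\piRefT}{\piRef}{x_h}$ via the geometric-mixture KL inequality of \citet[Lemma~1]{munos2023nash}. This is packaged as \Cref{lemma:fundamental-inequality-mixture} and is what actually yields your displayed recursion, with exactly the $\piRefT$-based KL terms that the value-difference lemma needs. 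A smaller issue: for the final explicit constant you should use \Cref{lemma:naive-bound-Q}, which bounds $\QReg^{\piRefT,t}$ directly in $[\alpha H\log\muMin,\,1]$; \Cref{lem:bound-Q-magnitude} bounds the shifted quantity relevant to Algorithm~1 and is not what is used here.
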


\begin{proof}
    We prove the theorem by casting the preference-based RL problem as an adversarial MDP, see \Cref{sec:adversarial-MDP-appendix} for the details.
    Set $r^t(x_{H+1}) = \pref{x_{H+1}}{\piRefT}$, then $\QReg^{\piRefT,\piRefT} = \QReg^{\piRefT,t}$ (see \Cref{def:reg-val-generic} for the definition of the regularized Q-function).
    Now, Nash-MD is equivalent to running mixture mirror descent policy optimization.
    Thus, by \Cref{lemma:fundamental-inequality-mixture} with $\pi = \piOptReg$,
    \begin{align*}
        \KLp{\piOptReg}{\piT[t+1]}
        &
        \le
        (1 - \etaT \alpha) \KLp{\piOptReg}{\piT} + 2 \etaT^2 \EE[\piOptReg,\p] \brk[s]*{\sum_{h=1}^H \norm{ \QReg^{\piRefT, \piRefT}(x_h, \cdot)}_\infty^2}
        \\
        &
        \qquad
        +
        \etaT \brk*{\prefReg{\piRefT}{\piRefT} - \prefReg{\piOptReg}{\piRefT}}
        \\
        &
        \le
        (1 - \etaT \alpha) \KLp{\piOptReg}{\piT} + 2 \etaT^2 \EE[\piOptReg,\p] \brk[s]*{\sum_{h=1}^H \norm{ \QReg^{\piRefT, \piRefT}(x_h, \cdot)}_\infty^2}
        ,
    \end{align*}
    where the second inequality optimality of $\piOptReg$.
    The last claim follows directly from \Cref{lemma:naive-bound-Q}.
\end{proof}

\begin{corollary}
\label{cor:nash-md-convergence}
    Running the MTPO-$\tau$ algorithm, we have that at every iteration $t$:
    \begin{align*}
        \KLp{\piOptReg}{\piRefT}
        \le
        (1 - \etaT \alpha) \KLp{\piOptReg}{\piT} + \frac{\etaT}{2}
        .
    \end{align*}
    Thus, for the choice $\etaT = \frac{2}{\alpha (t + 2)}$ we have
    \begin{align*}
        \KLp{\piOptReg}{\piRefT[t]}
        \le 
        \frac{9H}{t + 1} \cdot \max \brk[c]*{\frac{1}{\alpha^2} , H^2 \log^2 \muMin}
        .
    \end{align*}
\end{corollary}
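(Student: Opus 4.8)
The plan is to start from the per-iteration bound of \Cref{thm:nash-md-convergence} and convert the statement about $\piT[t+1]$ into a statement about $\piRefT[t+1]$ (the quantity the corollary actually controls), then unroll the resulting recursion with the prescribed stepsize. First I would invoke \Cref{thm:nash-md-convergence} to get
\begin{align*}
    \KLp{\piOptReg}{\piT[t+1]}
    \le
    (1 - \etaT \alpha) \KLp{\piOptReg}{\piT} + 2 \etaT^2 \EE[\piOptReg,\p] \brk[s]*{\sum_{h=1}^H \norm{\QReg^{\piRefT, \piRefT}(x_h,\cdot)}_\infty^2}
    .
\end{align*}
The first reduction is to relate $\KLp{\piOptReg}{\piRefT[t+1]}$ to $\KLp{\piOptReg}{\piT[t+1]}$. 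Using \Cref{lemma:KL-decomposition} this reduces to a per-state claim: at each state $x$, since $\piRefT[t+1](\cdot\mid x) \propto \piT[t+1](\cdot\mid x)^{1-\etaT[t+1]\alpha}\piRef(\cdot\mid x)^{\etaT[t+1]\alpha}$ is a geometric mixture, and $\piOptReg(\cdot\mid x)\propto \piRef(\cdot\mid x)e^{\frac1\alpha \QReg^{\piOptReg,\piOptReg}(x,\cdot)}$ shares the support of $\piRef$, a direct computation of the KL against a geometric mixture (or convexity of $q\mapsto\KL{\piOptReg(\cdot\mid x)}{q}$ along the geometric path, as in the Nash-MD analysis of \cite{munos2023nash}) gives $\KLshort{\piOptReg}{\piRefT[t+1]}{x}\le (1-\etaT[t+1]\alpha)\KLshort{\piOptReg}{\piT[t+1]}{x} + \etaT[t+1]\alpha\,\KLshort{\piOptReg}{\piRef}{x}$; aggregating over states via \Cref{lemma:KL-decomposition} and bounding $\KLp{\piOptReg}{\piRef}\le H\log\frac1{\muMin}$ (each local KL is at most $\log\frac1{\muMin}$ since $\piOptReg$ and $\piRef$ share support), this yields $\KLp{\piOptReg}{\piRefT[t+1]}\le (1-\etaT[t+1]\alpha)\KLp{\piOptReg}{\piT[t+1]} + \etaT[t+1]\alpha H\log\frac1{\muMin}$. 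Combining with the displayed recursion, and using the crude bound $\norm{\QReg}_\infty\le \mathbb{Q}\le\max\{1, \text{something}\cdot H\log\frac1{\muMin}\}$ from \Cref{lemma:naive-bound-Q}/\Cref{lem:bound-Q-magnitude} to replace the quadratic term by $\frac{\etaT}{2}$ after absorbing constants, gives the first display of the corollary, $\KLp{\piOptReg}{\piRefT}\le(1-\etaT\alpha)\KLp{\piOptReg}{\piT}+\frac{\etaT}{2}$ (with the harmless off-by-one of replacing $\etaT[t+1]$ by $\etaT$ handled by the monotonicity of $\etaT$).

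For the second display I would unroll. Writing $D_t = \KLp{\piOptReg}{\piT}$ and $E_t=\KLp{\piOptReg}{\piRefT[t]}$, \Cref{thm:nash-md-convergence} already gives $D_t \le \frac{8}{\alpha^2(t+1)}\cdot C$ where $C = \max_t \EE[\piOptReg,\p][\sum_h \norm{\QReg^{\piRefT,\piRefT}(x_h,\cdot)}_\infty^2] \le H\mathbb{Q}^2 \le 4H\max\{\frac1{\alpha^2}, H^2\log^2\muMin\}\alpha^2$ — more precisely one plugs the $\mathbb{Q}$ bound to get $D_t \le \frac{32H}{t+1}\max\{\frac1{\alpha^2},H^2\log^2\muMin\}$, exactly as in \Cref{thm:reg-omd-convergence}. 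Then from $E_{t}\le (1-\etaT[t-1]\alpha)D_{t} + \frac{\etaT[t-1]}{2}$ (shifting indices appropriately) and $\etaT=\frac{2}{\alpha(t+2)}\le \frac1\alpha$, I bound $E_t \le D_t + \frac{\etaT[t-1]}{2} \le \frac{32H}{t+1}\max\{\tfrac1{\alpha^2},H^2\log^2\muMin\} + \frac{1}{\alpha(t+1)}$, and since $\frac1{\alpha(t+1)}\le \frac{H}{t+1}\max\{\frac1{\alpha^2},H^2\log^2\muMin\}\cdot\alpha$ — or more simply since $\frac{1}{\alpha^2}\le \max\{\frac1{\alpha^2},H^2\log^2\muMin\}$ and $H\ge1$ — the two terms combine into $\frac{9H}{t+1}\max\{\frac1{\alpha^2},H^2\log^2\muMin\}$ by checking the constant $32+\varepsilon \le \dots$; here a small amount of care with constants is needed but it is routine arithmetic with the explicit $\etaT$.

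The main obstacle is the per-state geometric-mixture KL inequality $\KLshort{\piOptReg}{\piRefT[t+1]}{x}\le(1-\etaT[t+1]\alpha)\KLshort{\piOptReg}{\piT[t+1]}{x}+\etaT[t+1]\alpha\KLshort{\piOptReg}{\piRef}{x}$: one must verify it cleanly, noting that the normalization constant of the geometric mixture only helps (it contributes a $\log$-partition term with a favorable sign by the log-sum/Jensen inequality), and that $\piOptReg$ having the same support as $\piRef$ is what makes all the KL terms finite. This is precisely the step where the regularization toward $\piRef$ and the fixed-point characterization of $\piOptReg$ from \Cref{lemma:unique-nash-equilibrium} are used; everything after it is bookkeeping with the stepsize schedule and the $\mathbb{Q}$ bound.
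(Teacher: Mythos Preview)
Your proposal has the right core ingredient (the per-state geometric mixture inequality from \cite{munos2023nash}, Lemma~1) but it misses the one idea that actually produces the constant $\tfrac{\etaT}{2}$, and it takes an unnecessary detour through the recursion on $\piT[t+1]$.

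First, notice that the first display of the corollary relates $\KLp{\piOptReg}{\piRefT}$ to $\KLp{\piOptReg}{\piT}$ at the \emph{same} index $t$; it is not a recursion. The paper's proof is just: apply the per-state inequality
\[
\KLshort{\piOptReg}{\piRefT}{x_h}\le (1-\etaT\alpha)\KLshort{\piOptReg}{\piT}{x_h}+\etaT\alpha\,\KLshort{\piOptReg}{\piRef}{x_h},
\]
take $\EE[\piOptReg,\p]$ and use \Cref{lemma:KL-decomposition}. Your route through the $\piT[t+1]$ recursion and the quadratic $2\etaT^2\norm{\QReg}_\infty^2$ term is extraneous for this part.

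Second, and this is the real gap: after aggregating you need $\etaT\alpha\,\KLp{\piOptReg}{\piRef}\le \tfrac{\etaT}{2}$, i.e., $\KLp{\piOptReg}{\piRef}\le \tfrac{1}{2\alpha}$. You bound $\KLp{\piOptReg}{\piRef}$ by $H\log\tfrac{1}{\muMin}$, which leaves a remainder $\etaT\alpha H\log\tfrac{1}{\muMin}$ that is not $\tfrac{\etaT}{2}$ in general (it can be arbitrarily large). The paper instead invokes \Cref{lemma:bounded-kl-piOpt-piRef}, which uses the Nash optimality of $\piOptReg$: from $\prefReg{\piOptReg}{\piRef}\ge \prefReg{\piOptReg}{\piOptReg}=\tfrac12$ one reads off $\alpha\,\KLp{\piOptReg}{\piRef}\le \pref{\piOptReg}{\piRef}-\tfrac12\le \tfrac12$. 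That is the missing step; without it the first display does not follow with the stated constant, and no amount of ``absorbing constants'' from the $\mathbb{Q}$ bound fixes it.

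Once the first display is established correctly, the second display is immediate: drop the factor $(1-\etaT\alpha)\le 1$, plug in the bound $\KLp{\piOptReg}{\piT}\le \tfrac{8H}{t+1}\max\{\alpha^{-2},H^2\log^2\muMin\}$ from \Cref{thm:nash-md-convergence}, and use $\tfrac{\etaT}{2}=\tfrac{1}{\alpha(t+2)}\le \tfrac{1}{t+1}\cdot\tfrac{1}{\alpha}\le \tfrac{H}{t+1}\max\{\alpha^{-2},H^2\log^2\muMin\}$ to get $8+1=9$. Your second-display computation wanders through the wrong theorem (the constant $32$ is for MTPO, \Cref{thm:reg-omd-convergence}; here the relevant constant is $8$ from \Cref{thm:nash-md-convergence}), so the arithmetic you describe does not close.
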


\begin{proof}
    By \citet[Lemma 1]{munos2023nash}, for every $x_h \in \X_h$,
    \begin{align*}
        \KLshort{\piOptReg}{\piRefT}{x_h}
        \le
        (1 - \etaT \alpha) \KLshort{\piOptReg}{\piT}{x_h} + \etaT \alpha \KLshort{\piOptReg}{\piRef}{x_h}
        .
    \end{align*}
    We finish the proof by taking the expectation $\EE[\piOptReg,\p][\cdot]$ and using \Cref{lemma:KL-decomposition,lemma:bounded-kl-piOpt-piRef}.
    The second part is by \Cref{thm:nash-md-convergence}.
\end{proof}

\begin{lemma}
\label{lemma:bounded-kl-piOpt-piRef}
    It holds that $\KLp{\piOptReg}{\piRef} \le \frac{1}{2 \alpha}$.
\end{lemma}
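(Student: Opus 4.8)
The plan is to bound $\KLp{\piOptReg}{\piRef}$ by exploiting the fact that $\piOptReg$ is the Nash equilibrium of the $\alpha$-regularized preference model and that the value of the game is $1/2$ (as established in \Cref{thm:nash-exists}). First I would consider the deviation in which the first player switches from $\piOptReg$ to $\piRef$ while the second player keeps playing $\piOptReg$. By the Nash property, $\prefReg{\piRef}{\piOptReg} \le \prefReg{\piOptReg}{\piOptReg} = 1/2$, since $\piOptReg$ is a best response against itself and the game value is $1/2$.

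Next I would expand the left-hand side using the definition of the regularized preference model: $\prefReg{\piRef}{\piOptReg} = \pref{\piRef}{\piOptReg} - \alpha \KLp{\piRef}{\piRef} + \alpha \KLp{\piOptReg}{\piRef} = \pref{\piRef}{\piOptReg} + \alpha \KLp{\piOptReg}{\piRef}$, where $\KLp{\piRef}{\piRef} = 0$. Combining this with the inequality above gives $\pref{\piRef}{\piOptReg} + \alpha \KLp{\piOptReg}{\piRef} \le 1/2$, hence
\begin{align*}
\alpha \KLp{\piOptReg}{\piRef} \le \frac{1}{2} - \pref{\piRef}{\piOptReg}.
\end{align*}
Since the (unregularized) preference function takes values in $[0,1]$, its expectation $\pref{\piRef}{\piOptReg}$ is also in $[0,1]$, and in particular $\pref{\piRef}{\piOptReg} \ge 0$. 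Therefore $\alpha \KLp{\piOptReg}{\piRef} \le 1/2$, which rearranges to the claimed bound $\KLp{\piOptReg}{\piRef} \le \frac{1}{2\alpha}$.

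The only subtlety — and the step I would be most careful about — is justifying that $\prefReg{\piRef}{\piOptReg} \le 1/2$. This requires knowing not just that $(\piOptReg, \piOptReg)$ is a Nash equilibrium but that the game value equals $1/2$, which follows from the chain of inequalities in the proof of \Cref{thm:nash-exists}: $1/2 = \max_\pi \prefReg{\pi}{\pi} \ge \max_\pi \min_{\pi'} \prefReg{\pi}{\pi'} = \min_{\pi'} \prefReg{\piOptReg}{\pi'}$, and by the saddle-point property $\min_{\pi'}\prefReg{\piOptReg}{\pi'} = \max_\pi \prefReg{\pi}{\piOptReg} \ge \prefReg{\piRef}{\piOptReg}$. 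So the bound on $\prefReg{\piRef}{\piOptReg}$ is immediate once the value of the game has been identified, and the rest is a one-line manipulation using nonnegativity of the preference probability and of KL-divergence.
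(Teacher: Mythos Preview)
Your proof is correct and essentially the same as the paper's: the paper uses the second-player optimality $\prefReg{\piOptReg}{\piRef} \ge \prefReg{\piOptReg}{\piOptReg} = 1/2$ together with $\pref{\piOptReg}{\piRef} \le 1$, while you use the dual first-player optimality $\prefReg{\piRef}{\piOptReg} \le 1/2$ together with $\pref{\piRef}{\piOptReg} \ge 0$, and these two routes are equivalent via the anti-symmetry $\prefReg{\pi'}{\pi} = 1 - \prefReg{\pi}{\pi'}$. Your closing discussion is slightly more elaborate than needed, since $\prefReg{\piOptReg}{\piOptReg} = 1/2$ follows directly from the symmetry of the preference model without invoking the full minimax chain.
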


\begin{proof}
    By the optimality of $\piOptReg$ we have that $\prefReg{\piOptReg}{\piRef} \ge \prefReg{\piOptReg}{\piOptReg}$.
    Now, since $\prefReg{\piOptReg}{\piOptReg} = 1/2$, we get
    \begin{align*}
        \alpha \KLp{\piOptReg}{\piRef}
        &
        \le
        \pref{\piOptReg}{\piRef} - \frac{1}{2}
        \le
        1 - \frac{1}{2}
        =
        \frac{1}{2}
        .
        \qedhere
    \end{align*}
\end{proof}

\subsection{Convergence of multi-turn RLHF}

\begin{theorem}
\label{thm:rlhf-reg-omd-convergence}
    Let $\RLHFpiOptReg(\cdot \mid x) = \arg \max_{\pi} \VReg^{\pi,\mathrm{RLHF}}(x)$ for every $x \in \X$.
    Then, for the choice $\etaT = \frac{2}{\alpha (t + 2)}$, multi-turn RLHF guarantees at every iteration $t$, $\KLp{\RLHFpiOptReg}{\piT}
        \le
        \frac{32 H \mathbb{Q}^2}{\alpha^2 (t + 1)}$.
\end{theorem}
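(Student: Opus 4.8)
The plan is to run the \emph{same} mirror-descent argument that establishes \Cref{thm:reg-omd-convergence} for MTPO, the only change being that the adaptive self-play reward is replaced by the fixed, policy-independent reward $r^{\mathrm{RLHF}}$. Concretely, I would first cast multi-turn RLHF as an adversarial MDP (\Cref{sec:adversarial-MDP-appendix}) whose reward sequence is the \emph{constant} $r^t(x_{H+1}) = r^{\mathrm{RLHF}}(x_{H+1})$ for every $t$; then $\QReg^{\piT,\mathrm{RLHF}} = \QReg^{\piT,t}$ (\Cref{def:reg-val-generic}), and the update obtained by substituting $\QReg^{\piT,\mathrm{RLHF}}$ into \Cref{eq:algorithm-update-rule} is precisely mirror-descent policy optimization against these per-state Q-functions. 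In this fixed-reward regime $\RLHFpiOptReg$ — the per-state maximizer of $\VReg^{\pi,\mathrm{RLHF}}$ — is the unique optimal policy of the $\alpha$-regularized MDP, and in particular it maximizes the scalar objective $J^{\pi} \triangleq \VReg^{\pi,\mathrm{RLHF}}(x_1)$ over all policies $\pi$.

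The core estimate is the fundamental mirror-descent inequality (\Cref{lemma:fundamental-inequality}) applied with comparator $\pi = \RLHFpiOptReg$ and the Q-function sequence $\QReg^{\piT,\mathrm{RLHF}}$. As in the proof of \Cref{thm:reg-omd-convergence}, this yields a one-step bound whose ``regret'' term, once the per-state regrets are aggregated through the fixed-reward counterpart of the value-difference lemma (\Cref{lemma:value-difference-main}), collapses to $J^{\piT} - J^{\RLHFpiOptReg}$; by the optimality of $\RLHFpiOptReg$ from the previous paragraph this term is nonpositive. This is the only place the argument departs from the MTPO proof, where the analogous term was instead pinned to the minimax value $1/2$ via the Nash property. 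Bounding the residual quadratic term as in \Cref{thm:reg-omd-convergence} (using the magnitude bound on $\QReg^{\piT,\mathrm{RLHF}}(x_h,\cdot) - \alpha \log \frac{\piT(\cdot \mid x_h)}{\piRef(\cdot \mid x_h)}$, the analog of \Cref{lem:bound-Q-magnitude}) then leaves the recursion
\begin{align*}
    \KLp{\RLHFpiOptReg}{\piT[t+1]}
    \le
    (1 - \etaT \alpha)\, \KLp{\RLHFpiOptReg}{\piT} + 8\, \etaT^2 H \mathbb{Q}^2
    ,
\end{align*}
which is exactly the recursion that closes the proof of \Cref{thm:reg-omd-convergence}; unrolling it with $\etaT = \frac{2}{\alpha(t+2)}$ reproduces the claimed bound $\frac{32 H \mathbb{Q}^2}{\alpha^2 (t+1)}$.

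I do not anticipate a genuine obstacle, since essentially all the machinery is inherited from the MTPO analysis. The two points that warrant care are: (i) verifying that the adversarial-MDP framework degenerates correctly when the reward does not change across iterations, so that \Cref{lemma:fundamental-inequality} and the value-difference lemma apply verbatim with the comparison policy $\bar\pi$ replaced by the fixed reward $r^{\mathrm{RLHF}}$; and (ii) confirming that the per-state definition $\RLHFpiOptReg(\cdot \mid x) = \arg\max_{\pi} \VReg^{\pi,\mathrm{RLHF}}(x)$ is consistent, i.e., that these local maximizers assemble into a single policy that is simultaneously optimal from every state — the standard dynamic-programming fact for KL-regularized MDPs, which also yields $J^{\piT} \le J^{\RLHFpiOptReg}$ needed above. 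Conceptually this proof makes precise the remark in \Cref{sec:algorithm}: MTPO and multi-turn RLHF run the identical optimization loop, and the sole difference is that the RLHF target is a fixed reward (hence the comparator is the regularized-optimal policy) rather than an adaptive self-play preference (hence the comparator is the Nash equilibrium).
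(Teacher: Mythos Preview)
Your proposal is correct and matches the paper's proof essentially step for step: cast multi-turn RLHF as the adversarial-MDP framework with the constant reward $r^t = r^{\mathrm{RLHF}}$, apply \Cref{lemma:fundamental-inequality} with comparator $\RLHFpiOptReg$, drop the value-difference term $\VReg^{\piT,\mathrm{RLHF}}(x_1) - \VReg^{\RLHFpiOptReg,\mathrm{RLHF}}(x_1)$ by optimality of $\RLHFpiOptReg$, bound the quadratic residual via \Cref{lem:bound-Q-magnitude}, and unroll the recursion with the stated step size. The paper records this as \Cref{thm:reg-omd-convergence-rlhf} and then states that \Cref{thm:rlhf-reg-omd-convergence} follows directly; your additional care points (i) and (ii) are sound sanity checks the paper leaves implicit.
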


\begin{proof}
    The theorem follows directly from \Cref{thm:reg-omd-convergence-rlhf}.
\end{proof}

\begin{theorem}
\label{thm:reg-omd-convergence-rlhf}
    Running the multi-turn RLHF algorithm, we have that at every iteration $t$:
    \begin{align*}
        \KLp{\RLHFpiOptReg}{\piT[t+1]}
        \le
        (1 - \etaT \alpha) \KLp{\piOptReg}{\piT} + 2 \etaT^2 \EE[\RLHFpiOptReg,\p] \brk[s]*{\sum_{h=1}^H \norm{ \QReg^{\piT, \piT}(x_h, \cdot) - \alpha \log \frac{\piT(\cdot \mid x_h)}{\piRef(\cdot \mid x_h)}}_\infty^2}
        .
    \end{align*}
    Thus, for the choice $\etaT = \frac{2}{\alpha (t + 2)}$ we have
    \begin{align*}
        \KLp{\RLHFpiOptReg}{\piT}
        \le
        \frac{8}{\alpha^2 (t + 1)} \cdot \max_t \EE[\RLHFpiOptReg,\p] \brk[s]*{\sum_{h=1}^H \norm{ \QReg^{\piT, \piT}(x_h, \cdot) - \alpha \log \frac{\piT(\cdot \mid x_h)}{\piRef(\cdot \mid x_h)}}_\infty^2}
        .
    \end{align*}
    Finally,
    \begin{align*}
        \KLp{\RLHFpiOptReg}{\piT}
        &
        \le
        \frac{32H}{t + 1} \cdot \max \brk[c]*{\frac{1}{\alpha^2} , H^2 \log^2 \muMin}
        ,
    \end{align*}
    where $\muMin = \min_{(x,y) \in \X \times \Y : \piRef(y \mid x) > 0} \piRef(y \mid x)$ is the minimal non-zero probability assigned by $\piRef$.
\end{theorem}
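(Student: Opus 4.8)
The plan is to reproduce the proof of \Cref{thm:reg-omd-convergence} almost verbatim, the only conceptual change being that the adaptive self-play reward $\pref{\cdot}{\piT}$ is replaced by the \emph{fixed} learned reward $r^{\mathrm{RLHF}}$. Concretely, I would instantiate the adversarial-MDP reduction of \Cref{sec:adversarial-MDP-appendix} with the constant reward sequence $r^t(x_{H+1}) = r^{\mathrm{RLHF}}(x_{H+1})$ for every iteration $t$; by \Cref{def:reg-val-generic} this identifies $\QReg^{\piT,\mathrm{RLHF}}$ with the generic regularized Q-function $\QReg^{\piT,t}$, and the multi-turn RLHF update — which is \Cref{eq:algorithm-update-rule} with $\QReg^{\piT,\piT}$ replaced by $\QReg^{\piT,\mathrm{RLHF}}$ — is precisely one step of regularized mirror-descent policy optimization for this adversarial MDP.

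Next I would apply the fundamental inequality \Cref{lemma:fundamental-inequality} with the (iteration-independent) comparator $\pi = \RLHFpiOptReg$. Exactly as in \Cref{thm:reg-omd-convergence}, this yields
\begin{align*}
    \KLp{\RLHFpiOptReg}{\piT[t+1]}
    &\le
    (1 - \etaT \alpha) \KLp{\RLHFpiOptReg}{\piT}
    + 2 \etaT^2 \EE[\RLHFpiOptReg,\p] \brk[s]*{\sum_{h=1}^H \norm{ \QReg^{\piT,\mathrm{RLHF}}(x_h, \cdot) - \alpha \log \frac{\piT(\cdot \mid x_h)}{\piRef(\cdot \mid x_h)}}_\infty^2}
    \\
    &\qquad
    + \etaT \brk*{\VReg^{\piT,\mathrm{RLHF}}(x_1) - \VReg^{\RLHFpiOptReg,\mathrm{RLHF}}(x_1)}
    .
\end{align*}
In the MTPO proof the analogous last term was $\etaT\brk*{\prefReg{\piT}{\piT} - \prefReg{\piOptReg}{\piT}}$, shown to be non-positive through the Nash property of $\piOptReg$; here it is non-positive for the simpler reason that $\RLHFpiOptReg$ maximizes $\VReg^{\pi,\mathrm{RLHF}}(x)$ over $\pi$ at every state $x$, so in particular $\VReg^{\RLHFpiOptReg,\mathrm{RLHF}}(x_1) \ge \VReg^{\piT,\mathrm{RLHF}}(x_1)$. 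To make this optimality step rigorous I would first note that $\RLHFpiOptReg$ is a single, well-defined policy attaining all of these per-state maxima simultaneously: the generic recursive relations force it to be the soft-greedy (Boltzmann) policy $\RLHFpiOptReg(\cdot \mid x_h) \propto \mu(\cdot \mid x_h)\, e^{\frac{1}{\alpha} \QReg^{\RLHFpiOptReg,\mathrm{RLHF}}(x_h,\cdot)}$, exactly as the soft-optimal policy in a KL-regularized MDP. Dropping the non-positive term gives the first displayed inequality of the theorem.

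Finally I would unroll the recursion with $\etaT = \frac{2}{\alpha (t+2)}$; this is the identical induction used in \Cref{thm:reg-omd-convergence} — writing $a_t = \KLp{\RLHFpiOptReg}{\piT}$, the bound $a_{t+1} \le \frac{t}{t+2} a_t + \frac{8C}{\alpha^2 (t+2)^2}$, where $C$ is the supremum over $t$ of the norm-term expectation above, implies $a_t \le \frac{8C}{\alpha^2 (t+1)}$ — which produces the second displayed bound. I would then invoke \Cref{lem:bound-Q-magnitude} to control $\norm{\QReg^{\piT,\mathrm{RLHF}}(x_h,\cdot) - \alpha \log \frac{\piT(\cdot\mid x_h)}{\piRef(\cdot\mid x_h)}}_\infty$; that lemma transfers unchanged, since $\QReg^{\pi,\mathrm{RLHF}}$ has the same structural form as the preference-based Q-function and $r^{\mathrm{RLHF}}$ is bounded in $[0,1]$, just like $\prefFunc$. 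Combining these gives the final $\frac{32H}{t+1} \max \brk[c]*{\frac{1}{\alpha^2},\, H^2 \log^2 \muMin}$ bound.

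I do not expect a genuine obstacle: the argument is essentially bookkeeping once the MTPO/adversarial-MDP machinery is available. The two points that deserve a line of care are (i) the well-definedness of $\RLHFpiOptReg$ as one policy realizing all the per-state maxima at once, so that the optimality step is legitimate, and (ii) that the boundedness of $r^{\mathrm{RLHF}}$ required by \Cref{lem:bound-Q-magnitude} is indeed assumed (or imposed without loss of generality) in the RLHF setting.
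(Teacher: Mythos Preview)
Your proposal is correct and follows essentially the same route as the paper: instantiate the adversarial-MDP reduction with the constant reward $r^t = r^{\mathrm{RLHF}}$, apply \Cref{lemma:fundamental-inequality} with comparator $\RLHFpiOptReg$, drop the value-difference term by optimality of $\RLHFpiOptReg$, unroll the recursion, and finish with \Cref{lem:bound-Q-magnitude}. The two points you flag for extra care (well-definedness of $\RLHFpiOptReg$ and boundedness of $r^{\mathrm{RLHF}}$) are indeed the only places that warrant a remark, and the paper simply assumes them implicitly.
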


\begin{proof}
    We prove the theorem by casting the regularized RLHF problem as an adversarial MDP, see \Cref{sec:adversarial-MDP-appendix} for the details.
    Set $r^t(x_{H+1}) = r^{\mathrm{RLHF}}(x_{H+1})$, then $\QReg^{\piT,\mathrm{RLHF}} = \QReg^{\piT,t}$ (see \Cref{def:reg-val-generic} for the definition of the regularized Q-function).
    Now, multi-turn RLHF is equivalent to running mirror descent policy optimization.
    Thus, by \Cref{lemma:fundamental-inequality} with $\pi = \RLHFpiOptReg$,
    \begin{align*}
        \KLp{\RLHFpiOptReg}{\piT[t+1]}
        &
        \le
        (1 - \etaT \alpha) \KLp{\RLHFpiOptReg}{\piT} + 2 \etaT^2 \EE[\RLHFpiOptReg,\p] \brk[s]*{\sum_{h=1}^H \norm{ \QReg^{\piT, \piT}(x_h, \cdot) - \alpha \log \frac{\piT(\cdot \mid x_h)}{\piRef(\cdot \mid x_h)}}_\infty^2}
        \\
        &
        \qquad
        +
        \etaT \brk*{\VReg^{\piT,\mathrm{RLHF}}(x_1) - \VReg^{\RLHFpiOptReg,\mathrm{RLHF}}(x_1)}
        \\
        &
        \le
        (1 - \etaT \alpha) \KLp{\piOptReg}{\piT} + 2 \etaT^2 \EE[\piOptReg,\p] \brk[s]*{\sum_{h=1}^H \norm{ \QReg^{\piT, \piT}(x_h, \cdot) - \alpha \log \frac{\piT(\cdot \mid x_h)}{\piRef(\cdot \mid x_h)}}_\infty^2}
        ,
    \end{align*}
    where the second inequality optimality of $\RLHFpiOptReg$.
    The last claim follows directly from \Cref{lem:bound-Q-magnitude}.
\end{proof}

\newpage
\section{The Education Dialogue environment}
\label{sec:education-dialogue-appendix}

\subsection{Prompts for creating the environment}

We use the following prompt to generate conversations using Gemini \citep{geminiteam2024gemini}:
\begin{center}
\begin{minipage}{0.8\textwidth}
    \texttt{Simulate a conversation between a teacher in school and a student. There is a small chance that the teacher is successful in teaching the student so he understands the topic. The conversation lasts roughly 10-15 turns but ends when either side says [end of conversation]. The teacher wants to teach the student about \{topic\}. The student likes \{student\_pref\}. The teacher does not know that beforehand. The student prefers to learn this way, \{student\_reaction\}. The teacher likes \{teacher\_pref\}. He prefers to teach this way, \{teacher\_reaction\}.
Output the conversation and the probability that the student understood the material, in the following format.}

\texttt{\#}

\texttt{Conversation:}

\texttt{[}

\texttt{    Teacher: "...",}

\texttt{    Student: "...",}

\texttt{    Teacher: "...",}

\texttt{    Student: "...",}

\texttt{]}

\texttt{Probability: "...",}

\texttt{\#}
\end{minipage}
\end{center}

The topic is sampled from the following topics list:
\begin{center}
\begin{minipage}{0.8\textwidth}
\texttt{Photosynthesis, Evolution, DNA, Newton's First Law of Motion, Newton's Second Law of Motion, Newton's Third Law of Motion, Archimedes' Principle, Conservation of Energy, Pythagorean Theorem, Allegory, Metaphor, Personification, Foreshadowing, Irony, Atoms, Elements, Molecules, The Periodic Table, The French Revolution, The Industrial Revolution, The Russian Revolution, World War 1, World War 2, The American Civil War, The September 11th Attacks, The Declaration of Independence, The Pyramids, The Parthenon, The Colosseum, The Hagia Sophia, The Taj Mahal, The Great Wall of China, The Machu Picchu, Angkor Wat, The Palace of Versailles, The White House, The Tower of London, Notre Dame Cathedral, The Eiffel TowerConfucius, Julius Caesar, Leonardo da Vinci, William Shakespeare, Napoleon Bonaparte, Abraham Lincoln, Albert Einstein, Martin Luther King, Nelson Mandela, Marie Curie, Genghis Khan, Christopher Columbus, Joan of Arc, Winston Churchill, Vincent van Gogh, Pablo Picasso, Salvador Dali, The Roman Empire, The Cold War, Zeus, Poseidon, Ares, Hercules, Achilles, Minotaur, Medusa, The Solar System, The Big Bang, Supply and Demand, Communism, Capitalism, Democracy, Dictatorship, Sigmund Freud, Cells, The Circulatory System, The Respiratory System, The Respiratory System, The Nervous System, Neurons}
\end{minipage}
\end{center}

The student's learning preferences (\texttt{student\_pref}) are sampled from the following list:
\begin{center}
\begin{minipage}{0.8\textwidth}
\texttt{interactive learning/class discussions/asking questions, direct instruction/lecture-based learning, hands-on activities/real-world applications, creative expression/story telling/gamification}
\end{minipage}
\end{center}

The student's reactions to not learning in their preferred ways (\texttt{student\_reaction}) are sampled from the following list:
\begin{center}
\begin{minipage}{0.8\textwidth}
\texttt{and gets rude otherwise, and gets disengaged otherwise, and gets frustrated otherwise, and gets anxious otherwise, but might adapt to other methods, and might tell it to the teacher}
\end{minipage}
\end{center}

The teacher's teaching preferences (\texttt{teacher\_pref}) are sampled from the following list:
\begin{center}
\begin{minipage}{0.8\textwidth}
\texttt{direct instruction/lecture-based learning, interactive learning/class discussions/inquiry-based learning, experiential learning/hands-on activities, formative assessment}
\end{minipage}
\end{center}

The teacher's reactions to different learning methods (\texttt{teacher\_reaction}) are sampled from the following list:
\begin{center}
\begin{minipage}{0.8\textwidth}
\texttt{and gets frustrated otherwise, and blames the student otherwise, and gives up otherwise, but might adapt to the student, and might insist on teaching this way}
\end{minipage}
\end{center}

We use the following prompt to query Gemini for the preference between two conversations (\texttt{conv1} and \texttt{conv2}):
\begin{center}
\begin{minipage}{0.8\textwidth}
\texttt{You are an expert at assesing teachers. Here are two interactions between a teacher and a student.}

\texttt{\#}

\texttt{Interaction 1:}

\texttt{\{conv1\}}

\texttt{\#}

\texttt{Interaction 2:}

\texttt{\{conv2\}}

\texttt{\#}

\texttt{A good interaction between a teacher and student is characterized by several key elements other than whether the student was able to understand the topic. The teacher should present information clearly and enthusiastically, encouraging questions and active participation. Students should feel comfortable asking for clarification, offering their own insights, and respectfully challenging ideas.}

\texttt{Which interaction is better (do not let the order interactions affect your answer)? Output 1 or 2.}
\end{minipage}
\end{center}

For the single-turn baseline, we use the following modified prompt to query Gemini for preferences:
\begin{center}
\begin{minipage}{0.8\textwidth}
\texttt{You are an expert at assesing teachers. Here is an interaction between a teacher and a student.}

\texttt{\#}

\texttt{Interaction:}

\texttt{\{conv\}}

\texttt{\#}

\texttt{Here are two possible responses by the teacher:}

\texttt{\#}

\texttt{Response 1:}

\texttt{\{resp1\}}

\texttt{\#}

\texttt{Response 2:}

\texttt{\{resp2\}}

\texttt{\#}

\texttt{A good interaction between a teacher and student is characterized by several key elements other than whether the student was able to understand the topic. The teacher should present information clearly and enthusiastically, encouraging questions and active participation. Students should feel comfortable asking for clarification, offering their own insights, and respectfully challenging ideas.}

\texttt{Assuming that the teacher and student continue the interaction with one of these responses, which response will lead to a better interaction (do not let the order interactions affect your answer)? Output 1 or 2.}
\end{minipage}
\end{center}

\subsection{Examples of interactions}

\begin{enumerate}
\item Teacher: Good morning. Today, we're going to talk about personification, a literary device that gives human qualities to nonhuman things.

Student: That sounds a bit abstract. Can you give me an example?

Teacher: Sure. 'The wind whispered through the trees.'

Student: But the wind can't actually whisper.

Teacher: That's correct. But we're using personification to create a more vivid image in our minds. Let's try an activity. I'll give you a sentence, and you try to personify it.

Student: Okay, that sounds fun.

Teacher: The car purred down the street.

Student: The car roared down the street!

Teacher: Excellent! You're getting the hang of it. Now, let's talk about why writers use personification.

Student: I'm not really understanding why they do.

Teacher: That's okay. Let's brainstorm. Can you think of any reasons why authors might use personification?

Student: Maybe to make the story more interesting?

Teacher: That's a great reason. Anything else?

Student: Maybe to make it more relatable?

Teacher: Yes, personification can help us relate to objects and ideas on a deeper level. Do you think you understand personification now?

Student: I think so. It's a creative way to make things come alive.

Teacher: Exactly. I'm glad you enjoyed learning about it. Let's do a quick review before we end the class.

Student: Sounds good.

Teacher: Great. I'll ask you a few questions, and you can answer as if you were a personifying writer.

Student: Okay.

\item Teacher: Today, we're going to learn about The Solar System.

Student: Sounds boring. Can we do something more hands-on?

Teacher: I have a great activity planned. We're going to build a scale model of the Solar System using different-sized balls.

Student: That sounds cool!

Teacher: First, let's talk about the planets. There are eight planets in our Solar System.

Student: I know some of them, like Earth and Mars.

Teacher: Excellent! Now, let's start building our model. We'll use these balls to represent the planets and arrange them in order from the Sun.

Student: This is fun! I can see how big the planets are compared to each other.

Teacher: I'm glad you're enjoying it. Now, let's talk about the moons and other objects that orbit the planets.

Student: Can we build a model of the moons?

Teacher: Absolutely! This will help you understand their different characteristics.

Student: I think I get it now. This is a great way to learn!

Teacher: I'm so glad to hear that. Let's finish building our model and then we'll review what we've learned.

Student: Okay.

Teacher: Well done! I believe you have a good understanding of The Solar System now.

Student: Thanks! I learned a lot. You're welcome. [end of conversation]

\item Teacher: Today, we're going to learn about World War 1.

Student: Okay.

Teacher: The war began in 1914 when Archduke Franz Ferdinand of Austria-Hungary was assassinated.

Student: I've heard of him.

Teacher: The war was fought between two main alliances: the Allies and the Central Powers.

Student: Can you tell me more about the Allies?

Teacher: Sure. The Allies included France, Russia, and Great Britain.

Student: What were the Central Powers?

Teacher: They were Germany, Austria-Hungary, and Ottoman Russia.

Student: This is a lot of information. Can we slow down a bit?

Teacher: Of course. Let's review what we've learned so far.

Student: I'm not sure I understand everything yet.

Teacher: That's okay. We can go over it again. But first, I want you to try to summarize what we've learned so far.

Student: World War 1 started when Archduke Ferdinand was assassinated, and the Allies and Central Powers fought each other.

Teacher: Good. Now, let's go over the key events of the war.

Student: Okay.

Teacher: The war ended in 1918 with the defeat of the Central Powers.

Student: I think I understand it now.

Teacher: Great! I'm glad you do.

Student: Thanks, I appreciate it. You're welcome. [end of conversation]

\item Teacher: Today, we're going to learn about Machu Picchu, the ancient Incan city.

Student: Can you just tell me the facts? I don't need all this extra stuff.

Teacher: Sure. Machu Picchu was built in the 15th century in the Andes Mountains of Peru.

Student: What made it so special?

Teacher: Its location on a mountain ridge provided stunning views of the surrounding landscape.

Student: That's it?

Teacher: Well, there's more to it. Machu Picchu was a royal estate or religious sanctuary for the Incan emperor Pachacuti.

Student: Why didn't they just build it on the ground?

Teacher: They thought it would be more likely to be seen from the outside.

Student: Can we just move on?

Teacher: No, it's important to understand the historical significance of Machu Picchu.

Student: I don't care. Just tell me what I need to know for the test.

Teacher: I'm trying to help you understand the material, not just memorize it.

Student: I don't need your help. Just give me the notes.

Teacher: [end of conversation]

    \item Teacher: Today, we're going to talk about foreshadowing in literature.
    
Student: I'm not really into reading. Is there a way we could learn about it in a more creative way?

Teacher: No, I'm afraid not. Foreshadowing is an important concept that you need to understand.

Student: But I learn better through storytelling or games.

Teacher: That's too bad. You need to learn to focus on the material, even if it's not presented in your preferred style.

Student: Maybe we could act out a scene where there's foreshadowing?

Teacher: That would be a waste of time. We need to cover the key points of foreshadowing.

Student: I'm not sure I'm going to understand it this way.

Teacher: You will if you pay attention and ask questions.

Student: Can you at least give me some examples of foreshadowing?

Teacher: Sure. In 'Romeo and Juliet,' the prologue foreshadows the tragic end of the two lovers.

Student: That makes sense. How does foreshadowing help the reader?

Teacher: It builds suspense and keeps the reader engaged.

Student: Okay, I think I'm starting to get it.

Teacher: That's great. I'm glad you're understanding.

Student: Thanks for working with me. You're welcome. [end of conversation]
\end{enumerate}

\newpage
\section{Hyperparameters and additional experimental results}\label{sec: hyperparameters}

\subsection{Hyperparameters}

For both RLHF-based and preference-based algorithms we conducted a sweep over the KL regularization coefficient $\alpha \in \{ 0.0025   , 0.005 ,  0.01 , 0.02, 0.05 , 0.1 \}$.
For preference-based algorithm we also conducted a sweep over the mixing coefficient $\tau \in \{ 0 , 0.0375 , 0.0625 , 0.125 \}$.
All models are trained for 50000 steps.

\begin{table}[ht]\label{table:hyperparameters}
\caption{Hyperparameters of all multi-turn algorithms.}
\begin{center}
\scalebox{1}{%
\begin{tabular}{ c | c  c c} \toprule
    Hyperparameter & RLHF & MTPO & MTPO-$\tau$ \\ \hline 
    \# generations per context & 1 & 2 & 2 \\
    \# updates per context & 1 & 2 & 2 \\
    \hline
    KL regularization coefficient $\alpha$ & 0.01 & 0.005 & 0.0025 \\ 
    mixing coefficient $\tau$ & 0 & 0 & 0.0375 \\
    \hline
    batch size & 16 & 16 & 16 \\ 
    GAE coefficient $\lambda$ & 0.95 & 0.95 & 0.95 \\ 
    policy learning delay & 1000 & 1000 & 1000 \\ 
    optimizer & AdaFactor & AdaFactor & AdaFactor \\
    optimizer decay &  0.8 & 0.8 & 0.8 \\
    policy learning rate & 4e-5 & 4e-5 & 4e-5 \\
    value learning rate & 4e-5 & 4e-5 & 4e-5 \\
    value initialization & pretrained checkpoint & pretrained checkpoint & pretrained checkpoint \\
    
    \bottomrule
\end{tabular}}
\end{center}
\label{tab:Hyperparams-on}
\end{table}

\subsection{Additional experimental results}

\begin{table}[ht]
\caption*{Table (full version of \Cref{table:education-dialogue-experiment}): Side-by-side evaluation for Education Dialogue using Flan-T5 XL as the prompted preference model. Each entry is the average preference of 1,600 conversations generated with row method $y$, over ones generated with column method $y'$. We evaluate each method using 3 different seeds, compute 3 × 3 comparisons matrix and report the mean (together with the standard deviation).}
\begin{adjustwidth}{-.7in}{-.5in} 
\begin{center}
\scalebox{0.88}{%
\begin{tabular}{c|c|cc|cc|ccc}
    \toprule
     & SL & \multicolumn{2}{c|}{Single-turn-reward\hspace*{-0.3ex}} &
     \multicolumn{2}{c|}{Single-turn-value\hspace*{-0.3ex}} & \multicolumn{3}{c}{Multi-turn\hspace*{-1ex}} \\
      \cmidrule{2-9}
      &  SFT & RLHF & Nash & RLHF & Nash & RLHF & MTPO & MTPO-$\tau$\hspace*{-0.8ex} \\ 
     \midrule
     SFT  & $-$ & $0.164$ \scriptsize{$(.015)$} & $0.347$ \scriptsize{$(.017)$} & $0.197$ \scriptsize{$(.012)$} & $0.324$ \scriptsize{$(.015)$} & $0.212$ \scriptsize{$(.007)$} & $0.091$ \scriptsize{$(.014)$} & $0.093$ \scriptsize{$(.02)$} \\
     RLHF-reward\hspace*{-0.5ex}  & $0.836$ \scriptsize{$(.015)$} & $-$ & $0.628$ \scriptsize{$(.015)$} & $0.515$ \scriptsize{$(.016)$} & $0.654$ \scriptsize{$(.016)$} & $0.399$ \scriptsize{$(.06)$} & $0.392$ \scriptsize{$(.029)$} & $0.354$ \scriptsize{$(.017)$} \\
     Nash-reward  & $0.653$ \scriptsize{$(.017)$} & $0.372$ \scriptsize{$(.015)$} & $-$ & $0.411$ \scriptsize{$(.02)$} & $0.51$ \scriptsize{$(.021)$} & $0.328$ \scriptsize{$(.077)$} & $0.281$ \scriptsize{$(.02)$} & $0.242$ \scriptsize{$(.019)$} \\
     RLHF-value  & $0.803$ \scriptsize{$(.012)$} & $0.485$ \scriptsize{$(.016)$} & $0.589$ \scriptsize{$(.02)$} & $-$ & $0.568$ \scriptsize{$(.016)$} & $0.408$ \scriptsize{$(.053)$} & $0.396$ \scriptsize{$(.015)$} & $0.366$ \scriptsize{$(.026)$} \\
     Nash-value  & $0.676$ \scriptsize{$(.015)$} & $0.346$ \scriptsize{$(.016)$} & $0.49$ \scriptsize{$(.021)$} & $0.432$ \scriptsize{$(.016)$} & $-$ & $0.45$ \scriptsize{$(.073)$} & $0.298$ \scriptsize{$(.025)$} & $0.27$ \scriptsize{$(.026)$} \\
     RLHF-multi  & $0.788$ \scriptsize{$(.007)$}  & $0.601$ \scriptsize{$(.06)$} & $0.672$ \scriptsize{$(.077)$} & $0.592$ \scriptsize{$(.053)$} & $0.55$ \scriptsize{$(.073)$} & $-$ & $0.433$ \scriptsize{$(.053)$} & $0.412$ \scriptsize{$(.031)$} \\
     MTPO  & $0.909$ \scriptsize{$(.014)$} & $0.608$ \scriptsize{$(.029)$}  & $0.719$ \scriptsize{$(.02)$}  & $0.604$ \scriptsize{$(.015)$}  & $0.702$ \scriptsize{$(.025)$} & $0.567$ \scriptsize{$(.053)$} & $-$ & $0.439$ \scriptsize{$(.034)$} \\
     MTPO-$\tau$ & $\bf{0.907}$ \scriptsize{$(.02)$} & $\bf{0.646}$ \scriptsize{$(.017)$} & $\bf{0.758}$ \scriptsize{$(.019)$} & $\bf{0.634}$ \scriptsize{$(.026)$} & $\bf{0.73}$ \scriptsize{$(.026)$} & $\bf{0.588}$ \scriptsize{$(.031)$} & $\bf{0.561}$ \scriptsize{$(.034)$} & $-$ \\
     \bottomrule
\end{tabular}}
\end{center}
\end{adjustwidth}
\end{table}

\newpage
\section{Mirror descent policy optimization for regularized adversarial MDPs}
\label{sec:adversarial-MDP-appendix}

\subsection{Model}

We start by defining the regularized adversarial MDP model.

Consider a setting where the agent interacts with an MDP model for $T$ episodes, such that, in each episode $t \in [T]$, the agent performs $H$ steps in the MDP (from horizon $h=1$ up to horizon $h=H+1$)
In short, an adversarial MDP is a generalization of this standard episodic MDP setting to the scenario where the reward function is different in every episode.
This model was extensively studied in recent years (see, e.g., \cite{even2009online,rosenberg2019online,rosenberg2019bandit,shani2020optimistic}).
We consider a slightly different definition which is more focused on our setting.

\begin{definition}[Adversarial MDP]
A finite-horizon adversarial MDP $\M$ is defined by a tuple $(\X, \Y, H, x_1, \p, \{r^t\}_{t=1}^T)$ where $\X$ is the state space, $\Y$ is the action space, $H$ is the horizon, $x_1 \in \X_1$ is the initial state,  $\p: \X \times \Y \to \Delta_{\X}$ is the transition function, and $r^t: \X_{H+1} \to [0,1]$ is the reward function in episode $t$.

An interaction between the agent and the adversarial MDP environment proceeds in $T$ episodes, and each episode $t \in [T]$ proceeds in $H$ steps. The agent begins in an initial state $x^t_1 = x_1$.
In step $h \in [H]$, the agent observes the current state $x^t_h \in \X$, picks an action $y^t_h \in \Y$ and transitions to the next state $x^t_{h+1}$ sampled from the transition function $\p(\cdot \mid x^t_h,y^t_h)$.
At the end of the interaction, the agent arrives in a final state $x^t_{H+1}$ and observes the reward $r^t(x^t_{H+1})$.
For simplicity, we assume that the state space can be decomposed into $H+1$ disjoint subsets $\X = \bigcupdot_{h=1}^{H+1} X_h$ such that, in step $h$ of the interaction, the agent is in some state $x_h \in \X_h$.
\end{definition}

Now, we define the value function in an adversarial MDP, i.e., the expected reward of a policy when interacting with the MDP.

\begin{definition}[Value function]
    Let $\M$ be an adversarial MDP and $\pi: \X \to \simplex_\Y$ be a policy.
   The value function $V^{\pi,t}:\X \to \R$ of policy $\pi$ in episode $t$ is defined as $V^{\pi,t}(x_h) = \EE[\pi,\p][r^t(x_{H+1}) \ \mid x_h]$ for every $h \in [H]$ and $x_h \in \X_h$. 
   Similarly, the Q-function $Q^{\pi,t}:\X \times \Y \to \R$ is defined $Q^{\pi,t}(x_h,y_h) = \EE[\pi,\p][r^t(x_{H+1}) \ \mid x_h,y_h]$.
\end{definition}

Next, we consider a regularized version of the adversarial MDP model.
Regularized MDPs were also studied recently (see, e.g., \cite{geist2019theory,shani2020adaptive}).
The following definition presents the regularized value with respect to some reference policy $\piRef$.

\begin{definition}[Regularized value function]
\label{def:reg-val-generic}
Let $\piRef$ be a reference policy and $\alpha > 0$ be a regularization coefficient.
The regularized value function and Q-function of policy $\pi$ in episode $t$ are defined as 
\begin{align*}
    V^{\pi,t}_\alpha(x_h)
    &
    =
    \EE[\pi,\p]\brk[s]*{r^t(x_{H+1}) - \sum_{h'=h}^H \KLshort{\pi}{\piRef}{x_h} \ \mid x_h}
    \\
    Q^{\pi,t}_\alpha(x_h,y_h)
    &
    =
    \EE[\pi,\p]\brk[s]*{r^t(x_{H+1}) - \sum_{h'=h}^H \KLshort{\pi}{\piRef}{x_h} \ \mid x_h,y_h}
    .
\end{align*}
\end{definition}

We now present a 1-step recursive formula and a value difference lemma for the regularized value function.

\begin{lemma}[Regularized value function recursive relation]
\label{lemma: regularized q generic recursive relation}
    Let $\pi$ be a policy.
    For every $x_{H+1} \in \X_{H+1}$, it holds that $\VReg^{\pi,t}(x_{H+1}) = r^t(x_{H+1})$.
    Furthermore, for every $h=1,\ldots,H$ and $(x_h,y_h) \in \X_h \times \Y$, the following recursive relations hold:
    \begin{align*}
        \VReg^{\pi,t}(x_h)
        &
        =
        \sum_{y_h \in \Y} \pi(y_h \mid x_h) \QReg^{\pi,t}(x_h, y_h)
        \\
        \QReg^{\pi,t}(x_h, y_h) 
        & = 
        \sum_{x_{h+1} \in \X_{h+1}} \p(x_{h+1} \mid x_h, y_h) \VReg^{\pi,t}(x_{h+1}) - \alpha \KLshort{\pi}{\mu}{x_h}
        .
    \end{align*}
\end{lemma}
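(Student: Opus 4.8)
The plan is to derive the boundary identity and the two recursive relations directly from \Cref{def:reg-val-generic}, using nothing more than the tower property of conditional expectation and the Markov property of the transition kernel $\p$. First I would handle the boundary case: when $h = H+1$ the summation $\sum_{h'=h}^{H} \KLshort{\pi}{\mu}{x_{h'}}$ is empty, so $\VReg^{\pi,t}(x_{H+1}) = \EE[\pi,\p]\brk[s]*{r^t(x_{H+1}) \mid x_{H+1}} = r^t(x_{H+1})$.

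For the $\VReg$--$\QReg$ identity, I would condition the expectation defining $\VReg^{\pi,t}(x_h)$ on the action $y_h \sim \pi(\cdot \mid x_h)$ played at step $h$. The key observation is that the step-$h$ regularization term $\alpha \KLshort{\pi}{\mu}{x_h}$ is a deterministic function of $x_h$ alone, hence it sits identically inside every $\QReg^{\pi,t}(x_h, y_h)$; the tower property then gives $\VReg^{\pi,t}(x_h) = \EE[y_h \sim \pi(\cdot \mid x_h)]\brk[s]*{\QReg^{\pi,t}(x_h, y_h)} = \sum_{y_h \in \Y} \pi(y_h \mid x_h) \QReg^{\pi,t}(x_h, y_h)$. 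For the $\QReg$--$\VReg$ identity I would first split off the step-$h$ term, writing $\QReg^{\pi,t}(x_h, y_h) = -\alpha \KLshort{\pi}{\mu}{x_h} + \EE[\pi,\p]\brk[s]*{r^t(x_{H+1}) - \sum_{h'=h+1}^{H} \KLshort{\pi}{\mu}{x_{h'}} \mid x_h, y_h}$, and then condition the remaining expectation on the next state $x_{h+1} \sim \p(\cdot \mid x_h, y_h)$. The quantity being averaged is a function of the trajectory suffix from step $h+1$ onward, and its conditional expectation given $x_{h+1}$ is by definition $\VReg^{\pi,t}(x_{h+1})$, which yields $\QReg^{\pi,t}(x_h, y_h) = \sum_{x_{h+1} \in \X_{h+1}} \p(x_{h+1} \mid x_h, y_h) \VReg^{\pi,t}(x_{h+1}) - \alpha \KLshort{\pi}{\mu}{x_h}$.

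The one step needing genuine care is the use of the Markov property in the $\QReg$--$\VReg$ derivation: I would justify it by noting that under $\Pr_{\pi,\p}$ the conditional law of the suffix $(x_{h+1}, y_{h+1}, \dots, x_{H+1})$ given $(x_h, y_h)$ depends on $(x_h, y_h)$ only through the distribution of $x_{h+1}$, so the conditional expectation of any function of that suffix — in particular of $r^t(x_{H+1}) - \sum_{h'=h+1}^{H} \KLshort{\pi}{\mu}{x_{h'}}$ — factors through $x_{h+1}$ and coincides with $\VReg^{\pi,t}(x_{h+1})$. Apart from this, the argument is routine manipulation of finite sums and expectations, so I do not expect any substantive obstacle.
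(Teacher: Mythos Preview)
Your proposal is correct and follows essentially the same route as the paper: expand the defining expectations, condition on $y_h$ to pass from $\VReg$ to $\QReg$, then peel off the step-$h$ KL term and condition on $x_{h+1}$ (via the Markov property) to pass from $\QReg$ back to $\VReg$. The paper phrases the argument as a backwards induction on $h$ and writes the conditioning out explicitly with reach probabilities $\Pr_{\pi,\p}[\cdot \mid x_h]$, but the induction hypothesis is never actually invoked, so the two presentations are the same in substance.
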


\begin{proof}
    We prove the claim by backwards induction on $h$.
    The base case $h = H+1$ follows by definition of the value function and the adversarial MDP.
    Assuming that the claim holds for $h+1$, we have that:
    \begin{align*}
        \VReg^{\pi,t}(x_h)
        &
        =
        \EE[\pi,\p] \brk[s]*{ r^t(x_{H+1}) - \alpha \sum_{h'=h}^H  \KLshort{\pi}{\mu}{x_{h'}} \mid x_h }
        \\
        &
        =
        \sum_{x \in \X_{H+1}} \Pr_{\pi,\p}[x_{H+1} = x \mid x_h] r^t(x) - \alpha \sum_{h'=h}^H \sum_{x \in \X_{h'}} \Pr_{\pi,\p}[x_{h'} = x \mid x_h] \KLshort{\pi}{\mu}{x}
        \\
        &
        =
        \sum_{y_h \in \Y} \pi(y_h \mid x_h) \sum_{x \in \X_{H+1}} \Pr_{\pi,\p}[x_{H+1} = x \mid x_h,y_h] r^t(x)
        \\
        &
        \qquad
        -
        \alpha \sum_{y_h \in \Y} \pi(y_h \mid x_h) \sum_{h'=h}^H \sum_{x \in \X_{h'}} \Pr_{\pi,\p}[x_{h'} = x \mid x_h,y_h] \KLshort{\pi}{\mu}{x}
        \\
        &
        =
        \sum_{y_h \in \Y} \pi(y_h \mid x_h) \EE[\pi,\p] \brk[s]*{ r^t(x_{H+1}) - \alpha \sum_{h'=h}^H  \KLshort{\pi}{\mu}{x_{h'}} \mid x_h,y_h }
        \\
        &
        =
        \sum_{y_h \in \Y} \pi(y_h \mid x_h) \QReg^{\pi,t}(x_h, y_h)
        .
    \end{align*}
    Moreover,
    \begin{align*}
        \QReg^{\pi,t}(x_h, y_h) 
        &
        =
        \EE[\pi,\p] \brk[s]*{ r^t(x_{H+1}) - \alpha \sum_{h'=h}^H  \KLshort{\pi}{\mu}{x_{h'}} \mid x_h,y_h }
        \\
        &
        =
        \sum_{x \in \X_{H+1}} \Pr_{\pi,\p}[x_{H+1} = x \mid x_h,y_h] r^t(x)
        \\
        &
        \qquad
        - 
        \alpha \sum_{h'=h}^H \sum_{x \in \X_{h'}} \Pr_{\pi,\p}[x_{h'} = x \mid x_h,y_h] \KLshort{\pi}{\mu}{x}
        \\
        &
        =
        \sum_{x_{h+1} \in \X_{h+1}} \p(x_{h+1} \mid x_h,y_h) \sum_{x \in \X_{H+1}} \Pr_{\pi,\p}[x_{H+1} = x \mid x_{h+1}] r^t(x) 
        \\
        &
        \qquad
        -
        \alpha \sum_{x_{h+1} \in \X_{h+1}} \p(x_{h+1} \mid x_h,y_h) \sum_{h'=h+1}^H \sum_{x \in \X_{h'}} \Pr_{\pi,\p}[x_{h'} = x \mid x_{h+1}] \KLshort{\pi}{\mu}{x}
        \\
        &
        \qquad
        - 
        \alpha \KLshort{\pi}{\mu}{x_h}
        \\
        & 
        = 
        \sum_{x_{h+1} \in \X_{h+1}} \p(x_{h+1} \mid x_h,y_h) \EE[\pi,\p] \brk[s]*{ r^t(x_{H+1}) - \alpha \sum_{h'=h+1}^H  \KLshort{\pi}{\mu}{x_{h'}} \mid x_{h+1} }
        \\
        &
        \qquad
        - 
        \alpha \KLshort{\pi}{\mu}{x_h}
        \\
        &
        =
        \sum_{x_{h+1} \in \X_{h+1}} \p(x_{h+1} \mid x_h,y_h) \VReg^{\pi,\pi'}(x_{h+1}) - \alpha \KLshort{\pi}{\mu}{x_h}
        .
        \qedhere
    \end{align*}
\end{proof}

\begin{lemma}[Regularized Value Difference Lemma]
\label{lemma: regularized generic value difference }
 Let $\pi,\pi'$ be two policies. Then,
    \begin{align*}
        \VReg^{\pi,t}(x_1) - \VReg^{\pi',t}(x_1)
        =
        \EE[\pi',\p] \brk[s]*{\sum_{h=1}^H  \inner{\pi - \pi' , \QReg^{\pi,t}}[x_h] + \alpha \KLshort{\pi'}{\mu}{x_h} -  \alpha \KLshort{\pi}{\mu}{x_h}}
        .
    \end{align*}
\end{lemma}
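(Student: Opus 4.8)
The plan is to prove, by backward induction on the horizon index $h$, the stronger per-state identity
\begin{align*}
\VReg^{\pi,t}(x_h) - \VReg^{\pi',t}(x_h)
=
\EE[\pi',\p]\brk[s]*{\sum_{h'=h}^H \inner{\pi - \pi', \QReg^{\pi,t}}[x_{h'}] + \alpha \KLshort{\pi'}{\mu}{x_{h'}} - \alpha \KLshort{\pi}{\mu}{x_{h'}} \ \mid  x_h}
\end{align*}
for all $h \in [H+1]$ and $x_h \in \X_h$; the lemma is then the special case $h = 1$. The base case $h = H+1$ holds trivially because $\VReg^{\pi,t}(x_{H+1}) = r^t(x_{H+1}) = \VReg^{\pi',t}(x_{H+1})$ by \Cref{lemma: regularized q generic recursive relation}, while the sum on the right is empty.

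For the inductive step at level $h$, I would use the first recursion of \Cref{lemma: regularized q generic recursive relation} to write the left-hand side as $\inner{\pi,\QReg^{\pi,t}}[x_h] - \inner{\pi',\QReg^{\pi',t}}[x_h]$, then add and subtract the mixed term $\inner{\pi',\QReg^{\pi,t}}[x_h]$ to obtain
\begin{align*}
\VReg^{\pi,t}(x_h) - \VReg^{\pi',t}(x_h) = \inner{\pi - \pi',\QReg^{\pi,t}}[x_h] + \inner{\pi', \QReg^{\pi,t} - \QReg^{\pi',t}}[x_h].
\end{align*}
The first term is precisely the $h' = h$ summand. For the second, the second recursion of \Cref{lemma: regularized q generic recursive relation} gives $\QReg^{\pi,t}(x_h,y_h) - \QReg^{\pi',t}(x_h,y_h) = \sum_{x_{h+1}} \p(x_{h+1}\mid x_h,y_h)\brk*{\VReg^{\pi,t}(x_{h+1}) - \VReg^{\pi',t}(x_{h+1})} + \alpha\KLshort{\pi'}{\mu}{x_h} - \alpha\KLshort{\pi}{\mu}{x_h}$; since the two KL terms do not depend on $y_h$, they pull out of the expectation over $y_h \sim \pi'(\cdot\mid x_h)$ and contribute the $h'=h$ KL part, leaving the residual term $\EE\brk[s]*{\VReg^{\pi,t}(x_{h+1}) - \VReg^{\pi',t}(x_{h+1}) \mid x_h}$ where $x_{h+1}$ is drawn by following $\pi'$ from $x_h$. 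Applying the induction hypothesis at level $h+1$ inside this expectation and invoking the tower property (composing the one-step transition under $\pi'$ with the trajectory law $\Pr_{\pi',\p}[\cdot\mid x_{h+1}]$ yields $\Pr_{\pi',\p}[\cdot\mid x_h]$) produces the tail sum over $h' = h+1,\dots,H$. Summing the three contributions closes the induction.

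I do not expect a genuine obstacle: the argument is routine telescoping. The two places that need care are the sign bookkeeping of the regularization terms --- the $-\alpha\KLshort{\pi}{\mu}{x_h}$ appearing in $\QReg^{\pi,t}$ and the $-\alpha\KLshort{\pi'}{\mu}{x_h}$ in $\QReg^{\pi',t}$ combine into the $+\alpha\KLshort{\pi'}{\mu}{x_h}-\alpha\KLshort{\pi}{\mu}{x_h}$ of the statement --- and keeping straight that the outer expectation is over trajectories generated by $\pi'$ even though the inner products feature $\QReg^{\pi,t}$, the Q-function of $\pi$. A forward telescoping over $h$ would also work, but the backward induction makes the conditioning cleanest.
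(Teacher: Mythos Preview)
Your proposal is correct and follows essentially the same approach as the paper's own proof: the paper also fixes $x_h$, uses the first recursion of \Cref{lemma: regularized q generic recursive relation} to write the value difference as $\inner{\pi,\QReg^{\pi,t}}[x_h]-\inner{\pi',\QReg^{\pi',t}}[x_h]$, adds and subtracts the mixed term to obtain $\inner{\pi-\pi',\QReg^{\pi,t}}[x_h]+\inner{\pi',\QReg^{\pi,t}-\QReg^{\pi',t}}[x_h]$, expands the second piece via the $Q$-recursion to extract the KL contributions and a residual value difference at level $h+1$, and then simply says ``by recursively unrolling the above relation'' rather than phrasing it as a formal backward induction. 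The sign bookkeeping and the fact that the outer expectation is under $\pi'$ are handled identically.
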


\begin{proof}
    Let $x_h \in \X_h$.
    First, by \Cref{lemma: regularized generic value difference },
    \begin{align*}
        \VReg^{\pi,t}(x_h) - & \VReg^{\pi',t}(x_h)
        =
        \\
        &
        = 
        \inner{\pi , \QReg^{\pi,t}}[x_h] - \inner{\pi' , \QReg^{\pi', t}}[x_h]
        \\
        & 
        = 
        \inner{\pi - \pi' , \QReg^{\pi,t}}[x_h] + \inner{\pi' , \QReg^{\pi,t} - \QReg^{\pi',t}}[x_h] 
        \\
        & 
        = 
        \inner{\pi - \pi' , \QReg^{\pi,t}}[x_h]
        \\
        &
        \qquad
        + 
        \alpha \KLshort{\pi'}{\mu}{x_h} - \alpha \KLshort{\pi}{\mu}{x_h} 
        \\
        &
        \qquad 
        + 
        \sum_{y_h \in \Y} \sum_{x_{h+1} \in \X_{h+1}} \pi'(y_h \mid x_h) \p(x_{h+1} \mid x_h, y_h) \brk*{\VReg^{\pi,t}(x_{h+1}) - \VReg^{\pi',t}(x_{h+1})}
        .
    \end{align*}
    Note that for $h=H+1$ for any $\pi,x_{H+1}$ we have $\VReg^{\pi,t}(x_{H+1}) = r^t(x_{H+1})$.
    By recursively unrolling the above relation, we get
    \begin{align*}
        \VReg^{\pi,t}(x_h) - \VReg^{\pi',t}(x_h)
        &
        =
        \EE[\pi',\p] \brk[s]*{ \sum_{h'=h}^H \inner{\pi - \pi' , \QReg^{\pi,t}}[x_{h'}] \mid x_h }
        \\
        &
        \qquad
        + 
        \alpha \EE[\pi',\p] \brk[s]*{ \sum_{h'=h}^H \KLshort{\pi'}{\mu}{x_{h'}} -  \KLshort{\pi}{\mu}{x_{h'}} \mid x_h }
        .
    \end{align*}
    Taking the expectation over the initial state finishes the proof.
\end{proof}

\subsection{Algorithm 1: mirror descent policy optimization}\label{sec:appendix-algorithm-1}

We define the following mirror descent policy optimization algorithm. 
In the first episode the algorithm plays the reference policy, i.e., $\pi_1 = \piRef$.
Then, its update rule for iteration $(t+1)$ is as follows:
\begin{align}\label{eq:generic-algorithm-update-rule}
    \piT[t+1](\cdot \mid x_h) 
    =
    \arg \max_{\pi} \etaT \inner{\pi, \QReg^{\piT, t}}[x_h] - \alpha \etaT  \KLshort{\pi}{\mu}{x_h} - (1 - \alpha \etaT)\KLshort{\pi}{\piT}{x_h},
\end{align}
where $\etaT$ is a learning rate.
The solution can also be made explicit in the following form:
\begin{align}\label{eq:generic-algorithm-update-rule.2}
    \piT[t+1](y_h \mid x_h)
    &
    \propto
    \piRef(y_h \mid x_h)^{\alpha \etaT} \piT(y_h \mid x_h)^{ 1 - \alpha \etaT} e^{\etaT \QReg^{\piT, t}(x_h, y_h) }
    .
\end{align}

To show this, note that by the definition of the KL, we can write this update rule differently:
\begin{align*}
    \piT[t+1](\cdot \mid x_h) 
    &
    =
    \arg \max_{\pi} \etaT \sum_{y_h \in \Y} \pi(y_h \mid x_h) \brk*{\QReg^{\piT, t}(x_h,y_h) - \alpha \log \frac{\piT(y_h \mid x_h)}{\piRef(y_h \mid x_h)}} - \KLshort{\pi}{\piT}{x_h}
    .
\end{align*}
This is exactly the MD step for policy optimization \citep{orabona2019modern, shani2020adaptive}. Thus, the solution in its explicit form is:
\begin{align*}
    \piT[t+1](y_h \mid x_h)
    &
    \propto
    \piT(y_h \mid x_h) e^{\etaT \brk*{\QReg^{\piT, t}(x_h, y_h) - \alpha \log \frac{\piT(y_h \mid x_h)}{\piRef(y_h \mid x_h)}} }
    .
\end{align*}
We recover \Cref{eq:generic-algorithm-update-rule.2} by noticing that $\exp\brk*{-\alpha \etaT \log \frac{\piT(y_h \mid x_h)}{\piRef(y_h \mid x_h)}} = \piT(y_h \mid x_h)^{-\alpha \etaT} \piRef(y_h \mid x_h)^{\alpha \etaT}$.

\subsection{Analysis of algorithm 1}

\begin{lemma}[Fundamental inequality of mirror descent policy optimization for regularized adversarial MDPs]
\label{lemma:fundamental-inequality}
The following holds when running mirror descent policy optimization (\Cref{eq:generic-algorithm-update-rule}) in  a regularized adversarial MDP, for every policy $\pi$ and every episode $t$,
\begin{align*}
        \KLp{\pi}{\piT[t+1]}
        &
        \le
        (1 - \etaT \alpha) \KLp{\pi}{\piT} + 2 \etaT^2 \EE[\pi,\p] \brk[s]*{\sum_{h=1}^H \norm{ \QReg^{\piT, t}(x_h, \cdot) - \alpha \log \frac{\piT(\cdot \mid x_h)}{\piRef(\cdot \mid x_h)}}_\infty^2}
        \\
        &
        \qquad
        +
        \etaT \brk*{\VReg^{\piT,t}(x_1) - \VReg^{\pi,t}(x_1)}
        .
\end{align*}
\end{lemma}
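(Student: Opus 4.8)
The plan is to prove this state-by-state and then aggregate via the KL decomposition of \Cref{lemma:KL-decomposition}. Fix a state $x_h \in \X_h$ and abbreviate the ``shifted'' vector $g_t(x_h, \cdot) = \QReg^{\piT,t}(x_h, \cdot) - \alpha \log \frac{\piT(\cdot \mid x_h)}{\piRef(\cdot \mid x_h)}$, so that the update rule \Cref{eq:generic-algorithm-update-rule} is exactly the standard mirror-descent step $\piT[t+1](\cdot \mid x_h) = \arg\max_\pi \etaT \inner{\pi, g_t(x_h,\cdot)} - \KLshort{\pi}{\piT}{x_h}$ at $x_h$. I would then invoke the classic three-point / Pythagorean identity for the KL Bregman divergence: for any comparator distribution $\pi(\cdot\mid x_h)$,
\begin{align*}
\KLshort{\pi}{\piT[t+1]}{x_h}
=
\KLshort{\pi}{\piT}{x_h} - \KLshort{\piT[t+1]}{\piT}{x_h} + \etaT\inner{\piT[t+1]-\pi, g_t(x_h,\cdot)}.
\end{align*}
Bounding $\etaT\inner{\piT[t+1]-\piT, g_t} - \KLshort{\piT[t+1]}{\piT}{x_h}$ by $\frac{\etaT^2}{2}\norm{g_t(x_h,\cdot)}_\infty^2$ via Pinsker/Hölder (the standard mirror-descent lemma, possibly with the constant $2$ rather than $1/2$ to be safe, matching the factor $2$ in the statement), the per-state inequality becomes
\begin{align*}
\KLshort{\pi}{\piT[t+1]}{x_h}
\le
\KLshort{\pi}{\piT}{x_h} + 2\etaT^2\norm{g_t(x_h,\cdot)}_\infty^2 + \etaT\inner{\piT - \pi, g_t(x_h,\cdot)}.
\end{align*}

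Next I would unpack the linear term $\inner{\piT - \pi, g_t(x_h,\cdot)}$. By definition of $g_t$, $\inner{\piT-\pi, g_t(x_h,\cdot)} = \inner{\piT-\pi, \QReg^{\piT,t}}[x_h] - \alpha\inner{\piT - \pi, \log\frac{\piT(\cdot\mid x_h)}{\piRef(\cdot\mid x_h)}}$. The second piece I would rewrite using the identity $\inner{\pi - \piT, \log\frac{\piT}{\piRef}} = \KLshort{\pi}{\piRef}{x_h} - \KLshort{\pi}{\piT}{x_h} - \KLshort{\piT}{\piRef}{x_h}$, so that $-\alpha\inner{\piT-\pi,\log\tfrac{\piT}{\piRef}} = \alpha\big(\KLshort{\pi}{\piRef}{x_h} - \KLshort{\pi}{\piT}{x_h} - \KLshort{\piT}{\piRef}{x_h}\big)$. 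Substituting and collecting the $\KLshort{\pi}{\piT}{x_h}$ coefficient gives the contraction factor $(1-\etaT\alpha)$:
\begin{align*}
\KLshort{\pi}{\piT[t+1]}{x_h}
\le
(1-\etaT\alpha)\KLshort{\pi}{\piT}{x_h} + 2\etaT^2\norm{g_t(x_h,\cdot)}_\infty^2
+ \etaT\big(\inner{\piT - \pi,\QReg^{\piT,t}}[x_h] + \alpha\KLshort{\pi}{\piRef}{x_h} - \alpha\KLshort{\piT}{\piRef}{x_h}\big).
\end{align*}

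Finally, I would take the expectation $\EE[\pi,\p][\cdot]$ of both sides and sum over $h = 1,\dots,H$. On the left, $\EE[\pi,\p]\sum_{h=1}^H \KLshort{\pi}{\piT[t+1]}{x_h} = \KLp{\pi}{\piT[t+1]}$ by \Cref{lemma:KL-decomposition}; likewise the two $\KLp{}{}$ terms on the right assemble from their per-state pieces. The leftover bracketed sum, under $\EE[\pi,\p]$, is precisely $\EE[\pi,\p]\sum_{h=1}^H \inner{\piT-\pi,\QReg^{\piT,t}}[x_h] + \alpha\KLshort{\pi}{\piRef}{x_h} - \alpha\KLshort{\piT}{\piRef}{x_h}$, which by the regularized value difference lemma (\Cref{lemma: regularized generic value difference }, or equivalently \Cref{lemma:value-difference-main} in preference form) equals $\VReg^{\piT,t}(x_1) - \VReg^{\pi,t}(x_1)$. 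Note the roles of $\pi$ and $\pi'$ in that lemma: here the expectation is taken under the \emph{comparator} $\pi$ while the advantage $\QReg$ is that of $\piT$, so I must apply the value difference lemma with its ``$\pi$'' set to $\piT$ and its ``$\pi'$'' set to $\pi$, which yields exactly $\VReg^{\piT,t}(x_1) - \VReg^{\pi,t}(x_1)$ with expectation under $\pi$ — matching the statement. Collecting everything gives the claimed inequality. The main obstacle I anticipate is purely bookkeeping: getting the direction of every KL term and the $\pi$ vs.\ $\piT$ assignment in the value difference lemma exactly right, since a single transposition flips a sign and breaks the contraction; the analytic content (three-point identity plus the mirror-descent smoothing bound) is routine.
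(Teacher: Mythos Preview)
Your proposal is correct and follows essentially the same route as the paper: a per-state mirror-descent inequality (the paper cites \cite{munos2023nash}, Lemma~2, for the step you derive from the three-point identity plus Pinsker/H\"older), then the same KL identity to extract the $(1-\etaT\alpha)$ contraction, and finally \Cref{lemma:KL-decomposition} together with \Cref{lemma: regularized generic value difference } (with the roles assigned exactly as you note) to aggregate into $\VReg^{\piT,t}(x_1)-\VReg^{\pi,t}(x_1)$.
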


\begin{proof}
    Fix a state $x_h \in \X_h$.
    We start by applying \citet[Lemma 2]{munos2023nash} with $\pi^{+} = \piT[t+1]$, $\pi^{-} = \piT$ and the vector $\delta(y) = \etaT \brk*{\QReg^{\piT, t}(x_h, y) - \alpha \log \frac{\piT(y \mid x_h)}{\piRef(y \mid x_h)}}$.
    This implies that for any policy $\pi$,
    \begin{align*}
        \KLshort{\pi}{\piT[t+1]}{x_h}
        & 
        \le
        \KLshort{\pi}{\piT}{x_h} + 2 \etaT^2 \norm{ \QReg^{\piT, t}(x_h, \cdot) - \alpha \log \frac{\piT(\cdot \mid x_h)}{\piRef(\cdot \mid x_h)}}_\infty^2
        \\
        &
        \qquad
        +
        \etaT \inner{\piT(\cdot \mid x_h) - \pi(\cdot \mid x_h) , \QReg^{\piT, t}(x_h, \cdot) - \alpha \log \frac{\piT(\cdot \mid x_h)}{\piRef(\cdot \mid x_h)}}
        .
    \end{align*}
    Next, we plug this into \Cref{lemma:KL-decomposition} to obtain
    \begin{align*}
        \KLp{\pi}{\piT[t+1]}
        &
        =
        \EE[\pi,\p] \brk[s]*{\sum_{h=1}^H \KLshort{\pi}{\piT[t+1]}{x_h} }
        \\
        &
        \le
        \EE[\pi,\p] \brk[s]*{\sum_{h=1}^H \KLshort{\pi}{\piT}{x_h} } 
        \\
        &
        \qquad
        + 
        2 \etaT^2 \EE[\pi,\p] \brk[s]*{\sum_{h=1}^H \norm{ \QReg^{\piT, t}(x_h, \cdot) - \alpha \log \frac{\piT(\cdot \mid x_h)}{\piRef(\cdot \mid x_h)}}_\infty^2}
        \\
        &
        \qquad
        +
        \etaT \EE[\pi,\p] \brk[s]*{\sum_{h=1}^H \inner{\piT(\cdot \mid x_h) - \pi(\cdot \mid x_h) , \QReg^{\piT, t}(x_h, \cdot) - \alpha \log \frac{\piT(\cdot \mid x_h)}{\piRef(\cdot \mid x_h)}}}
        .
    \end{align*}
    Note that
    \begin{align*}
        &
        \inner{\piT(\cdot \mid x_h) - \pi(\cdot \mid x_h) , \log \frac{\piT(\cdot \mid x_h)}{\piRef(\cdot \mid x_h)}}
        =
        \\
        &
        \qquad
        =
        \KLshort{\piT}{\piRef}{x_h} - \inner{\pi(\cdot \mid x_h) , \log \frac{\piT(\cdot \mid x_h)}{\piRef(\cdot \mid x_h)}}
        \\
        &
        \qquad
        =
        \KLshort{\piT}{\piRef}{x_h} - \inner{\pi(\cdot \mid x_h) , \log \frac{\piT(\cdot \mid x_h)}{\piRef(\cdot \mid x_h)}} + \KLshort{\pi}{\piRef}{x_h} - \KLshort{\pi}{\piRef}{x_h}
        \\
        &
        \qquad
        =
        \KLshort{\piT}{\piRef}{x_h} + \KLshort{\pi}{\piT}{x_h} - \KLshort{\pi}{\piRef}{x_h}
        ,
    \end{align*}
    where the last relation follows simply because:
    \begin{align*}
        \KLshort{\pi}{\piRef}{x_h} - & \inner{\pi(\cdot \mid x_h) , \log \frac{\piT(\cdot \mid x_h)}{\piRef(\cdot \mid x_h)}}
        =
        \\
        &
        =
        \sum_{y_h \in \Y} \pi(y_h \mid x_h) \brk*{ \log \frac{\pi(y_h \mid x_h)}{\piRef(y_h \mid x_h)} - \log \frac{\piT(y_h \mid x_h)}{\piRef(y_h \mid x_h)} }
        \\
        &
        =
        \sum_{y_h \in \Y} \pi(y_h \mid x_h) \log \frac{\pi(y_h \mid x_h)}{\piT(y_h \mid x_h)}
        \\
        &
        =
        \KLshort{\pi}{\piT}{x_h}
        .
    \end{align*}
    Thus, we get
    \begin{align*}
        \KLp{\pi}{\piT[t+1]}
        &
        \le
        \EE[\pi,\p] \brk[s]*{\sum_{h=1}^H \KLshort{\pi}{\piT}{x_h} }  
        \\
        &
        \qquad
        + 
        2 \etaT^2 \EE[\pi,\p] \brk[s]*{\sum_{h=1}^H \norm{ \QReg^{\piT, t}(x_h, \cdot) - \alpha \log \frac{\piT(\cdot \mid x_h)}{\piRef(\cdot \mid x_h)}}_\infty^2}
        \\
        &
        \qquad
        +
        \etaT \EE[\pi,\p] \brk[s]*{\sum_{h=1}^H \inner{\piT - \pi , \QReg^{\piT, t}}[x_h] }
        \\
        &
        \qquad
        - 
        \etaT \alpha \EE[\pi,\p] \brk[s]*{\sum_{h=1}^H \KLshort{\piT}{\piRef}{x_h} + \KLshort{\pi}{\piT}{x_h} - \KLshort{\pi}{\piRef}{x_h} }
        \\
        &
        =
        (1 - \etaT \alpha) \EE[\pi,\p] \brk[s]*{\sum_{h=1}^H \KLshort{\pi}{\piT}{x_h} }
        \\
        &
        \qquad
        + 
        2 \etaT^2 \EE[\pi,\p] \brk[s]*{\sum_{h=1}^H \norm{ \QReg^{\piT, t}(x_h, \cdot) - \alpha \log \frac{\piT(\cdot \mid x_h)}{\piRef(\cdot \mid x_h)}}_\infty^2}
        \\
        &
        \qquad
        +
        \etaT \EE[\pi,\p] \brk[s]*{\sum_{h=1}^H \inner{\piT - \pi , \QReg^{\piT, t}}[x_h] + \alpha \KLshort{\pi}{\piRef}{x_h} - \alpha \KLshort{\piT}{\piRef}{x_h} }
        \\
        &
        =
        (1 - \etaT \alpha) \KLp{\pi}{\piT} + 2 \etaT^2 \EE[\pi,\p] \brk[s]*{\sum_{h=1}^H \norm{ \QReg^{\piT, t}(x_h, \cdot) - \alpha \log \frac{\piT(\cdot \mid x_h)}{\piRef(\cdot \mid x_h)}}_\infty^2}
        \\
        &
        \qquad
        +
        \etaT \brk*{\VReg^{\piT,t}(x_1) - \VReg^{\pi,t}(x_1)}
        ,
    \end{align*}
    where the third relation is by \Cref{lemma:KL-decomposition,lemma: regularized generic value difference }.
\end{proof}

\subsection{Algorithm 2: mixture mirror descent policy optimization}

Define the mixture policy in iteration $t$ as:
\begin{align*}
    \piRefT(y \mid x)
    =
    \frac{\piT(y \mid x)^{1 - \etaT \alpha} \piRef(y \mid x)^{\etaT \alpha}}{\sum_{y' \in \Y} \piT(y' \mid x)^{1 - \etaT \alpha} \piRef(y' \mid x)^{\etaT \alpha}}
    .
\end{align*}
We now define the following mixture mirror descent policy optimization algorithm. 
In the first episode the algorithm plays the reference policy, i.e., $\pi_1 = \piRef$.
Then, its update rule for iteration $(t+1)$ is as follows:
\begin{align}
\label{eq:generic-algorithm-update-rule-mixture}
    \piT[t+1](\cdot \mid x_h) 
    &
    =
    \arg \max_{\pi} \etaT \sum_{y_h \in \Y} \pi(y_h \mid x_h) \QReg^{\piRefT, t}(x_h,y_h) - \KLshort{\pi}{\piRefT}{x_h}
    .
\end{align}
The solution can be made explicit:
\begin{align*}
    \piT[t+1](y_h \mid x_h)
    &
    \propto
    \piRefT(y_h \mid x_h) e^{\etaT \QReg^{\piRefT, t}(x_h, y_h)}
    .
\end{align*}

\subsection{Analysis of algorithm 2}

\begin{lemma}[Fundamental inequality of mixture mirror descent policy optimization for regularized adversarial MDPs]
\label{lemma:fundamental-inequality-mixture}
The following holds when running mixture mirror descent policy optimization (\Cref{eq:generic-algorithm-update-rule-mixture}) in  a regularized adversarial MDP, for every policy $\pi$ and every episode $t$,
\begin{align*}
        \KLp{\pi}{\piT[t+1]}
        &
        \le
        (1 - \etaT \alpha) \KLp{\pi}{\piT} + 2 \etaT^2 \EE[\pi,\p] \brk[s]*{\sum_{h=1}^H \norm{ \QReg^{\piRefT, t}(x_h, \cdot)}_\infty^2}
        \\
        &
        \qquad
        +
        \etaT \brk*{\VReg^{\piRefT,t}(x_1) - \VReg^{\pi,t}(x_1)}
        .
\end{align*}
\end{lemma}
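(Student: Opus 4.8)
The plan is to reuse the template of the proof of \Cref{lemma:fundamental-inequality} almost verbatim, the only structural change being that the mirror-descent anchor is now the geometric mixture $\piRefT$ instead of $\piT$; the extra divergence terms this substitution creates are then cleared using the closed form of $\piRefT$. Concretely, I would first fix a state $x_h \in \X_h$ and apply \citet[Lemma~2]{munos2023nash} to the update \Cref{eq:generic-algorithm-update-rule-mixture}, with $\pi^{+} = \piT[t+1]$, $\pi^{-} = \piRefT$ and the perturbation $\delta(y) = \etaT \QReg^{\piRefT, t}(x_h, y)$; note that, unlike in \Cref{lemma:fundamental-inequality}, $\delta$ carries no $-\alpha\log(\piT/\piRef)$ correction, since the reference-KL penalty is already absorbed into the anchor $\piRefT$. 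This yields, for every policy $\pi$,
\begin{align*}
    \KLshort{\pi}{\piT[t+1]}{x_h}
    \le
    \KLshort{\pi}{\piRefT}{x_h}
    + 2\etaT^2 \norm{\QReg^{\piRefT, t}(x_h, \cdot)}_\infty^2
    + \etaT \inner{\piRefT - \pi, \QReg^{\piRefT, t}}[x_h]
    .
\end{align*}
Taking $\EE[\pi,\p]$, summing over $h \in [H]$, and applying \Cref{lemma:KL-decomposition} to the left-hand side turns this into a global bound on $\KLp{\pi}{\piT[t+1]}$ controlled by $\EE[\pi,\p]\brk[s]*{\sum_h \KLshort{\pi}{\piRefT}{x_h}}$, the squared-norm term already in the statement, and the aggregated advantage $\etaT\EE[\pi,\p]\brk[s]*{\sum_h \inner{\piRefT - \pi, \QReg^{\piRefT, t}}[x_h]}$.

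Next I would convert the aggregated advantage into a value difference using \Cref{lemma: regularized generic value difference } with the roles $\pi \mapsto \piRefT$, $\pi' \mapsto \pi$, which rewrites it as $\etaT\brk*{\VReg^{\piRefT,t}(x_1) - \VReg^{\pi,t}(x_1)} - \etaT\alpha\EE[\pi,\p]\brk[s]*{\sum_h \KLshort{\pi}{\piRef}{x_h} - \KLshort{\piRefT}{\piRef}{x_h}}$. After substituting and regrouping, the claimed bound follows once I establish the per-state inequality
\begin{align*}
    \KLshort{\pi}{\piRefT}{x_h} - \etaT\alpha\KLshort{\pi}{\piRef}{x_h} + \etaT\alpha\KLshort{\piRefT}{\piRef}{x_h}
    \le
    (1 - \etaT\alpha)\KLshort{\pi}{\piT}{x_h}
    ,
\end{align*}
because summing it over $h$ under $\EE[\pi,\p]$ and reapplying \Cref{lemma:KL-decomposition} converts the right-hand side back into $(1-\etaT\alpha)\KLp{\pi}{\piT}$.

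The only genuinely new ingredient --- and the step I expect to be the main obstacle --- is this per-state inequality, which I would establish from the closed form of the mixture. Writing $\lambda = \etaT\alpha$ (with the standing assumption $\lambda \le 1$), we have $\log\piRefT(y\mid x_h) = (1-\lambda)\log\piT(y\mid x_h) + \lambda\log\piRef(y\mid x_h) - \log Z(x_h)$, where $Z(x_h) = \sum_{y}\piT(y\mid x_h)^{1-\lambda}\piRef(y\mid x_h)^{\lambda} \le 1$ by weighted AM--GM. Expanding the divergences from this identity gives $\KLshort{\pi}{\piRefT}{x_h} = (1-\lambda)\KLshort{\pi}{\piT}{x_h} + \lambda\KLshort{\pi}{\piRef}{x_h} + \log Z(x_h)$, while a parallel computation of $\KLshort{\piRefT}{\piRef}{x_h}$ yields the exact identity $\log Z(x_h) + \lambda\KLshort{\piRefT}{\piRef}{x_h} = -(1-\lambda)\KLshort{\piRefT}{\piT}{x_h}$. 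Adding these, the left-hand side of the per-state inequality equals $(1-\lambda)\KLshort{\pi}{\piT}{x_h} - (1-\lambda)\KLshort{\piRefT}{\piT}{x_h} \le (1-\lambda)\KLshort{\pi}{\piT}{x_h}$, which is precisely what is needed; note that the weaker bound $\KLshort{\pi}{\piRefT}{x_h} \le (1-\lambda)\KLshort{\pi}{\piT}{x_h} + \lambda\KLshort{\pi}{\piRef}{x_h}$ of \citet[Lemma~1]{munos2023nash} would \emph{not} suffice, as it leaves the $\KLshort{\piRefT}{\piRef}{x_h}$ term uncancelled. Taking the expectation over the initial state at the very end absorbs the $x_1$-indexed value terms, exactly as in the proof of \Cref{lemma: regularized generic value difference }, which completes the argument.
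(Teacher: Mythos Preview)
Your proposal is correct and follows essentially the same route as the paper: apply \citet[Lemma~2]{munos2023nash} at each state with anchor $\piRefT$, aggregate via \Cref{lemma:KL-decomposition}, and combine with \Cref{lemma: regularized generic value difference }. The only cosmetic difference is ordering: the paper first replaces $\KLshort{\pi}{\piRefT}{x_h}$ using \citet[Lemma~1]{munos2023nash} and then applies the value-difference lemma, whereas you apply the value-difference lemma first and then establish the per-state KL inequality. Your per-state inequality $\KLshort{\pi}{\piRefT}{x_h} - \etaT\alpha\,\KLshort{\pi}{\piRef}{x_h} + \etaT\alpha\,\KLshort{\piRefT}{\piRef}{x_h} \le (1-\etaT\alpha)\KLshort{\pi}{\piT}{x_h}$ is in fact exactly \citet[Lemma~1]{munos2023nash} rearranged --- that lemma already carries the $-\etaT\alpha\,\KLshort{\piRefT}{\piRef}{x_h}$ term (this is precisely what the paper invokes), so your remark that it ``would not suffice'' is a misreading. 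Your closed-form derivation via $\log Z(x_h)$ simply reproves that lemma, which does no harm.
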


\begin{proof}
    Fix a state $x_h \in \X_h$.
    We start by applying \citet[Lemma 2]{munos2023nash} with $\pi^{+} = \piT[t+1]$, $\pi^{-} = \piRefT$ and the vector $\delta(y) = \etaT \QReg^{\piRefT, t}(x_h, y)$.
    This implies that for any policy $\pi$,
    \begin{align*}
        \KLshort{\pi}{\piT[t+1]}{x_h}
        \le
        \KLshort{\pi}{\piRefT}{x_h} + \etaT \inner{\piRefT - \pi , \QReg^{\piRefT, t}}[x_h] + 2 \etaT^2 \norm{\QReg^{\piRefT, t}(x_h,\cdot)}_\infty^2
        .
    \end{align*}
    Next, we plug this into \Cref{lemma:KL-decomposition} to obtain
    \begin{align*}
        \KLp{\pi}{\piT[t+1]}
        &
        =
        \EE[\pi,\p] \brk[s]*{\sum_{h=1}^H \KLshort{\pi}{\piT[t+1]}{x_h} }
        \\
        &
        \le
        \EE[\pi,\p] \brk[s]*{\sum_{h=1}^H \KLshort{\pi}{\piRefT}{x_h} }
        \\
        & 
        \qquad
        +
        \etaT \EE[\pi,\p] \brk[s]*{\sum_{h=1}^H \inner{\piRefT - \pi , \QReg^{\piRefT, t}}[x_h] }
        +
        2 \etaT^2 \EE[\pi,\p] \brk[s]*{\sum_{h=1}^H \norm{\QReg^{\piRefT, t}(x_h,\cdot)}_\infty^2}
        \\
        &
        \le
        \EE[\pi,\p] \brk[s]*{\sum_{h=1}^H (1 - \etaT \alpha) \KLshort{\pi}{\piT}{x_h} + \etaT \alpha \KLshort{\pi}{\piRef}{x_h} - \etaT \alpha \KLshort{\piRefT}{\piRef}{x_h}}
        \\
        &
        \qquad
        +
        \etaT \EE[\pi,\p] \brk[s]*{\sum_{h=1}^H \inner{\piRefT - \pi , \QReg^{\piRefT, t}}[x_h] }
        +
        2 \etaT^2 \EE[\pi,\p] \brk[s]*{\sum_{h=1}^H \norm{\QReg^{\piRefT, t}(x_h,\cdot)}_\infty^2}
        \\
        &
        =
        (1 - \etaT \alpha) \EE[\pi,\p] \brk[s]*{\sum_{h=1}^H \KLshort{\pi}{\piT}{x_h} } +
        2 \etaT^2 \EE[\pi,\p] \brk[s]*{\sum_{h=1}^H \norm{\QReg^{\piRefT, t}(x_h,\cdot)}_\infty^2}
        \\
        &
        \qquad
        +
        \etaT \EE[\pi,\p] \brk[s]*{\sum_{h=1}^H \inner{\piRefT - \pi , \QReg^{\piRefT, t}}[x_h] + \alpha \KLshort{\pi}{\piRef}{x_h} - \alpha \KLshort{\piRefT}{\piRef}{x_h} }
        \\
        &
        =
        (1 - \etaT \alpha) \KLp{\pi}{\piT} +
        2 \etaT^2 \EE[\pi,\p] \brk[s]*{\sum_{h=1}^H \norm{\QReg^{\piRefT, t}(x_h,\cdot)}_\infty^2}
        \\
        &
        \qquad
        +
        \etaT \brk*{\VReg^{\piRefT,t}(x_1) - \VReg^{\pi,t}(x_1)}
        ,
    \end{align*}
    where the second inequality is by \citet[Lemma 1]{munos2023nash}, and the last relation are by \Cref{lemma:KL-decomposition,lemma: regularized generic value difference }.
\end{proof}

\subsection{Bounding the Q-function}

Define $\muMin$ as the minimal positive probability assigned by the reference policy $\piRef$, i.e., $\muMin = \min_{(x,y) \in \X \times \Y : \piRef(y \mid x) > 0} \piRef(y \mid x)$.

\begin{lemma}
\label{lem:bound-Q-magnitude}
    For the choice $\etaT = \frac{2}{\alpha (t + 2)}$, in every iteration $t$ of mirror descent policy optimization it holds that
    \begin{align*}
        \alpha (H+1) \log \muMin
        \le
        \QReg^{\piT, t}(x, y) - \alpha \log \frac{\piT(y \mid x)}{\piRef(y \mid x_h)}
        \le
        2 - \alpha H \log \muMin
        \qquad
        \forall (x,y) \in \X \times \Y
        .
    \end{align*}
\end{lemma}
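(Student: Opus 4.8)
The plan is to bound the two summands $\QReg^{\piT, t}(x,y)$ and $-\alpha\log\frac{\piT(y\mid x)}{\piRef(y\mid x)}$ separately, from above and from below; three of the four resulting bounds are elementary and the fourth is the crux. First I would note, from the explicit update rule \Cref{eq:generic-algorithm-update-rule.2} together with $\pi_1=\piRef$, that an easy induction gives that every $\piT$ is supported on $\mathrm{supp}(\piRef)$, so all the log-ratios are well defined; and since $\piT(y\mid x)\le 1$ while $\piRef(y\mid x)\ge\muMin$ on its support, both $\KLshort{\piT}{\piRef}{x}\le -\log\muMin$ and $\log\frac{\piT(y\mid x)}{\piRef(y\mid x)}\le -\log\muMin$ hold for all $x,y$ (and $\log\muMin\le 0$ since $\muMin\le 1$). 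Plugging $r^t\in[0,1]$ and $0\le\KLshort{\piT}{\piRef}{\cdot}\le -\log\muMin$ into the definition of $\QReg^{\piT,t}$ then gives, for every $h\in[H]$,
\[
\alpha H\log\muMin \;\le\; \alpha(H-h+1)\log\muMin \;\le\; \QReg^{\piT,t}(x_h,y_h) \;\le\; 1 .
\]
Combining the lower bound here with $-\alpha\log\frac{\piT(y\mid x)}{\piRef(y\mid x)}\ge\alpha\log\muMin$ already yields the claimed lower bound $\QReg^{\piT,t}(x,y)-\alpha\log\frac{\piT(y\mid x)}{\piRef(y\mid x)}\ge\alpha(H+1)\log\muMin$.

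The one nontrivial ingredient is a matching lower bound on $\log\frac{\piT(y\mid x)}{\piRef(y\mid x)}$, i.e.\ showing that $\piT$ never assigns a vanishingly small probability relative to $\piRef$. For this I would track the multiplicative update directly: writing $a_t(x,y):=\log\frac{\piT(y\mid x)}{\piRef(y\mid x)}$ and $Z_t(x)$ for the normalizer in \Cref{eq:generic-algorithm-update-rule.2}, the update reads $a_{t+1}(x,y)=(1-\alpha\etaT)\,a_t(x,y)+\etaT\,\QReg^{\piT,t}(x,y)-\log Z_t(x)$. I would bound $\log Z_t(x)\le\etaT$ as follows: since $\QReg^{\piT,t}\le 1$ and $\sum_{y'}\piRef(y'\mid x)^{\alpha\etaT}\piT(y'\mid x)^{1-\alpha\etaT}\le 1$ by H\"older's inequality with conjugate exponents $1/(\alpha\etaT)$ and $1/(1-\alpha\etaT)$ (both $>1$ since $\alpha\etaT\in(0,1)$ for the prescribed $\etaT$), we get $Z_t(x)=\sum_{y'}\piRef(y'\mid x)^{\alpha\etaT}\piT(y'\mid x)^{1-\alpha\etaT}e^{\etaT\QReg^{\piT,t}(x,y')}\le e^{\etaT}$. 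Plugging this and $\QReg^{\piT,t}(x,y)\ge\alpha H\log\muMin$ into the recursion, and using $1-\alpha\etaT\in(0,1)$, gives the affine recursion $a_{t+1}(x,y)\ge(1-\alpha\etaT)\,a_t(x,y)+\etaT\bigl(\alpha H\log\muMin-1\bigr)$. Its fixed point is $L:=H\log\muMin-\frac{1}{\alpha}$, which satisfies $L\le 0=a_1(x,y)$, so a one-line induction yields $a_t(x,y)\ge L$ for all $t$, that is, $-\alpha\log\frac{\piT(y\mid x)}{\piRef(y\mid x)}\le 1-\alpha H\log\muMin$.

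Finally, combining $\QReg^{\piT,t}(x,y)\le 1$ with the last inequality gives $\QReg^{\piT,t}(x,y)-\alpha\log\frac{\piT(y\mid x)}{\piRef(y\mid x)}\le 2-\alpha H\log\muMin$, which together with the lower bound from the first paragraph is exactly the claim. The main obstacle is precisely this lower bound on $\log\frac{\piT}{\piRef}$: unlike the other three bounds it cannot be read off statically from $\piT$ being a probability distribution, and requires unrolling the multiplicative update, controlling the partition function $Z_t(x)$ via the H\"older bound, and recognizing the resulting inequality as an affine contraction toward a fixed point that lies below the initial log-ratio $0$.
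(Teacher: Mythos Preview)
Your proof is correct and follows essentially the same route as the paper: bound $\QReg^{\piT,t}$ via $r^t\in[0,1]$ and $\KLshort{\piT}{\piRef}{\cdot}\le -\log\muMin$, bound $\log\frac{\piT}{\piRef}$ from above trivially, and from below by tracking the multiplicative update, controlling the normalizer, and unrolling the resulting one-step recursion. The only cosmetic differences are that the paper bounds the partition function via Jensen's inequality (on $u\mapsto u^{\alpha\etaT}$) rather than H\"older, and it literally ``plugs in $\etaT$ and unrolls'' the recursion whereas your fixed-point induction is slightly cleaner and uses only $\alpha\etaT\in(0,1)$.
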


\begin{proof}
    Follows directly from \Cref{lemma:naive-bound-Q,lemma:bound-log-piT-mu}.
\end{proof}

\begin{lemma}
\label{lemma:naive-bound-Q}
    For every iteration $t$ of mirror descent policy optimization it holds that 
    \begin{align*}
        \alpha H \log \muMin
        \le
        \QReg^{\piT,t}(x,y)
        \le 
        1
        \qquad
        \forall (x,y) \in \X \times \Y
        .
    \end{align*}
    The same holds for $\QReg^{\piRefT,t}$ when running mixture mirror descent policy optimization.
\end{lemma}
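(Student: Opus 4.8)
The plan is to treat the two inequalities separately, with the upper bound being essentially free and the lower bound resting on a support argument for the iterates. For the upper bound, in the definition $\QReg^{\piT,t}(x,y)=\EE[\piT,\p][r^t(x_{H+1})-\alpha\sum_{h'=h}^H\KLshort{\piT}{\mu}{x_{h'}}\mid x,y]$ the reward satisfies $r^t\le 1$ and every KL term is nonnegative, so $\QReg^{\piT,t}(x,y)\le\EE[\piT,\p][r^t(x_{H+1})\mid x,y]\le 1$; the identical one-line bound applies to $\QReg^{\piRefT,t}$.

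For the lower bound, the key auxiliary fact is that every iterate $\piT$ — and likewise every mixture iterate $\piRefT$ — has exactly the same support as the reference policy $\mu=\piRef$. I would prove this by induction on $t$, jointly with finiteness of $\QReg^{\piT,t}$: the base case is $\pi_1=\piRef$, and for the inductive step the explicit update \Cref{eq:generic-algorithm-update-rule.2}, $\piT[t+1](y\mid x)\propto\piRef(y\mid x)^{\alpha\etaT}\piT(y\mid x)^{1-\alpha\etaT}e^{\etaT\QReg^{\piT,t}(x,y)}$, has a factor $\piRef(y\mid x)^{\alpha\etaT}$ that vanishes exactly off $\mathrm{supp}(\mu)$, a factor $\piT(y\mid x)^{1-\alpha\etaT}$ whose support lies in $\mathrm{supp}(\mu)$ by the inductive hypothesis, and an exponential factor that is strictly positive and finite — finite because under the hypothesis $\mathrm{supp}(\piT(\cdot\mid x'))\subseteq\mathrm{supp}(\mu(\cdot\mid x'))$ at every state, making all the KL terms that define $\QReg^{\piT,t}$ finite. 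The mixture case is the same, since $\piRefT(y\mid x)\propto\piT(y\mid x)^{1-\etaT\alpha}\mu(y\mid x)^{\etaT\alpha}$ again has support exactly $\mathrm{supp}(\mu)$.

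Given this, I would bound the per-state KL by splitting $\KLshort{\piT}{\mu}{x}=\sum_y\piT(y\mid x)\log\piT(y\mid x)-\sum_y\piT(y\mid x)\log\mu(y\mid x)$: the first sum is a negative entropy, hence $\le 0$, and the second runs only over $y\in\mathrm{supp}(\mu)$, where $\mu(y\mid x)\ge\muMin$, so it is at most $\log(1/\muMin)$; thus $\KLshort{\piT}{\mu}{x}\le\log(1/\muMin)$ uniformly in $x$ (and the same for $\piRefT$). Since $r^t\ge 0$ and the sum $\sum_{h'=h}^H\KLshort{\piT}{\mu}{x_{h'}}$ has at most $H$ terms, this yields $\QReg^{\piT,t}(x,y)\ge-\alpha H\log(1/\muMin)=\alpha H\log\muMin$, and likewise for $\QReg^{\piRefT,t}$. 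I expect the only mildly delicate point to be stating the joint support/finiteness induction cleanly; every estimate after that is a one-liner.
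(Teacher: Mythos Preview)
Your proposal is correct and follows essentially the same approach as the paper: both argue that the support of $\piT$ (and $\piRefT$) is contained in that of $\piRef$, then split $\KLshort{\piT}{\piRef}{x}$ into a nonpositive negative-entropy term and a term bounded by $\log(1/\muMin)$, and finish via $r^t\in[0,1]$. The only cosmetic difference is that you spell out the support claim as a joint induction with finiteness of $\QReg^{\piT,t}$, whereas the paper simply asserts it in one line from the form of the OMD update.
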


\begin{proof}
    By the OMD optimization problem, $\piT(y \mid x)$ will not be positive unless $\piRef(y \mid x) > 0$ (the same holds for $\piRefT$).
    Thus, for every $x \in \X$ we have
    \begin{align*}
        0
        \le
        \KL{\piT}{\piRef}[x]
        &
        =
        \sum_{y \in \Y} \piT(y \mid x) \log \frac{\piT(y \mid x)}{\piRef(y \mid x)}
        \\
        &
        =
        \sum_{y \in \Y} \piT(y \mid x) \log \piT(y \mid x) - \sum_{y \in \Y} \piT(y \mid x) \log \piRef(y \mid x)
        \\
        & 
        \le
        \sum_{y \in \Y} \piT(y \mid x) \log \frac{1}{\piRef(y \mid x)}
        \le
        \log \frac{1}{\muMin}
        .
    \end{align*}
    Now, by definition, the Q-function is bounded from above by $1$ and from below by $\alpha H \log \muMin$.
\end{proof}

\begin{lemma}
\label{lemma:bound-log-piT-mu}
    For the choice $\etaT = \frac{2}{\alpha (t + 2)}$, in every iteration $t$ of mirror descent policy optimization it holds that
    \begin{align*}
        H \log \muMin - \frac{1}{\alpha}
        \le
        \log \frac{\piT(y \mid x)}{\piRef(y \mid x)}
        \le
        \log \frac{1}{\muMin}
        \qquad
        \forall (x,y) \in \X \times \Y
        .
    \end{align*}
\end{lemma}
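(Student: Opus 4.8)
The plan is to control the log-ratio $\ell_t(x,y) \triangleq \log\frac{\piT(y\mid x)}{\piRef(y\mid x)}$ directly from the explicit update \Cref{eq:generic-algorithm-update-rule.2}, proving the upper bound with essentially no work and the lower bound by induction on $t$. For the upper bound, note first that \Cref{eq:generic-algorithm-update-rule.2} together with $\pi_1=\piRef$ implies, by induction, that $\piT(y\mid x)>0$ only when $\piRef(y\mid x)>0$ (the factor $\piRef(y\mid x)^{\alpha\etaT}$ kills everything outside $\mathrm{supp}(\piRef)$); this is the same observation used in the proof of \Cref{lemma:naive-bound-Q}. Hence on that support $\piRef(y\mid x)\ge\muMin$ while $\piT(y\mid x)\le 1$, so $\ell_t(x,y)=\log\piT(y\mid x)-\log\piRef(y\mid x)\le -\log\muMin=\log\frac{1}{\muMin}$, with no induction needed.

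For the lower bound, taking logarithms in \Cref{eq:generic-algorithm-update-rule.2} yields the one-step recursion
\[
  \ell_{t+1}(x,y) = (1-\alpha\etaT)\,\ell_t(x,y) + \etaT\,\QReg^{\piT,t}(x,y) - \log Z_t(x),
\]
where $Z_t(x)$ is the partition function of the update at $x$. I would first bound $\log Z_t(x)\le\etaT$: by \Cref{lemma:naive-bound-Q} we have $\QReg^{\piT,t}(x,y)\le 1$, and H\"older's inequality with exponents $\tfrac{1}{\alpha\etaT}$ and $\tfrac{1}{1-\alpha\etaT}$ gives $\sum_y \piRef(y\mid x)^{\alpha\etaT}\piT(y\mid x)^{1-\alpha\etaT}\le 1$, so $Z_t(x)\le e^{\etaT}$. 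Combining this with the lower bound $\QReg^{\piT,t}(x,y)\ge\alpha H\log\muMin$ from \Cref{lemma:naive-bound-Q} and the fact that $1-\alpha\etaT\ge 0$ (since $\etaT=\tfrac{2}{\alpha(t+2)}\le\tfrac1\alpha$), the recursion gives
\[
  \ell_{t+1}(x,y) \ge (1-\alpha\etaT)\,m_t + \etaT\alpha H\log\muMin - \etaT,
  \qquad m_t \triangleq \min_{x',y'}\ell_t(x',y').
\]
I would then run the induction with hypothesis $\ell_t(\cdot,\cdot)\ge H\log\muMin-\tfrac1\alpha$: the base case $t=1$ holds because $\ell_1\equiv 0$ and $H\log\muMin-\tfrac1\alpha\le 0$, and for the inductive step, substituting $m_t\ge H\log\muMin-\tfrac1\alpha$ into the last display, the contributions $-\alpha\etaT H\log\muMin$ (from expanding $(1-\alpha\etaT)H\log\muMin$) and $+\etaT\alpha H\log\muMin$ cancel, as do $+\etaT$ (from $-\alpha\etaT\cdot(-\tfrac1\alpha)$) and $-\etaT$, leaving exactly $\ell_{t+1}(x,y)\ge H\log\muMin-\tfrac1\alpha$.

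I do not expect a serious obstacle here; the two points needing care are the support containment $\mathrm{supp}(\piT)\subseteq\mathrm{supp}(\piRef)$, which legitimizes replacing $\piRef(y\mid x)$ by $\muMin$, and the estimate $\log Z_t(x)\le\etaT$ via H\"older, which is precisely what makes the telescoping in the induction cancel exactly rather than merely up to lower-order terms.
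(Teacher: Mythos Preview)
Your proposal is correct and follows essentially the same route as the paper: the upper bound is the same one-line argument, and for the lower bound the paper derives the identical recursion $\ell_{t+1}\ge(1-\alpha\etaT)\ell_t+\etaT(\alpha H\log\muMin-1)$ after bounding the log-partition by $\etaT$ (using Jensen where you use H\"older, which amounts to the same inequality on $\sum_y\piRef^{\alpha\etaT}\piT^{1-\alpha\etaT}$), and then says ``plug in $\etaT$ and unroll $t$ to $0$'' where you instead close the recursion by a clean induction on the invariant $\ell_t\ge H\log\muMin-\tfrac1\alpha$.
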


\begin{proof}
    For the upper bound, notice that
    \begin{align*}
        \log \frac{\piT(y \mid x)}{\piRef(y \mid x)}
        =
        \log \piT(y \mid x) - \log \piRef(y \mid x)
        \le
        \log \frac{1}{\piRef(y \mid x)}
        \le
        \log \frac{1}{\muMin}
        .
    \end{align*}
    For the lower bound, we repeat a similar analysis to that in \citet[Lemma 25]{shani2020adaptive}. We start by bounding the partition function as follows
    \begin{align*}
        \log \sum_{y' \in \Y} \piT(y' \mid x) e^{\etaT \brk*{ \QReg^{\piT, t}(x, y') - \alpha \log \frac{\piT(y' \mid x)}{\piRef(y' \mid x)}}}
        &
        \le
        \log \sum_{y' \in \Y} \piT(y' \mid x) e^{\etaT \brk*{ 1 - \alpha \log \frac{\piT(y' \mid x)}{\piRef(y' \mid x)}}}
        \\
        &
        =
        \etaT + \log \sum_{y' \in \Y} \piT(y' \mid x) \brk*{\frac{\piRef(y' \mid x)}{\piT(y' \mid x)}}^{\etaT \alpha}
        \\
        &
        \le
        \etaT + \log \brk*{\sum_{y' \in \Y} \piT(y' \mid x) \frac{\piRef(y' \mid x)}{\piT(y' \mid x)}}^{\etaT \alpha}
        =
        \etaT
        ,
    \end{align*}
    where the first inequality follows since $\QReg^{\piT, t}(x, y) \le 1$, and the second uses Jensen inequality (with the fact that $\eta_t\alpha\leq 1$).
    Now, by the update rule of $\piT$,
    \begin{align*}
        \log \frac{\piT[t+1](y \mid x)}{\piRef(y \mid x)}
        &
        =
        \log \frac{\piT(y \mid x)}{\piRef(y \mid x)} + \etaT \QReg^{\piT, t}(x, y) - \etaT \alpha \log \frac{\piT(y \mid x)}{\piRef(y \mid x)}
        \\
        &
        \qquad
        -
        \log \sum_{y' \in \Y} \piT(y' \mid x) e^{\etaT \brk*{ \QReg^{\piT, t}(x, y') - \alpha \log \frac{\piT(y' \mid x)}{\piRef(y' \mid x)}}}
        \\
        &
        \ge
        (1 - \etaT \alpha) \log \frac{\piT(y \mid x)}{\piRef(y \mid x)} + \etaT \QReg^{\piT, t}(x, y) - \etaT
        \\
        \tag{\Cref{lemma:naive-bound-Q}}
        &
        \ge
        (1 - \etaT \alpha) \log \frac{\piT(y \mid x)}{\piRef(y \mid x)} + \etaT (\alpha H \log \muMin - 1)
        .
    \end{align*}
    To finish the proof plug in $\etaT$ and unroll $t$ to $0$.
\end{proof}

\end{document}